\documentclass[11pt]{article}

\usepackage[final]{acl}

\usepackage{times}
\usepackage{latexsym}
\usepackage{tabularx}
\usepackage{array}

\usepackage[T1]{fontenc}

\usepackage[utf8]{inputenc}

\usepackage{microtype}

\usepackage{inconsolata}

\usepackage{graphicx}

\usepackage{multirow}
\usepackage{graphicx}
\usepackage{algorithm}
\usepackage{algpseudocode}
\usepackage{booktabs}
\usepackage{tcolorbox}
\usepackage{amssymb}
\usepackage{amsmath}
\usepackage{array}
\usepackage{longtable}
\usepackage{xcolor}
\usepackage{makecell}
\usepackage{adjustbox}
\usepackage{circledsteps}
\usepackage{threeparttable}
\usepackage{amsthm}
\usepackage[table]{xcolor}
\usepackage{float}
\usepackage{listings}
\usepackage{tikz}
\usetikzlibrary{arrows.meta,positioning,fit,shapes.geometric}
\title{Tool-Aware Planning in Contact Center AI: Evaluating LLMs through Lineage-Guided Query Decomposition}

\author{Varun Nathan, Shreyas Guha\thanks{\hspace{1pt} Work done during internship at Observe.AI}, \and Ayush Kumar \\
        \texttt{\{varun.nathan,  shreyas.guha, ayush\}@observe.ai}\\
        Observe.AI \\ Bangalore, India}

\begin{document}
\maketitle

\begin{abstract}
We present a domain-grounded framework and benchmark for \emph{tool-aware plan generation} in contact centers, where answering a query for business insights, our target use case, requires decomposing it into executable steps over structured tools (Text2SQL (T2S)/Snowflake) and unstructured tools (RAG/transcripts) with explicit \texttt{depends\_on} for parallelism. Our contributions are threefold: (i) a reference-based plan evaluation framework operating in two modes - a metric-wise evaluator spanning seven dimensions (e.g., tool–prompt alignment, query adherence) and a one-shot evaluator; (ii) a data curation methodology that iteratively refines plans via an evaluator$\rightarrow$optimizer loop to produce high-quality plan lineages (\emph{ordered plan revisions}) while reducing manual effort; and (iii) a large-scale study of 14 LLMs across sizes and families for their ability to decompose queries into step-by-step, executable, and tool-assigned plans, evaluated under prompts with and without lineage.
Empirically, LLMs struggle on compound queries and on plans exceeding 4 steps (typically 5–15); the best total metric score reaches \textbf{84.8\%} (Claude-3-7-Sonnet), while the strongest one-shot match rate at the ``A+'' tier (Extremely Good, Very Good) is only \textbf{49.75\%} (o3-mini). Plan lineage yields mixed gains overall but benefits several top models and improves step executability for many. Our results highlight persistent gaps in tool-understanding, especially in tool–prompt alignment and tool-usage completeness, and show that shorter, simpler plans are markedly easier. The framework and findings provide a reproducible path for assessing and improving agentic planning with tools for answering data-analysis queries in contact-center settings.
\end{abstract}

\section{Introduction and Related Works}

\noindent\textbf{Use case and motivation.}
We target contact-center Question Answering (Insights) and data analytics, where plans must orchestrate \emph{structured} analysis via Text2SQL over Snowflake (T2S) and extract \emph{unstructured} insights via RAG over transcripts under tight latency and correctness constraints. LLMs are increasingly used as \emph{agents} that decompose goals into multi-step plans and invoke external tools, and a growing body of benchmarks probes tool-augmented reasoning and agent behavior across APIs, web environments, and multi-modal settings \citep{apibank,agentbench,callnavi,mind2web,mmsbenchmark,hastemakeswaste,planbench,wang2024mintevaluatingllmsmultiturn,insight_bench} as well as neuro-symbolic and optimization hybrids \citep{lip_llm}. However, these works do not focus on contact-center planning with overlapping tools and parallel execution.

\textbf{Why contact-centers need a different planning lens.} Queries in contact-centers (e.g., “How did escalation rates and QA outcomes differ by time zone?”) require \emph{tool-aware plans} that orchestrate structured data (e.g., Text2SQL over Snowflake) and unstructured data (RAG over Transcripts), with \emph{correct argument binding} (filters vs. prompts) and \emph{explicit dependencies} for \emph{parallel} execution under latency constraints. Errors in tool choice, placeholders, or dependency wiring can silently degrade results - making plan \emph{correctness}, \emph{executability}, and \emph{format fidelity} first-class requirements in production systems.

\textbf{Limitations of existing benchmarks.}
Prior agent/planning benchmarks typically assume (i) \emph{one correct tool per sub-task}, (ii) \emph{tight I/O coupling} (step outputs directly consumable by the next), and (iii) primarily \emph{sequential} execution, rarely modeling domains with \emph{tool overlap}, \emph{loose I/O coupling}, and required concurrency. Survey works \citep{llm_planning_survey,llm_planning_modelers_survey,llm_auto_planning_scheduling} argue that LLM planners need \emph{structure}, \emph{feedback}, and often \emph{hybridization} to be reliable on long-horizon plans. Recent work on structured planning interfaces \citep{nl2flow,catp_llm,prompt2dag} and on hybrid querying or data-agent benchmarks \citep{liu-suql,wang2025fdabench} improves executability or final-answer quality over heterogeneous data, but does not define multi-step, tool-aware plan representations or rubrics for planning quality in contact-center settings with overlapping tools and lineage-guided revision. Complementary evaluations \citep{llm_reasoning_models_replace_classical_planning} show that LLMs struggle with \emph{constraint compliance} and \emph{state consistency} in complex tasks, reinforcing the need for explicit plan evaluation.

\textbf{Gap.}
To our knowledge, there is \emph{no benchmark or evaluation protocol} that (1) targets \emph{contact-center} queries, (2) requires \emph{parallel tool usage} when multiple tools (e.g., RAG and Text2SQL) contain relevant information that must be combined, (3) enforces \emph{argument/placeholder correctness} and \emph{dependency wiring} for \emph{parallel} execution, and (4) captures \emph{plan evolution} through \emph{lineage} (intermediate, interpretable revisions) driven by an iteratively run \emph{step-wise evaluator} and \emph{plan optimizer}. Existing datasets and metrics do not jointly assess \emph{tool–prompt alignment}, \emph{step executability}, \emph{format/placeholder correctness}, \emph{dependency correctness}, \emph{redundancy}, and \emph{tool-usage completeness}, nor relate these to a one-shot \emph{plan-to-reference} comparison of structural closeness (precision/recall/F1).

\textbf{Our approach and contributions.} We study tool-aware plan generation for \emph{contact-centers} with a fixed triad of tools:
\textbf{T2S} (Text-to-SQL over Snowflake), \textbf{RAG} (transcripts), and \textbf{LLM} (synthesis/reformatting).
Our contributions are:

\begin{enumerate}
    \item \textbf{Dual-perspective evaluation framework:} A \emph{7-metric} rubric with learned aggregation (0–100) and a complementary \emph{one-shot} plan-to-reference evaluator (precision/recall/F1 + format) with a 7-point quality rating.
    \item \textbf{Curation for lineage-driven planning:} An iterative \emph{step-wise evaluator} $\rightarrow$ \emph{plan optimizer} loop that produces improving plan lineages and reduces human editing effort; while the dataset is proprietary, we release the \emph{schema, prompts, and methodology}.
    \item \textbf{Model study:} A comparison of \emph{14 LLMs} (e.g., o3-mini \cite{openai-o3-mini-2025}, GPT-4o/mini \cite{openai2024gpt4ocard}, Claude, Llama, and Nova families \cite{claude-3_5,llama3_meta_research,llama4maverick,nova_models}) across \emph{subjectivity}, \emph{compoundness}, and \emph{plan length}, including the effect of \emph{plan-lineage prompting}.
\end{enumerate}


\section{Task Formalization}
\label{sec:task-formalization}

\paragraph{Problem setup.}
Given a fixed tool set $T$ and a natural-language query $Q$, a planner must return an \emph{executable, tool-aware plan} $P$.

\paragraph{Plan schema.}
We represent a plan as an ordered sequence of steps:
\[
P \;=\; \langle s_k \rangle_{k=1}^{n}, \quad 
s_k \;=\; (\, t_k,\, p_k,\, D_k \,)
\]
where $n$ is the number of steps, $t_k \in T$ is the tool chosen for step $k$, $p_k$ is the tool instruction (prompt) for step $k$, and $D_k \subseteq \{1,\dots,k-1\}$ is the set of dependency indices indicating which prior steps $s_i$ must finish before $s_k$ can execute. By construction, each $D_k$ may only contain indices of earlier steps, so all edges point from smaller to larger $k$, which rules out cycles. The dependency graph induced by $\{D_k\}_{k=1}^n$ is required to be acyclic (a DAG), enabling safe parallel execution of independent steps. Before scoring, we also run a simple plan validator that checks these index constraints and performs a topological pass to reject any plan whose induced graph is not acyclic. In practice, plans are materialized as JSON objects whose keys are step indices and whose values store \texttt{query} and \texttt{depends\_on} fields.

\paragraph{Plan lineage.}
For a query $Q_i$, our curated data stores a \emph{plan lineage}, i.e., an ordered list
\[
\mathcal{L}(Q_i) \;=\; \langle P^{(0)}_i,\, P^{(1)}_i,\,\dots,\, P^{(M_i)}_i \rangle,
\]
where $P^{(0)}_i$ is the initial (typically weakest) plan and $P^{(M_i)}_i$ is the best (reference) plan. Each subsequent plan is produced from the previous by an evaluator$\rightarrow$optimizer loop that corrects tools, prompts, and dependencies (see \S\ref{sec:dataset}).

\paragraph{Tool set.}
We use three internally built tools (full details in Appendix~\ref{app:tools}):
\begin{itemize}
    \item \textbf{T2S (Text-to-SQL over Snowflake):} answers queries using structured contact-center data (e.g., call drivers, key moments during interactions, Qualtiy-Assurance metrics).
    \item \textbf{RAG (Retrieval-Augmented Generation):} answers queries using customer--agent transcripts.
    \item \textbf{LLM (Synthesis/Reformatting):} composes, reformats, and aggregates outputs (e.g., extracting \texttt{call\_ids} from RAG tables; merging multi-tool evidence).
\end{itemize}

\paragraph{Execution constraints.}
Given dependencies $D_k$, step $s_k$ may only reference prior outputs via placeholders (e.g., ``(3)'') and must not redundantly re-filter on criteria already encoded by its dependency inputs.

\begin{lstlisting}[caption={Example query-plan pair}, label={lst:example-plan}]
# Query: Compare QA scores for professionalism and resolution procedures in unresolved calls where sentiment transitioned from negative to positive.
# Plan:
{
  "1": {"query": "T2S([], 'Fetch interaction_ids of unresolved calls')", "depends_on": []},
  "2": {"query": "RAG((1), 'Fetch calls where the sentiment transitioned from negative to positive within the transcript')", "depends_on": [1]},
  "3": {"query": "LLM('Extract interaction_ids from Data Insights in (2).')", "depends_on": [2]},
  "4": {"query": "T2S((3), 'Retrieve QA scores for resolution procedures in these calls.')", "depends_on": [3]},
  "5": {"query": "T2S((3), 'Retrieve QA scores for professionalism in these calls.')", "depends_on": [3]},
  "6": {"query": "LLM('Compare QA scores from (4) vs. (5) in light of unresolved status and sentiment transitions.')", "depends_on": [4,5]}
}
\end{lstlisting}

\paragraph{Example lineage (excerpt).}
Due to space limits we show and describe the final plan; full lineage appears in Appendix~\ref{app:lineage-example}.\\
\textbf{Final Plan (best):} the 6-step plan (shown above) separates filtering for unresolved calls (T2S) from within-call sentiment shift (RAG), extracts IDs (LLM), and aggregates QA metrics (T2S) before synthesis.

\section{Dataset Generation Methodology}
\label{sec:dataset}

We build a dataset of queries and plan lineages via a four-stage pipeline:
\Circled{1} controlled \emph{query generation};
\Circled{2} one-shot \emph{initial plan generation};
\Circled{3} an \emph{iterative evaluator$\rightarrow$optimizer} loop that produces a \emph{plan lineage} per query; and
\Circled{4} \emph{human verification} of the final plans.

\paragraph{\Circled{1} Query generation.}
We use \textbf{GPT-4o} to synthesize queries along two axes: (i) \emph{subjectivity} (objective vs.\ subjective) and (ii) \emph{compoundness} (simple vs.\ compound), covering measurable vs.\ interpretive asks and single vs.\ multi-ask structures. Appendix~\ref{app:query-gen} and Tables~\ref{tab:prompts_for_query_and_plan_gen},~\ref{tab:examples_by_query_types} give dimension definitions, prompts, and examples. Model details appear in Appx.~\ref{app:model-configs}.

\paragraph{\Circled{2} Initial plan generation.}
Plans are produced one-shot by an LLM using a two-part prompt (system: task + tool schema; user: formatting + examples). We compare low/medium/high tool-detail and few-shot counts; a \emph{medium-detail} prompt with $6$--$8$ examples gives the best accuracy without overload (App.~\ref{app:initial-plan}, Table~\ref{tab:prompts_for_query_and_plan_gen}).

\paragraph{\Circled{3} Iterative Evaluator$\rightarrow$Optimizer Feedback Loop.}
\label{sec:loop-brief}
We refine one-shot initial plans via a lightweight, non-executing feedback loop (Fig.~\ref{fig:iterative-loop}). Each \emph{pass} freezes the current plan \(P\) and visits each step once. A \emph{Step-wise Evaluator} inspects the step’s tool, prompt, and dependencies and emits diagnostic tags (e.g., \textsc{IncorrectTool}, \textsc{ComplexPrompt}, \textsc{RepeatedDetail}, \textsc{NoChange}). A \emph{Plan Optimizer} consumes these tags and decides whether to edit the step, applying local repairs and global coherence fixes while maintaining a valid DAG (placeholders, dependency closure), possibly splitting or merging steps.

Let \(P'\) be the optimizer’s output after processing the current step. We append \(P'\) to the \emph{plan lineage} only if \(P' \neq P\), then set \(P \leftarrow P'\); otherwise we keep \(P\) and move on. A pass ends once every step is visited. The loop stops when (i) a full pass makes no change or (ii) the pass count exceeds \(\texttt{max\_passes}=4\). Because we record only distinct plans, the lineage forms an ordered sequence from weakest to best. The loop never executes tools; it edits plan \emph{text} only, enabling scalable curation without runtime calls. Full pseudocode and prompts appear in App.~\ref{app:iterative-loop}, Alg.~\ref{alg:plan-optimizer}, and Tables~\ref{tab:prompts_for_step_wise_eval}--\ref{tab:prompts_for_plan_optimizer}.

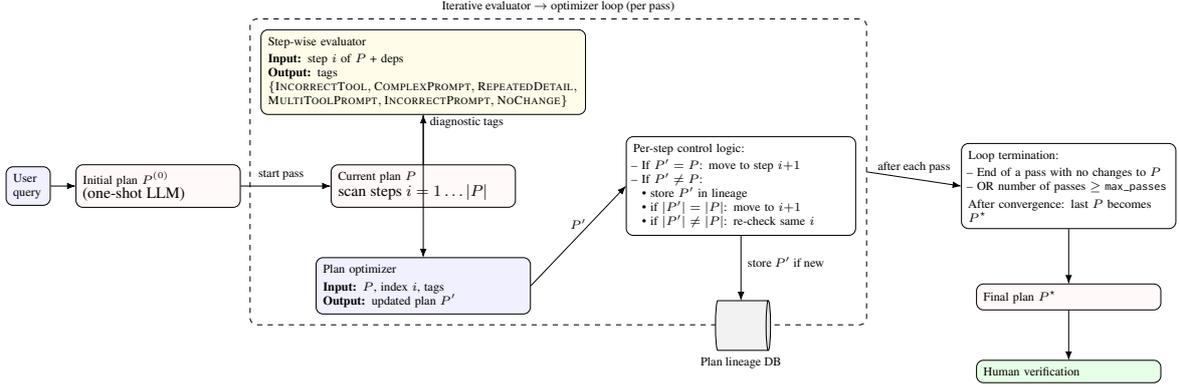
\begin{figure*}[t]
  \centering
  \resizebox{2.0\columnwidth}{!}{%
  \begin{tikzpicture}[
      >=Latex,
      node distance=1.5cm and 2.1cm,
      every node/.style={font=\scriptsize},
      box/.style={rectangle, rounded corners, draw, align=left, inner sep=4pt, text width=3.4cm},
      plan/.style={box, fill=pink!10},
      proc/.style={box, fill=blue!6},
      judge/.style={box, fill=yellow!10},
      human/.style={box, fill=green!10},
      loopbox/.style={draw,dashed,rounded corners,inner sep=6pt},
      db/.style={cylinder, draw, shape aspect=0.35,
                 cylinder uses custom fill, cylinder body fill=gray!10,
                 cylinder end fill=gray!20, minimum height=1.1cm, minimum width=1cm}
    ]

    \node[proc, text width=0.6cm] (query) {User query};
    \node[plan, right=0.5cm of query, text width=3.0cm] (p0) {Initial plan $P^{(0)}$\\\footnotesize(one-shot LLM)};
    \draw[->] (query) -- (p0);

    \node[plan, right=1.8cm of p0] (curr) {Current plan $P$\\\footnotesize{scan steps $i=1\ldots |P|$}};

    \node[judge, above=1.0cm of curr, text width=6.2cm] (eval) {Step-wise evaluator\\[2pt]
      \textbf{Input:} step $i$ of $P$ + deps\\
      \textbf{Output:} tags $\{\textsc{IncorrectTool}, \textsc{ComplexPrompt}, \textsc{RepeatedDetail}, $\\ $\textsc{MultiToolPrompt}, \textsc{IncorrectPrompt}, \textsc{NoChange}\}$
    };

    \node[proc, below=1.0cm of curr, text width=4.0cm] (opt) {Plan optimizer\\[2pt]
      \textbf{Input:} $P$, index $i$, tags\\
      \textbf{Output:} updated plan $P'$
    };

    \node[box, right=2.2cm of curr, text width=4.3cm] (ctrl) {Per-step control logic:\\[2pt]
      -- If $P'=P$: move to step $i{+}1$\\
      -- If $P'\neq P$:\newline\hspace*{0.7em}• store $P'$ in lineage\\
      \hspace*{0.7em}• if $|P'|=|P|$: move to $i{+}1$\newline
      \hspace*{0.7em}• if $|P'|\neq|P|$: re-check same $i$
    };

    \draw[->] (p0) -- node[above,pos=0.45]{start pass} (curr);
    \draw[->] (curr.north) -- (eval.south);
    \draw[->] (eval.south) -- node[right,pos=0.05]{diagnostic tags} (opt.north);
    \draw[->] (opt.east)  -- node[above,pos=0.5]{$P'$} (ctrl.west);

    \node[loopbox, fit=(curr) (eval) (opt) (ctrl),
          label={[font=\scriptsize]above:Iterative evaluator $\rightarrow$ optimizer loop (per pass)}] (loop) {};

    \node[db, below=1.3cm of ctrl] (dbnode) {};
    \node[below=0.0cm of dbnode] {\scriptsize{Plan lineage DB}};
    \draw[->] (ctrl.south) -- node[right,pos=0.4]{store $P'$ if new} (dbnode.north);

    \node[box, right=2.1cm of ctrl, text width=4.0cm] (term) {Loop termination:\\[2pt]
      -- End of a pass with no changes to $P$\\
      -- OR number of passes $\geq$ \texttt{max\_passes}\\[2pt]
      After convergence: last $P$ becomes $P^\star$
    };

    \node[plan, below=1.1cm of term] (final) {Final plan $P^\star$};
    \draw[->] (term.south) -- (final.north);

    \node[human, below=1.0cm of final] (human) {Human verification};
    \draw[->] (final.south) -- (human.north);

    \draw[->] (loop.east) -- node[above,pos=0.5]{after each pass} (term.west);

  \end{tikzpicture}%
  }
  \caption{Iterative step-wise evaluator $\rightarrow$ plan optimizer loop producing a plan lineage. For each pass, every step $i$ of the current plan $P$ is diagnosed by the step-wise evaluator and optionally edited by the plan optimizer. Any updated plan $P'$ is appended to the lineage database. The loop stops when a full pass yields no changes or when the maximum number of passes is reached, and the final plan $P^\star$ is human-verified.}
  \label{fig:iterative-loop}
\end{figure*}

\paragraph{\Circled{4} Human verification.}
Final lineage heads are reviewed by expert annotators. Minor fixes (if any) are applied to certify the \emph{best} plan. The lineage (all intermediate plans) is retained for analysis and training.




\paragraph{Evaluation metadata.}
After obtaining the human-verified best plan, we derive lightweight \emph{metadata} from the lineage and final plan (without executing tools) for stratified analyses: (i) lineage length (distinct plans), (ii) number of passes to convergence, (iii) per-pass revision count, (iv) step count, and (v) \emph{number of hops} (Table~\ref{tab:loop-stats}).

\paragraph{Number of Hops}
We compute a query’s \emph{number of hops} from the best plan’s dependency graph as the length of the longest dependency chain (App.~\ref{app:hops-definition}): 
\emph{zero-hop} plans have no dependencies (direct tool calls yield the answer); 
\emph{one-hop} plans have exactly one dependency layer (e.g., an LLM step depending on two independent producers); 
\emph{two-hop} plans have two sequential dependency layers (e.g., a final LLM step depends on RAG/T2S steps that themselves depend on a T2S filter); 
\emph{three-plus} plans have three or more sequential dependency layers.
We use this hop count as metadata for stratifying LLM performance.


\noindent\emph{Extended details.} See Tables~\ref{tab:example_lineage_evaluation_1}--\ref{tab:example_lineage_evaluation_3} for an end-to-end lineage from \textbf{Query} $\rightarrow$ \textbf{Step-Wise Evaluator / Plan Optimizer} outputs $\rightarrow$ \textbf{Final Plan}.

\section{Plan Evaluation Methodology}
\label{sec:evaluation}

We evaluate plans in two complementary modes: a \emph{reference-based metric-wise} framework that aggregates seven targeted metrics into a 0–100 score, and a \emph{reference-based one-shot} evaluator that measures closeness to a best-possible plan via step-level Precision/Recall/F$_1$ and a seven-point rating.

\paragraph{Overall score (metric-wise framework).}
Let $\mathcal{E}$ (Effectiveness) and $\mathcal{F}$ (Efficiency) be metric sets with scores $m_k \in [0, 1]$ and weights $w_k$ such that $\sum_{k \in \mathcal{E}} w_k = 0.7$ and $\sum_{k \in \mathcal{F}} w_k = 0.3$. The overall score is
\[
\textsc{Score}(P) \;=\; 100 \cdot \sum_{k \in \mathcal{E} \cup \mathcal{F}} w_k m_k.
\]

\subsection{Metric-Wise Evaluation}
\label{sec:metricwise}
\textbf{Effectiveness (70 pts)} assesses correctness and executability via:
(i) \textbf{Tool–Prompt Alignment} (20): tool choice matches prompt intent;
(ii) \textbf{Format Correctness} (20): JSON-parseable, structurally valid output;
(iii) \textbf{Step Executability / Atomicity} (15): each step is a single, atomic operation;
(iv) \textbf{Query Adherence} (15): a perfect execution would fully answer the query.

\medskip
\noindent\textbf{Efficiency (30 pts)} assesses optimality via:
(v) \textbf{Dependencies} (10): correct, minimal \texttt{depends\_on} wiring;
(vi) \textbf{Redundancy} (10): no duplicated work or unnecessary repeated filters;
(vii) \textbf{Tool-Usage Completeness} (10): when a sub-task clearly benefits from both \textsc{T2S} and \textsc{RAG}, the plan includes both (in sequence or parallel).

\paragraph{Evaluator pipeline and metric weighting.}
All seven metrics are reference-based and scored by specialist LLM evaluators using rubric prompts (Tables~\ref{tab:prompts_for_metric_wise_eval_1}--\ref{tab:prompts_for_metric_wise_eval_3}; App.~\ref{app:metric-details}). Per-metric weights are learned on the validation set to enforce monotonic improvement across the top three plans in each lineage (best $>$ penultimate $>$ antepenultimate) under fixed group budgets (Effectiveness $=0.7$, Efficiency $=0.3$; App.~\ref{app:weight-learning}).

\subsection{One-Shot Overall Evaluation}
\label{sec:oneshot}
Given candidate plan $P$ and best plan $P^\star$, the Judge LLM computes step-level Precision/Recall/F$_1$ and checks Format, Dependencies, and Placeholders on $P$. We map $F_1$ to a seven-point rating: \emph{Extremely Good} ($>$95\%), \emph{Very Good} ($>$85\%), \emph{Good} ($>$75\%), \emph{Acceptable} ($>$60\%), \emph{Bad} ($>$45\%), \emph{Very Bad} ($>$30\%), \emph{Extremely Bad} ($\le 30\%$). Non-JSON-parseable plans are assigned \emph{Extremely Bad} regardless of $F_1$. Full prompt and schema appear in Table~\ref{tab:prompts_for_oneshot_evaluator} and App.~\ref{app:oneshot-details}.

\section{Experimental Setup}
\label{sec:exp-setup}

\paragraph{Data splits and usage.}
We curate 600 contact-center queries and their plan lineages using the iterative Evaluator$\rightarrow$Optimizer loop (Sec.~\ref{sec:dataset} and Appx.~\ref{app:dataset-details}). We stratify by subjectivity and compoundness. Splits: \emph{Train} (20) for prompt construction, \emph{Validation} (80) for module tuning (metric-wise evaluators, one-shot Judge, step-wise evaluator, plan optimizer), and \emph{Test} (500) for model benchmarking.
Due to proprietary constraints, the full dataset used in our experiments (queries from real Observe.AI customers) cannot be released.\footnote{We provide full prompts and scoring rubrics to facilitate replication on non-proprietary corpora; see Tables~\ref{tab:prompts_for_query_and_plan_gen}--\ref{tab:prompts_for_oneshot_evaluator} and Appx.~\ref{app:infra}. In addition, we release a separate 200-query public dataset built from LLM-generated queries modeled on a synthetic test account, with all queries, plans, and lineages anonymized (account names/IDs, policies, PII, and person names). It contains 100 Train and 100 Test queries with human-annotated best plans; Train lineages are fully human-edited, Test lineages are produced by our evaluator$\rightarrow$optimizer loop. We also include per-planner generated plans for all the 14 planners; see Appx.~\ref{app:public-dataset}.}
A detailed, step-by-step description of how we sampled, annotated, validated, and finalized lineages and gold plans is provided in Appx.~\ref{app:dataset-curation-steps}.

\paragraph{Plan generation models.}
We benchmark 14 LLMs for one-shot plan generation under two prompts: \emph{without lineage} and \emph{with lineage} (the latter includes per-query plan lineage examples sourced via our feedback loop). Models: \textit{Claude-3-7-Sonnet/Claude-3-5-Haiku \cite{claude-3_5}, Claude-Sonnet-4 \cite{claude-4}, Nova-Pro/Nova-Lite/Nova-Micro \cite{nova_models}, Llama3-2-1B-Instruct/Llama3-2-3B-Instruct/Llama3-70B-Instruct \cite{llama3_meta_research}, Llama4-Maverick-17B-Instruct \cite{llama4maverick}, GPT-4o/GPT-4o-Mini \cite{openai2024gpt4ocard}, GPT-4.1-Nano \cite{gpt4_1}, o3-Mini (medium reasoning) \cite{openai-o3-mini-2025}}. Model configurations and prompt templates are in ~\ref{app:model-configs} and Tables~\ref{tab:decoding} \& \ref{tab:prompts_for_query_and_plan_gen}.

\paragraph{Evaluation models.}
Both evaluators are LLM-based. We use \textit{Claude-Sonnet-4} as the Judge LLM for (i) the one-shot overall evaluation (Precision/Recall/F$_1$ + 7-point rating) and (ii) all seven metric-wise evaluators (Sec.~\ref{sec:evaluation}). The same rubric prompts are used across splits (~\ref{app:metric-details}, \ref{app:oneshot-details}).

\paragraph{Feedback-loop modules.}
For the iterative plan lineage construction, we use \textit{GPT-4o} for both the Step-wise Evaluator and the Plan Optimizer (~\ref{app:loop-details}). We cap passes via a single hyperparameter \texttt{max\_passes} for which we adopt \textbf{4} (empirically balances gains vs.\ latency) in our runs.

\paragraph{Implementation.}
All generation, evaluation, and feedback loops are implemented in Python on local infrastructure. Model access uses Bedrock and LiteLLM; we fix random seeds and use deterministic decoding for evaluators. Infra, seeds, rate limiting, and caching are detailed in ~\ref{app:infra}.

\paragraph{Human agreement.}
All human annotations used for tuning and validating evaluators exhibit substantial agreement; detailed protocol and $\kappa$ statistics (with CIs) are reported in ~\ref{app:human-agreement}.

\begin{table*}[!ht]
    \centering
    \small

    \setlength{\tabcolsep}{6pt}

    \begin{tabularx}{\textwidth}{l|ccc|ccc}
        \toprule
        \multirow{2}{*}{\textbf{LLM}} 
        & \multicolumn{3}{c|}{\textbf{With Lineage}} 
        & \multicolumn{3}{c}{\textbf{Without Lineage}} \\
        \cmidrule(lr){2-4} \cmidrule(lr){5-7}
            & \makecell{(\emph{A+})\\ Extr. good,\\ very good \\(\%)} 
            & \makecell{(\emph{A})\\ Extr. good,\\ very good,\\ good \\ (\%)} 
            & \makecell{(\emph{B})\\ Extr. good,\\ very good,\\ good,\\ acceptable \\ (\%)} 
            & \makecell{(\emph{A+})\\Extr. good,\\ very good \\ (\%)} 
            & \makecell{(\emph{A})\\Extr. good,\\ very good,\\ good \\ (\%)} 
            & \makecell{(\emph{B})\\Extr. good,\\ very good,\\ good,\\ acceptable \\ (\%)} \\
        \midrule
        o3-mini  & 43.15 & 53.30 & 72.59 & \cellcolor{green}\textcolor{magenta}{\textbf{49.75}} & 65.99 & 80.20 \\
        gpt-4o  & \textcolor{blue}{\textbf{45.69}} & 62.44 & 81.22 & 41.12 & 57.87 & 82.74 \\
        gpt-4o-mini  & 31.98 & 48.73 & 64.47 & 31.98 & 52.28 & 71.57 \\
        claude-3-5-haiku  & 24.87 & 45.69 & 62.44 & \textcolor{magenta}{\textbf{30.46}} & 47.72 & 72.59 \\
        claude-sonnet-4  & \textcolor{blue}{\textbf{30.46}} & 52.79 & 73.60 & 29.95 & 49.75 & 74.11 \\
        llama4-maverick-17b-instruct  & 20.30 & 36.55 & 54.31 & \textcolor{magenta}{\textbf{20.81}} & 40.61 & 62.44 \\
        nova-pro  & 18.27 & 42.64 & 61.93 & \textcolor{magenta}{\textbf{19.80}} & 41.62 & 59.39 \\
        claude-3-7-sonnet & \textcolor{blue}{\textbf{30.46}} & 48.22 & 70.56 & 19.80 & 39.09 & 69.04 \\
        llama3-70b-instruct  & \textcolor{blue}{\textbf{18.78}} & 44.16 & 59.90 & 17.26 & 35.53 & 55.33 \\
        gpt-4.1-nano  & \textcolor{blue}{\textbf{17.26}} & 28.43 & 55.33 & 16.24 & 30.46 & 53.30 \\
        nova-micro & \textcolor{blue}{\textbf{20.81}} & 35.53 & 53.30 & 15.74 & 37.06 & 53.81 \\
        nova-lite  & 13.71 & 30.96 & 47.24 & 13.71 & 34.52 & 47.72 \\
        llama3-2-3b-instruct  & 5.08 & 10.66 & 17.26 & \textcolor{magenta}{\textbf{5.58}} & 11.68 & 26.90 \\
        llama3-2-1b-instruct  &  0.00 &  0.00 &  0.00 &  0.00 &  0.00 &  1.52 \\
        \midrule
        \textbf{Grand Total (\#)} & \multicolumn{3}{c|}{\textbf{500}} & \multicolumn{3}{c}{\textbf{500}} \\
        \bottomrule
    \end{tabularx}
    \caption{Plan generation quality comparison using prompts \textbf{with and without lineage}, evaluated with the \textbf{one-shot evaluator} on test data. The highest score in the \textbf{Extremely Good, Very Good} bucket is highlighted in green; blue indicates better performance with lineage, and magenta indicates better performance without lineage.}
    \label{tab:plan_gen_result_overall_one_shot_eval}
\end{table*}

\newcolumntype{Y}{>{\centering\arraybackslash}X}
\begin{table*}[!ht]
    \centering
    \small
    
    
    \setlength{\tabcolsep}{6pt}

    \begin{tabularx}{\textwidth}{l|*{8}{Y}}
        \hline
        \thead{\textbf{LLM}} & \thead{\textbf{Overall} \\ {[0-100]}} & \thead{\textbf{Format} \\ {[0-20]}} & \thead{\textbf{Tool} \\ \textbf{Prompt} \\ \textbf{Align.} \\ {[0-20]}} & \thead{\textbf{Step} \\ \textbf{Exec.} \\ {[0-15]}} & \thead{\textbf{Query} \\ \textbf{Adhr.} \\ {[0-15]}} & \thead{\textbf{Depend.} \\ {[0-10]}} & \thead{\textbf{Redund.} \\ {[0-10]}} & \thead{\textbf{Tool} \\ \textbf{Usage} \\ \textbf{Compl.} \\ {[0-10]}}\\
        \hline
        claude-3-7-sonnet & \textcolor{blue}{\textbf{84.8}} & \textcolor{blue}{\textbf{18.46}} & \textcolor{blue}{\textbf{15.32}} & 12.61 & 12.6 & 9.78 & 8.97 & 7.07 \\ 
        llama4-maverick-17b-instruct & 82.26 & 17.14 & 14 & 13.09 & 12.69 & 9.35 & 9.41 & 6.59 \\ 
        gpt-4o & 81.47 & 16.42 & 13.77 & 13 & 12.85 & 9.66 & 8.58 & 7.2 \\ 
        claude-sonnet-4 & 79.9 & 12.89 & 14.68 & 13.18 & 12.35 & 9.68 & 8.59 & \textcolor{blue}{\textbf{8.54}} \\ 
        llama3-3-70b-instruct & 79.77 & 17.04 & 14.52 & 13.54 & 12.1 & 9.55 & 9.36 & 3.66 \\ 
        nova-pro & 79.24 & 15.51 & 14.57 & \textcolor{blue}{\textbf{14.32}} & 12.31 & \textcolor{blue}{\textbf{9.81}} & 9.36 & 3.35 \\ 
        nova-micro & 77.28 & 18.25 & 13.42 & 12.92 & 12.08 & 9.58 & \textcolor{blue}{\textbf{9.44}} & 1.59 \\ 
        gpt-4o-mini & 77.26 & 14.06 & 14.54 & 13.58 & 12.34 & 8.99 & 9.12 & 4.63 \\ 
        gpt-4.1-nano & 77.16 & 13.48 & 13.88 & 13.36 & 11.92 & 9.09 & 8.79 & 6.65 \\ 
        o3-mini & 71.85 & 10.9 & 14.79 & 10.91 & \textcolor{blue}{\textbf{13.58}} & 8.68 & 9.32 & 3.66 \\ 
        claude-3-5-haiku & 71.27 & 12.62 & 12.77 & 11.6 & 11.37 & 8.42 & 8.14 & 6.34 \\ 
        nova-lite & 70.53 & 13.84 & 12.92 & 12.58 & 11.93 & 8.37 & 9.42 & 1.46 \\ 
        llama3-2-3b-instruct & 70.51 & 16.62 & 10.94 & 10.65 & 9.74 & 9.23 & 9.07 & 4.27 \\ 
        llama3-2-1b-instruct & 20.27 & 0 & 0.07 & 0 & 10.5 & 4.23 & 2.24 & 3.23 \\ \hline
        \textbf{Average (Normalized)} & \textbf{73.11} & \textcolor{red}{\textbf{70.43}} & \textcolor{red}{\textbf{64.36}} & \textbf{78.73} & \textbf{80.18} & \textbf{88.88} & \textbf{85.56} & \textcolor{red}{\textbf{48.74}} \\ \hline
    \end{tabularx}
    \caption{Plan generation quality using prompts \textbf{with lineage}, evaluated with the \textbf{metric-wise evaluator} on test data (\textbf{500} queries). The \textbf{Normalized Average} in the last row shows the average per metric normalized by that metric's maximum score. Highest scores per metric are highlighted in blue, and the three metrics with the lowest normalized scores are highlighted in red.}
    \label{tab:plan_gen_result_with_lineage_metric_wise_eval}
\end{table*}

\begin{table*}[!ht]
    \centering
    \small
    
    
    \setlength{\tabcolsep}{3pt}

    \begin{tabularx}{\textwidth}{l|*{8}{Y}}
        \hline
        \thead{\textbf{LLM}} & \thead{\textbf{Overall} \\ {[0-100]}} & \thead{\textbf{Format} \\ {[0-20]}} & \thead{\textbf{Tool} \\ \textbf{Prompt} \\ \textbf{Align.} \\ {[0-20]}} & \thead{\textbf{Step} \\ \textbf{Exec.} \\ {[0-15]}} & \thead{\textbf{Query} \\ \textbf{Adhr.} \\ {[0-15]}} & \thead{\textbf{Depend.} \\ {[0-10]}} & \thead{\textbf{Redund.} \\ {[0-10]}} & \thead{\textbf{Tool} \\ \textbf{Usage} \\ \textbf{Compl.} \\ {[0-10]}}\\
        \hline
        claude-3-7-sonnet & \textcolor{blue}{\textbf{83.33}} & 17.36 & 15.48 & 11.46 & \textcolor{blue}{\textbf{13.37}} & 9.63 & 9.08 & 6.95 \\ 
        llama4-maverick-17b-instruct & 82.5 & 17.35 & 13.59 & 12.67 & 12.35 & 9.77 & 8.84 & 7.93 \\ 
        gpt-4o & 82.82 & 15.97 & 14.6 & 12.84 & 12.78 & 9.75 & 8.59 & 8.29 \\ 
        claude-sonnet-4 & 77.98 & 10.91 & \textcolor{blue}{\textbf{16.67}} & 11.78 & 12.91 & 9.7 & 8.7 & 7.32 \\ 
        llama3-3-70b-instruct & 81.11 & 15.25 & 14.59 & 13.32 & 12.34 & 9.59 & 9.08 & 6.95 \\ 
        nova-pro & 82.7 & 15.92 & 14.27 & \textcolor{blue}{\textbf{14.11}} & 12.69 & 9.78 & 8.86 & 7.07 \\ 
        nova-micro & 82.07 & \textcolor{blue}{\textbf{17.9}} & 13.25 & 13.44 & 12.13 & 9.5 & 8.65 & 7.2 \\ 
        gpt-4o-mini & 82.78 & 16.78 & 13.72 & 13.97 & 12.27 & 9.62 & 8.54 & 7.88 \\ 
        gpt-4.1-nano & 76.05 & 12.92 & 13.53 & 12.76 & 11.98 & 8.93 & 8.13 & 7.8 \\ 
        o3-mini & 74.06 & 12.28 & 14.89 & 10.56 & 13.09 & 9.06 & \textcolor{blue}{\textbf{9.42}} & 4.76 \\ 
        claude-3-5-haiku & 82.06 & 14.67 & 15.92 & 14.01 & 12.11 & \textcolor{blue}{\textbf{9.79}} & 9.14 & 6.43 \\ 
        nova-lite & 75.14 & 15.27 & 13.09 & 13.05 & 11.9 & 8.88 & 9.3 & 3.66 \\ 
        llama3-2-3b-instruct & 75.84 & 16.33 & 10.8 & 11.98 & 11.64 & 8.81 & 7.74 & 8.54 \\ 
        llama3-2-1b-instruct & 56.79 & 9.03 & 9.07 & 8.86 & 7.56 & 7.09 & 6.34 & \textcolor{blue}{\textbf{8.84}} \\ \hline
        \textbf{Average (Normalized)} & 78.23 & \textcolor{red}{\textbf{74.26}} & \textcolor{red}{\textbf{69.10}} & \textbf{83.24} & \textbf{80.53} & \textbf{92.78} & \textbf{86.01} & \textcolor{red}{\textbf{71.12}} \\ \hline
    \end{tabularx}
    \caption{Plan generation quality using prompts \textbf{without lineage}, evaluated with the \textbf{metric-wise evaluator} on test data (\textbf{500} queries). The \textbf{Normalized Average} in the last row shows the average per metric normalized by that metric's maximum score. Highest scores per metric are highlighted in blue, and the three metrics with the lowest normalized scores are highlighted in red.}
    \label{tab:plan_gen_result_without_lineage_metric_wise_eval}
\end{table*}

\begin{table*}[!ht]
    \centering
    \small

    \setlength{\tabcolsep}{12pt}

    \begin{tabularx}{\textwidth}{>{\raggedright\arraybackslash}l*{4}{|>{\centering\arraybackslash}X}}
        \toprule
        \multirow{2}{*}{\textbf{Tag}} 
        & \multicolumn{2}{c|}{\textbf{Pre-Loop}} 
        & \multicolumn{2}{c}{\textbf{Post-Loop}} \\
        \cmidrule(lr){2-3} \cmidrule(lr){4-5}
            & \makecell{\#} 
            & \makecell{\%} 
            & \makecell{\#} 
            & \makecell{\%} \\
        \midrule
        Extremely Bad   & 28  & 5.60   & 23  & 4.60  \\ 
        Very Bad        & 127 & 25.40  & 109 & 21.80 \\ 
        Bad             & 107 & 21.40  & 122 & 24.40 \\ 
        Acceptable      & 66  & 13.20  & 48  & 9.60  \\ 
        Good            & 104 & \cellcolor{green}{\textbf{20.80}}  & 84  & 16.80 \\ 
        Very Good       & 46  & 9.20   & 74  & \cellcolor{green}{\textbf{14.80}} \\ 
        Extremely Good  & 22  & 4.40   & 40  & \cellcolor{green}{\textbf{8.00}}  \\ 
        \midrule
        \textbf{Grand Total} & \textbf{500} & \textbf{100.00} & \textbf{500} & \textbf{100.00} \\
        \bottomrule
    \end{tabularx}
    \caption{Effectiveness of the \textbf{iterative Evaluator$\rightarrow$Optimizer loop}, measured using the one-shot evaluator on test data. Higher values between \textbf{Pre-Loop} and \textbf{Post-Loop} in the \textbf{Good, Very Good, Extremely Good} buckets are highlighted in green.}
    \label{tab:feedback_loop_performance_overall}
\end{table*}

\section{Results}
\subsection{Overall Plan Generation Quality}
\label{sec:overall-llm-quality}

\paragraph{One-shot evaluator (reference-based).}
Table~\ref{tab:plan_gen_result_overall_one_shot_eval} shows the proportion of plans in each quality tier: \emph{A+} (Extremely or Very Good), \emph{A} (\emph{A+} + Good), and \emph{B} (\emph{A} + Acceptable) for each model and prompt setting.
Overall performance is modest: the best result is \textbf{o3-mini} with \textbf{49.75\%} in \emph{A+} (and \textbf{65.99\%} in \emph{A}) without lineage, indicating that even strong models struggle to reliably produce near-gold plans; \textbf{GPT-4o} attains the highest \emph{B} coverage at \textbf{82.74\%}. Other models (including \emph{Claude-} and smaller \emph{Llama3} variants) trail substantially.

\emph{Effect of lineage in prompt.} Of 14 LLMs, 6 improve, 5 degrade, and 3 show no change on the \emph{A+} bucket when lineage is included, yielding no clear overall benefit.

\paragraph{Metric-wise evaluator (reference-informed).}
Tables~\ref{tab:plan_gen_result_with_lineage_metric_wise_eval}--\ref{tab:plan_gen_result_without_lineage_metric_wise_eval} report aggregate (0--100) and per-metric scores. 
\textbf{Claude-3-7-Sonnet} is highest overall at \textbf{84.8} (with lineage), followed by \textbf{llama4-maverick-17b-instruct} and \textbf{GPT-4o} (both $\approx\!82$ without lineage). 
Across models, \emph{Format} and \emph{Tool-Prompt Alignment} remain challenging (typical scores $\approx\!13$--14/20), and \emph{Tool Usage Completeness} is low, reflecting difficulty recognizing when both \textsc{T2S} and \textsc{RAG} are jointly warranted. 
On average, prompts \emph{without} lineage score higher overall; only 3/14 models improve with lineage, though \emph{Step Executability} improves for 9/14 models.

\paragraph{Sensitivity to metric weights.}
Our aggregate ``learned'' score is derived from metric-wise evaluator outputs and weights learned from human-preferred lineage triples (App.~\ref{app:weight-learning}). To test whether planner rankings are overly sensitive to these weights, we recompute scores under (i) an equal-weights scheme over normalized metrics and (ii) ten random weight vectors that preserve the 70{:}30 Effectiveness--Efficiency budget via Dirichlet draws. For both prompt settings (with-lineage and without-lineage), planner rankings remain highly stable: Spearman rank correlation \citep{zar2005spearman} between the learned and equal-weights rankings is $\rho=0.934$ (with-lineage) and $\rho=0.894$ (no-lineage), while random-weight rankings achieve median correlations of $\rho=0.890$ and $\rho=0.842$ respectively (Table~\ref{tab:weight-sensitivity-rho}; App. \ref{app:weight-sensitivity}).
This empirical analysis demonstrates that LLM rankings are stable across weight schemes, so our main conclusions remain unchanged.

\paragraph{Key grouped takeaways (details in Appendix~\ref{app:grouped-analyses}).}
Across grouped analyses, \textbf{Simple} queries and shorter plans ([1,4] steps) outperform \textbf{Compound} queries and longer plans; one-shot results slightly favor Objective queries, while metric-wise scores sometimes tilt toward Subjective ones, and no stable pattern emerges w.r.t.\ hop count.

\paragraph{Correlation with end-to-end QA quality.}
Although our benchmark is non-executing by design, we conducted a small end-to-end study on 200 LLM-generated test queries, comparing the no-plan baseline (\textbf{R1}), a north-star system that executes \emph{human-annotated reference plans} (\textbf{R2}), and the same stack driven by \emph{LLM-generated plans} (\textbf{R3}). Final answers are scored by an in-house Judge LLM using a four-metric rubric (Validity, Consistency, Completeness, Redundancy; App.~\ref{app:end2end}). R2 achieves a win rate of \textbf{58.7\%}, versus \textbf{42.75\%} for R1 and \textbf{33.33\%} for R3, and reference plans also obtain substantially higher planning scores than LLM-generated plans, supporting a positive correlation between planner quality and end-to-end QA quality.

\subsection{Effectiveness of the Iterative Evaluator\texorpdfstring{$\rightarrow$}{->}Optimizer Loop}
\label{sec:loop-effectiveness}

We quantify the impact of our Iterative Evaluator\texorpdfstring{$\rightarrow$}{->}Optimizer loop using the one-shot evaluator, comparing \emph{initial} Nova-Lite plans (\emph{pre-loop}) to \emph{final} plans (\emph{post-loop}) in Table~\ref{tab:feedback_loop_performance_overall}, which buckets plans into seven quality tiers.

\paragraph{Overall.}
Post-loop plans improve the top buckets: \textbf{Extremely Good} rises from \textbf{4.4\%} to \textbf{8.0\%} and \textbf{Very Good} from \textbf{9.2\%} to \textbf{14.8\%}, with a net shift toward higher-quality brackets despite a small drop in \emph{Good}. Gains are consistent across Objective/Subjective, Simple/Compound, and hop-count groupings (App.~\ref{app:loop-grouped}), with the largest relative improvements for \textbf{Objective}, \textbf{Simple}, and \textbf{3+ hop} plans.

\subsection{Module Validation on Validation Set}
\label{sec:module-validation}

We validate the four core evaluators/modules on the held-out validation split (Sec.~\ref{app:dataset-curation-steps}).

\paragraph{Metric-wise evaluator.}
We compare a \emph{Single} (all 7 metrics in one prompt) vs.\ \emph{Deconstructed} (one prompt per metric) setup, each in \emph{reference-free} and \emph{reference-based} modes. For each query, we form triplets of the three highest-quality plans in its lineage and assess \emph{relaxed triplet ranking agreement} with human inequalities per metric (Table~\ref{tab:validation_result_metric_wise_eval}). The deconstructed, reference-based setting performs best across all seven metrics, with $>90\%$ agreement for \textsc{Dependency}, \textsc{Format}, \textsc{Tool Usage Completeness}, and $>80\%$ for \textsc{Query Adherence}, \textsc{Redundancy}, \textsc{Tool-Prompt Alignment}; \textsc{Step Executability} attains $79.43\%$.

\paragraph{One-shot overall evaluator.}
We report macro Precision/Recall/F1 for label agreement over seven quality tags (\textit{Extremely Bad}~\texorpdfstring{$\rightarrow$}{->}~\textit{Extremely Good}) between the LLM judge and humans (Table~\ref{tab:validation_result_one_shot_eval}). Macro scores are $0.92/0.93/0.92$, with every tag at F1~$\ge 0.85$.

\paragraph{Step-wise evaluator.}
On $N{=}400$ step instances (80 queries $\times$ $\approx$5 steps/plan), multi-label tag agreement (Precision/Recall/F1) appears in Table~\ref{tab:validation_result_step_wise_eval}. Macro F1 is $0.84$; \textsc{Incorrect Prompt} and \textsc{Repeated Detail} reach $0.91$ F1, while \textsc{No Change} is hardest at $0.75$.

\paragraph{Plan optimizer.}
Using the tuned one-shot judge to compare optimizer revisions against human gold for $160$ pairs (Table~\ref{tab:validation_result_plan_optimizer}), $74.5\%$ of optimizer outputs land in the top buckets (\textit{Extremely Good} $28.13\%$, \textit{Very Good} $24.38\%$, \textit{Good} $21.88\%$), and $10.63\%$ are \textit{Acceptable}.

\paragraph{Judge robustness across models.}
Additionally, to mitigate the risk of single-judge bias, we replicated the validation and a small-scale test analysis with an alternative judge (GPT-5 \citep{OpenAI2025GPT5}) in the reference-based, deconstructed configuration. On the validation set, both Sonnet-4 and GPT-5 achieve high triplet-ranking agreement with humans for the metric-wise evaluator (e.g., $>90\%$ for \textsc{Dependency}, \textsc{Format}, \textsc{Redundancy}, \textsc{Tool Usage Completeness}; Tab.~\ref{tab:validation_result_metric_wise_eval}), and Sonnet-4 attains slightly higher macro F1 as a one-shot judge than GPT-5 ($0.921$ vs.\ $0.882$; Tabs.~\ref{tab:validation_result_one_shot_eval}--\ref{tab:oneshot-validation-gpt5}). On a test subset of 50 queries per planner (700 query–planner pairs across 14 planners), planner rankings under Sonnet-4 and GPT-5 are strongly correlated for the overall metric-wise score ($\rho{=}0.60$) and for most individual metrics (e.g., \textsc{Dependency} $\rho{=}0.84$, \textsc{Query Adherence} $\rho{=}0.79$; Tab.~\ref{tab:judge-corr}), confirming that our conclusions are robust to the choice of judge model.

\paragraph{Takeaway.}
The deconstructed, reference-based metric suite is reliable; the one-shot judge shows strong label fidelity; the step-wise evaluator identifies actionable errors; and the plan optimizer substantially improves plan quality relative to initial drafts. Additional experiments with GPT-5 as an alternative judge (Appx.~\ref{app:judge-robustness}) show similar human alignment and strongly correlated planner rankings, indicating that our conclusions are robust to the choice of judge model.

\section{Conclusion}
We present a domain-grounded framework for \emph{tool-aware plan generation} in contact-centers that unifies (i) a formal, executable plan schema with explicit \texttt{depends\_on} for parallelism, (ii) a tool interface spanning structured (T2S/Snowflake) and unstructured (RAG/transcripts) evidence plus LLM synthesis, (iii) an iterative evaluator\texorpdfstring{$\rightarrow$}{->}optimizer loop that yields \textbf{plan lineage}, and (iv) a two-track evaluation methodology (metric-wise and one-shot, reference-based). Beyond a benchmark, this constitutes a practical recipe for designing, critiquing, and improving planning agents in this domain.

Empirically, a 14-LLM study shows that - even with careful prompting - models struggle with query adherence and tool-usage completeness; simpler queries and shorter plans remain markedly easier. Practically, our framework supports higher-fidelity, auditable analytics by exposing where tool choices, prompts, or dependencies fail, and by encoding safe parallel execution. Importantly, \textbf{plan lineage} is not only interpretable evidence for debugging but also a training signal: it can supervise or reward better planners via SFT or RLVR. Finally, our current offline evaluator\texorpdfstring{$\rightarrow$}{->}optimizer loop naturally sets the stage for \textbf{online, tool-aware replanning}: adding a Step Executor to form an executor\texorpdfstring{$\rightarrow$}{->}evaluator\texorpdfstring{$\rightarrow$}{->}optimizer triad enables real-time plan updates conditioned on actual tool outputs.

\section{Limitations}

\paragraph{Domain scope.}
Our study is focused on contact-center analytics with domain-specific tools and schemas; we do not measure transfer to other enterprise verticals or domains.

\paragraph{Proprietary vs.\ public data.}
The main 600-query benchmark used for the core results is based on production-style queries and remains proprietary due to contractual restrictions. We release a separate, anonymized 200-query public dataset built from LLM-generated queries on a synthetic test account.

\paragraph{Tool palette and interfaces.}
We fix the tool set (T2S, RAG, LLM) and a specific JSON-based plan schema. While this reflects a common production stack, it does not cover richer tool ecosystems (e.g., BI connectors, policy engines) or alternative planning formalisms, which we leave for future work.

\paragraph{Offline, non-executing loop.}
The evaluator$\rightarrow$optimizer loop operates purely on plan text without executing tools; runtime issues such as bad SQL, empty joins, or latency-induced race conditions are not directly measured. Our small end-to-end correlation study provides supporting evidence that higher plan scores correlate with better answers, but a full executor$\rightarrow$evaluator$\rightarrow$optimizer loop is out of scope.

\paragraph{LLM-based judging.}
Both the metric-wise evaluator and one-shot judge are LLM-driven. We mitigate bias via structured rubrics, human-verified reference plans, inter-annotator and Human–LLM agreement checks, and a robustness study with GPT-5 as an alternative judge. Nonetheless, some residual judge bias and prompt sensitivity may remain, especially on edge cases.

\paragraph{Cost/latency modeling.}
Although plans encode explicit dependencies and potential parallelism, we do not optimize for cost or latency during planning or scheduling. Execution-time trade-offs are therefore only indirectly reflected in our metrics.

\paragraph{Future directions.}
(i) \textbf{Public proxy tasks:} extend the released synthetic dataset into broader, domain-agnostic plan-generation suites so that others can benchmark planners and judges in the open. 
(ii) \textbf{Learning from lineage:} use lineage traces for SFT and RLVR to teach planners revision policies, robust tool selection, and dependency wiring. 
(iii) \textbf{Cost-aware planning:} integrate resource/budget objectives and dynamic branching for latency- and cost-efficient execution. 
(iv) \textbf{Neuro-symbolic hybrids:} combine LLM planning with classical verification/optimization to enforce hard constraints on dependencies and formats. 
(v) \textbf{Online replanning:} instantiate an executor$\rightarrow$evaluator$\rightarrow$optimizer loop that updates plans at runtime based on tool outputs, with safeguards for rollback and partial recomputation, and study its interaction with our offline metrics.

\bibliography{custom}

@misc{callnavi,
      title={CallNavi, A Challenge and Empirical Study on LLM Function Calling and Routing}, 
      author={Yewei Song and Xunzhu Tang and Cedric Lothritz and Saad Ezzini and Jacques Klein and Tegawendé F. Bissyandé and Andrey Boytsov and Ulrick Ble and Anne Goujon},
      year={2025},
      eprint={2501.05255},
      archivePrefix={arXiv},
      primaryClass={cs.SE},
      url={https://arxiv.org/abs/2501.05255}, 
}

@misc{wang2025fdabench,
      title={FDABench: A Benchmark for Data Agents on Analytical Queries over Heterogeneous Data}, 
      author={Ziting Wang and Shize Zhang and Haitao Yuan and Jinwei Zhu and Shifu Li and Wei Dong and Gao Cong},
      year={2025},
      eprint={2509.02473},
      archivePrefix={arXiv},
      primaryClass={cs.DB},
      url={https://arxiv.org/abs/2509.02473}, 
}

@inproceedings{liu-suql,
    title = "{SUQL}: Conversational Search over Structured and Unstructured Data with Large Language Models",
    author = "Liu, Shicheng  and
      Xu, Jialiang  and
      Tjangnaka, Wesley  and
      Semnani, Sina  and
      Yu, Chen  and
      Lam, Monica",
    editor = "Duh, Kevin  and
      Gomez, Helena  and
      Bethard, Steven",
    booktitle = "Findings of the Association for Computational Linguistics: NAACL 2024",
    month = jun,
    year = "2024",
    address = "Mexico City, Mexico",
    publisher = "Association for Computational Linguistics",
    url = "https://aclanthology.org/2024.findings-naacl.283/",
    doi = "10.18653/v1/2024.findings-naacl.283",
    pages = "4535--4555"
}

@inproceedings{apibank,
    title = "{API}-Bank: A Comprehensive Benchmark for Tool-Augmented {LLM}s",
    author = "Li, Minghao  and
      Zhao, Yingxiu  and
      Yu, Bowen  and
      Song, Feifan  and
      Li, Hangyu  and
      Yu, Haiyang  and
      Li, Zhoujun  and
      Huang, Fei  and
      Li, Yongbin",
    editor = "Bouamor, Houda  and
      Pino, Juan  and
      Bali, Kalika",
    booktitle = "Proceedings of the 2023 Conference on Empirical Methods in Natural Language Processing",
    month = dec,
    year = "2023",
    address = "Singapore",
    publisher = "Association for Computational Linguistics",
    url = "https://aclanthology.org/2023.emnlp-main.187/",
    doi = "10.18653/v1/2023.emnlp-main.187",
    pages = "3102--3116"
}

@misc{agentbench,
      title={AgentBench: Evaluating LLMs as Agents}, 
      author={Xiao Liu and Hao Yu and Hanchen Zhang and Yifan Xu and Xuanyu Lei and Hanyu Lai and Yu Gu and Hangliang Ding and Kaiwen Men and Kejuan Yang and Shudan Zhang and Xiang Deng and Aohan Zeng and Zhengxiao Du and Chenhui Zhang and Sheng Shen and Tianjun Zhang and Yu Su and Huan Sun and Minlie Huang and Yuxiao Dong and Jie Tang},
      year={2023},
      eprint={2308.03688},
      archivePrefix={arXiv},
      primaryClass={cs.AI},
      url={https://arxiv.org/abs/2308.03688}, 
}

@misc{hastemakeswaste,
      title={Haste Makes Waste: Evaluating Planning Abilities of LLMs for Efficient and Feasible Multitasking with Time Constraints Between Actions}, 
      author={Zirui Wu and Xiao Liu and Jiayi Li and Lingpeng Kong and Yansong Feng},
      year={2025},
      eprint={2503.02238},
      archivePrefix={arXiv},
      primaryClass={cs.CL},
      url={https://arxiv.org/abs/2503.02238}, 
}

@misc{lip_llm,
      title={LiP-LLM: Integrating Linear Programming and dependency graph with Large Language Models for multi-robot task planning}, 
      author={Kazuma Obata and Tatsuya Aoki and Takato Horii and Tadahiro Taniguchi and Takayuki Nagai},
      year={2024},
      eprint={2410.21040},
      archivePrefix={arXiv},
      primaryClass={cs.RO},
      url={https://arxiv.org/abs/2410.21040}, 
}

@misc{OpenAI2025GPT5,
  title = {{GPT-5 System Card}},
  author = {{OpenAI}},
  howpublished = {\url{https://openai.com/index/gpt-5-system-card/}},
  year = {2025},
  month = {Aug},
  note = {{Accessed: 2 January 2026}}
}

@misc{mmsbenchmark,
      title={m\&m's: A Benchmark to Evaluate Tool-Use for multi-step multi-modal Tasks}, 
      author={Zixian Ma and Weikai Huang and Jieyu Zhang and Tanmay Gupta and Ranjay Krishna},
      year={2024},
      eprint={2403.11085},
      archivePrefix={arXiv},
      primaryClass={cs.CV},
      url={https://arxiv.org/abs/2403.11085}, 
}

@misc{mind2web,
      title={Mind2Web: Towards a Generalist Agent for the Web}, 
      author={Xiang Deng and Yu Gu and Boyuan Zheng and Shijie Chen and Samuel Stevens and Boshi Wang and Huan Sun and Yu Su},
      year={2023},
      eprint={2306.06070},
      archivePrefix={arXiv},
      primaryClass={cs.CL},
      url={https://arxiv.org/abs/2306.06070}, 
}

@misc{planbench,
      title={PlanBench: An Extensible Benchmark for Evaluating Large Language Models on Planning and Reasoning about Change}, 
      author={Karthik Valmeekam and Matthew Marquez and Alberto Olmo and Sarath Sreedharan and Subbarao Kambhampati},
      year={2023},
      eprint={2206.10498},
      archivePrefix={arXiv},
      primaryClass={cs.CL},
      url={https://arxiv.org/abs/2206.10498}, 
}

@misc{wang2024mintevaluatingllmsmultiturn,
      title={MINT: Evaluating LLMs in Multi-turn Interaction with Tools and Language Feedback}, 
      author={Xingyao Wang and Zihan Wang and Jiateng Liu and Yangyi Chen and Lifan Yuan and Hao Peng and Heng Ji},
      year={2024},
      eprint={2309.10691},
      archivePrefix={arXiv},
      primaryClass={cs.CL},
      url={https://arxiv.org/abs/2309.10691}, 
}

@misc{nl2flow,
      title={Scaling LLM Planning: NL2FLOW for Parametric Problem Generation and Rigorous Evaluation}, 
      author={Jungkoo Kang},
      year={2025},
      eprint={2507.02253},
      archivePrefix={arXiv},
      primaryClass={cs.AI},
      url={https://arxiv.org/abs/2507.02253}, 
}

@misc{llm_planning_survey,
      title={Large Language Models for Planning: A Comprehensive and Systematic Survey}, 
      author={Pengfei Cao and Tianyi Men and Wencan Liu and Jingwen Zhang and Xuzhao Li and Xixun Lin and Dianbo Sui and Yanan Cao and Kang Liu and Jun Zhao},
      year={2025},
      eprint={2505.19683},
      archivePrefix={arXiv},
      primaryClass={cs.AI},
      url={https://arxiv.org/abs/2505.19683}, 
}

@misc{llm_reasoning_models_replace_classical_planning,
      title={Can LLM-Reasoning Models Replace Classical Planning? A Benchmark Study}, 
      author={Kai Goebel and Patrik Zips},
      year={2025},
      eprint={2507.23589},
      archivePrefix={arXiv},
      primaryClass={cs.RO},
      url={https://arxiv.org/abs/2507.23589}, 
}

@misc{llm_planning_modelers_survey,
      title={LLMs as Planning Modelers: A Survey for Leveraging Large Language Models to Construct Automated Planning Models}, 
      author={Marcus Tantakoun and Xiaodan Zhu and Christian Muise},
      year={2025},
      eprint={2503.18971},
      archivePrefix={arXiv},
      primaryClass={cs.AI},
      url={https://arxiv.org/abs/2503.18971}, 
}

@misc{catp_llm,
      title={CATP-LLM: Empowering Large Language Models for Cost-Aware Tool Planning}, 
      author={Duo Wu and Jinghe Wang and Yuan Meng and Yanning Zhang and Le Sun and Zhi Wang},
      year={2025},
      eprint={2411.16313},
      archivePrefix={arXiv},
      primaryClass={cs.AI},
      url={https://arxiv.org/abs/2411.16313}, 
}

@article{llm_auto_planning_scheduling,
   title={On the Prospects of Incorporating Large Language Models (LLMs) in Automated Planning and Scheduling (APS)},
   volume={34},
   ISSN={2334-0835},
   url={http://dx.doi.org/10.1609/icaps.v34i1.31503},
   DOI={10.1609/icaps.v34i1.31503},
   journal={Proceedings of the International Conference on Automated Planning and Scheduling},
   publisher={Association for the Advancement of Artificial Intelligence (AAAI)},
   author={Pallagani, Vishal and Muppasani, Bharath Chandra and Roy, Kaushik and Fabiano, Francesco and Loreggia, Andrea and Murugesan, Keerthiram and Srivastava, Biplav and Rossi, Francesca and Horesh, Lior and Sheth, Amit},
   year={2024},
   month=may, pages={432–444} }

@misc{prompt2dag,
      title={Prompt2DAG: A Modular Methodology for LLM-Based Data Enrichment Pipeline Generation}, 
      author={Abubakari Alidu and Michele Ciavotta and Flavio DePaoli},
      year={2025},
      eprint={2509.13487},
      archivePrefix={arXiv},
      primaryClass={cs.SE},
      url={https://arxiv.org/abs/2509.13487}, 
}

@misc{openai2024gpt4ocard,
      title={GPT-4o System Card}, 
      author={OpenAI and : and Aaron Hurst and Adam Lerer and Adam P. Goucher and Adam Perelman and Aditya Ramesh and Aidan Clark and AJ Ostrow and Akila Welihinda and Alan Hayes and Alec Radford and Aleksander Mądry and Alex Baker-Whitcomb and Alex Beutel and Alex Borzunov and Alex Carney and Alex Chow and Alex Kirillov and Alex Nichol and Alex Paino and Alex Renzin and Alex Tachard Passos and Alexander Kirillov and Alexi Christakis and Alexis Conneau and Ali Kamali and Allan Jabri and Allison Moyer and Allison Tam and Amadou Crookes and Amin Tootoochian and Amin Tootoonchian and Ananya Kumar and Andrea Vallone and Andrej Karpathy and Andrew Braunstein and Andrew Cann and Andrew Codispoti and Andrew Galu and Andrew Kondrich and Andrew Tulloch and Andrey Mishchenko and Angela Baek and Angela Jiang and Antoine Pelisse and Antonia Woodford and Anuj Gosalia and Arka Dhar and Ashley Pantuliano and Avi Nayak and Avital Oliver and Barret Zoph and Behrooz Ghorbani and Ben Leimberger and Ben Rossen and Ben Sokolowsky and Ben Wang and Benjamin Zweig and Beth Hoover and Blake Samic and Bob McGrew and Bobby Spero and Bogo Giertler and Bowen Cheng and Brad Lightcap and Brandon Walkin and Brendan Quinn and Brian Guarraci and Brian Hsu and Bright Kellogg and Brydon Eastman and Camillo Lugaresi and Carroll Wainwright and Cary Bassin and Cary Hudson and Casey Chu and Chad Nelson and Chak Li and Chan Jun Shern and Channing Conger and Charlotte Barette and Chelsea Voss and Chen Ding and Cheng Lu and Chong Zhang and Chris Beaumont and Chris Hallacy and Chris Koch and Christian Gibson and Christina Kim and Christine Choi and Christine McLeavey and Christopher Hesse and Claudia Fischer and Clemens Winter and Coley Czarnecki and Colin Jarvis and Colin Wei and Constantin Koumouzelis and Dane Sherburn and Daniel Kappler and Daniel Levin and Daniel Levy and David Carr and David Farhi and David Mely and David Robinson and David Sasaki and Denny Jin and Dev Valladares and Dimitris Tsipras and Doug Li and Duc Phong Nguyen and Duncan Findlay and Edede Oiwoh and Edmund Wong and Ehsan Asdar and Elizabeth Proehl and Elizabeth Yang and Eric Antonow and Eric Kramer and Eric Peterson and Eric Sigler and Eric Wallace and Eugene Brevdo and Evan Mays and Farzad Khorasani and Felipe Petroski Such and Filippo Raso and Francis Zhang and Fred von Lohmann and Freddie Sulit and Gabriel Goh and Gene Oden and Geoff Salmon and Giulio Starace and Greg Brockman and Hadi Salman and Haiming Bao and Haitang Hu and Hannah Wong and Haoyu Wang and Heather Schmidt and Heather Whitney and Heewoo Jun and Hendrik Kirchner and Henrique Ponde de Oliveira Pinto and Hongyu Ren and Huiwen Chang and Hyung Won Chung and Ian Kivlichan and Ian O'Connell and Ian O'Connell and Ian Osband and Ian Silber and Ian Sohl and Ibrahim Okuyucu and Ikai Lan and Ilya Kostrikov and Ilya Sutskever and Ingmar Kanitscheider and Ishaan Gulrajani and Jacob Coxon and Jacob Menick and Jakub Pachocki and James Aung and James Betker and James Crooks and James Lennon and Jamie Kiros and Jan Leike and Jane Park and Jason Kwon and Jason Phang and Jason Teplitz and Jason Wei and Jason Wolfe and Jay Chen and Jeff Harris and Jenia Varavva and Jessica Gan Lee and Jessica Shieh and Ji Lin and Jiahui Yu and Jiayi Weng and Jie Tang and Jieqi Yu and Joanne Jang and Joaquin Quinonero Candela and Joe Beutler and Joe Landers and Joel Parish and Johannes Heidecke and John Schulman and Jonathan Lachman and Jonathan McKay and Jonathan Uesato and Jonathan Ward and Jong Wook Kim and Joost Huizinga and Jordan Sitkin and Jos Kraaijeveld and Josh Gross and Josh Kaplan and Josh Snyder and Joshua Achiam and Joy Jiao and Joyce Lee and Juntang Zhuang and Justyn Harriman and Kai Fricke and Kai Hayashi and Karan Singhal and Katy Shi and Kavin Karthik and Kayla Wood and Kendra Rimbach and Kenny Hsu and Kenny Nguyen and Keren Gu-Lemberg and Kevin Button and Kevin Liu and Kiel Howe and Krithika Muthukumar and Kyle Luther and Lama Ahmad and Larry Kai and Lauren Itow and Lauren Workman and Leher Pathak and Leo Chen and Li Jing and Lia Guy and Liam Fedus and Liang Zhou and Lien Mamitsuka and Lilian Weng and Lindsay McCallum and Lindsey Held and Long Ouyang and Louis Feuvrier and Lu Zhang and Lukas Kondraciuk and Lukasz Kaiser and Luke Hewitt and Luke Metz and Lyric Doshi and Mada Aflak and Maddie Simens and Madelaine Boyd and Madeleine Thompson and Marat Dukhan and Mark Chen and Mark Gray and Mark Hudnall and Marvin Zhang and Marwan Aljubeh and Mateusz Litwin and Matthew Zeng and Max Johnson and Maya Shetty and Mayank Gupta and Meghan Shah and Mehmet Yatbaz and Meng Jia Yang and Mengchao Zhong and Mia Glaese and Mianna Chen and Michael Janner and Michael Lampe and Michael Petrov and Michael Wu and Michele Wang and Michelle Fradin and Michelle Pokrass and Miguel Castro and Miguel Oom Temudo de Castro and Mikhail Pavlov and Miles Brundage and Miles Wang and Minal Khan and Mira Murati and Mo Bavarian and Molly Lin and Murat Yesildal and Nacho Soto and Natalia Gimelshein and Natalie Cone and Natalie Staudacher and Natalie Summers and Natan LaFontaine and Neil Chowdhury and Nick Ryder and Nick Stathas and Nick Turley and Nik Tezak and Niko Felix and Nithanth Kudige and Nitish Keskar and Noah Deutsch and Noel Bundick and Nora Puckett and Ofir Nachum and Ola Okelola and Oleg Boiko and Oleg Murk and Oliver Jaffe and Olivia Watkins and Olivier Godement and Owen Campbell-Moore and Patrick Chao and Paul McMillan and Pavel Belov and Peng Su and Peter Bak and Peter Bakkum and Peter Deng and Peter Dolan and Peter Hoeschele and Peter Welinder and Phil Tillet and Philip Pronin and Philippe Tillet and Prafulla Dhariwal and Qiming Yuan and Rachel Dias and Rachel Lim and Rahul Arora and Rajan Troll and Randall Lin and Rapha Gontijo Lopes and Raul Puri and Reah Miyara and Reimar Leike and Renaud Gaubert and Reza Zamani and Ricky Wang and Rob Donnelly and Rob Honsby and Rocky Smith and Rohan Sahai and Rohit Ramchandani and Romain Huet and Rory Carmichael and Rowan Zellers and Roy Chen and Ruby Chen and Ruslan Nigmatullin and Ryan Cheu and Saachi Jain and Sam Altman and Sam Schoenholz and Sam Toizer and Samuel Miserendino and Sandhini Agarwal and Sara Culver and Scott Ethersmith and Scott Gray and Sean Grove and Sean Metzger and Shamez Hermani and Shantanu Jain and Shengjia Zhao and Sherwin Wu and Shino Jomoto and Shirong Wu and Shuaiqi and Xia and Sonia Phene and Spencer Papay and Srinivas Narayanan and Steve Coffey and Steve Lee and Stewart Hall and Suchir Balaji and Tal Broda and Tal Stramer and Tao Xu and Tarun Gogineni and Taya Christianson and Ted Sanders and Tejal Patwardhan and Thomas Cunninghman and Thomas Degry and Thomas Dimson and Thomas Raoux and Thomas Shadwell and Tianhao Zheng and Todd Underwood and Todor Markov and Toki Sherbakov and Tom Rubin and Tom Stasi and Tomer Kaftan and Tristan Heywood and Troy Peterson and Tyce Walters and Tyna Eloundou and Valerie Qi and Veit Moeller and Vinnie Monaco and Vishal Kuo and Vlad Fomenko and Wayne Chang and Weiyi Zheng and Wenda Zhou and Wesam Manassra and Will Sheu and Wojciech Zaremba and Yash Patil and Yilei Qian and Yongjik Kim and Youlong Cheng and Yu Zhang and Yuchen He and Yuchen Zhang and Yujia Jin and Yunxing Dai and Yury Malkov},
      year={2024},
      eprint={2410.21276},
      archivePrefix={arXiv},
      primaryClass={cs.CL},
      url={https://arxiv.org/abs/2410.21276}, 
}

@misc{openai-o3-mini-2025,
  author = {{OpenAI}},
  title = {{OpenAI o3-mini}},
  howpublished = {\url{https://openai.com/index/openai-o3-mini/}},
  year = {2025},
  note = {Announced January 31, 2025}
}

@misc{gpt4_1,
  author = {OpenAI},
  title = {GPT-4.1-mini},
  howpublished = {API Model},
  year = {2025},
  month = {April},
  note = {Available from {OpenAI API} https://openai.com/index/gpt-4-1/},
  url = {https://openai.com/index/gpt-4-1/}
}

@misc{claude-3_5,
  author = {{Anthropic}},
  title = {{Claude 3.5 Haiku}},
  howpublished = {\url{https://www.anthropic.com/news/claude-3-5}},
  year = {2024},
  note = {Accessed: [Date of access]},
}

@misc{claude-4,
  title = {Claude-Sonnet-4},
  author = {{Anthropic}},
  year = {2025},
  url = {https://www.anthropic.com/news/claude-4},
  note = {{Large language model, accessed [date]}},
}

@misc{llama4maverick,
    author = {{Meta}},
    title = {Llama 4 Maverick},
    howpublished = {\url{https://ai.meta.com/blog/llama-4-multimodal-intelligence/}},
    year = {2025},
    note = {Model released as a part of the Llama 4 family.}
}

@misc{llama3_meta_research,
  title = {Llama 3: An Open-Source Framework for Language Modeling},
  author = {Meta AI Research},
  year = {2024},
  howpublished = {Published via blog posts and official releases, as no specific public paper yet. Refer to the official announcements.},
  note = {The Llama 3.2 models, including the 1B and 3B instruct variants, are a subsequent release built upon the Llama 3 framework. The primary source for the 3.2 models is the AI at Meta blog post on Sept 25, 2024.}
}

@Article{nova_models,
 author = {Amazon Artificial General Intelligence},
 title = {The Amazon Nova family of models: Technical report and model card},
 year = {2024},
 url = {https://www.amazon.science/publications/the-amazon-nova-family-of-models-technical-report-and-model-card},
 journal = {Amazon Technical Reports},
}

@article{cohen_kappa,
  title={A Coefficient of Agreement for Nominal Scales},
  author={Jacob Cohen},
  journal={Educational and Psychological Measurement},
  year={1960},
  pages={37–46},
  volume={20(1)}
}

@article{cohen_weighted_kappa,
  title={Weighted kappa: Nominal scale agreement with provision for scaled disagreement or partial credit},
  author={Cohen, Jacob},
  journal={Psychological bulletin},
  volume={70},
  number={4},
  pages={213},
  year={1968},
  publisher={American Psychological Association},
  doi={10.1037/h0026256}
}

@article{fleiss1971,
  title={Measuring nominal scale agreement among many raters},
  author={Fleiss, Joseph L},
  journal={Psychological Bulletin},
  year={1971}
}

@article{landiskoch1977,
  title={The measurement of observer agreement for categorical data},
  author={Landis, J Richard and Koch, Gary G},
  journal={Biometrics},
  year={1977}
}

@misc{insight_bench,
      title={InsightBench: Evaluating Business Analytics Agents Through Multi-Step Insight Generation}, 
      author={Gaurav Sahu and Abhay Puri and Juan Rodriguez and Amirhossein Abaskohi and Mohammad Chegini and Alexandre Drouin and Perouz Taslakian and Valentina Zantedeschi and Alexandre Lacoste and David Vazquez and Nicolas Chapados and Christopher Pal and Sai Rajeswar Mudumba and Issam Hadj Laradji},
      year={2025},
      eprint={2407.06423},
      archivePrefix={arXiv},
      primaryClass={cs.AI},
      url={https://arxiv.org/abs/2407.06423}, 
}

@article{zar2005spearman,
  title={Spearman rank correlation},
  author={Zar, Jerrold H},
  journal={Encyclopedia of Biostatistics},
  volume={7},
  year={2005},
  publisher={Wiley Online Library}
}

\appendix

\begin{table*}[!ht]
    \centering
    \small

    \setlength{\tabcolsep}{6pt}

    \begin{tabularx}{\textwidth}{l|ccc|ccc}
        \toprule
        \multirow{2}{*}{\textbf{LLM}} 
        & \multicolumn{3}{c|}{\textbf{Objective Queries}} 
        & \multicolumn{3}{c}{\textbf{Subjective Queries}} \\
        \cmidrule(lr){2-4} \cmidrule(lr){5-7}
            & \makecell{(\emph{A+})\\Extr. good,\\ very good \\ (\%)} 
            & \makecell{(\emph{A})\\Extr. good,\\ very good,\\ good \\ (\%)} 
            & \makecell{(\emph{B})\\Extr. good,\\ very good,\\ good,\\ acceptable \\ (\%)} 
            & \makecell{(\emph{A+})\\Extr. good,\\ very good \\ (\%)} 
            & \makecell{(\emph{A})\\Extr. good,\\ very good,\\ good \\ (\%)} 
            & \makecell{(\emph{B})\\Extr. good,\\ very good,\\ good,\\ acceptable \\ (\%)} \\
        \midrule
        gpt-4o & \cellcolor{green}\textcolor{blue}{\textbf{50.53}} & 62.11 & 74.74 & \cellcolor{green}{\textbf{41.18}} & 62.75 & 87.25 \\ 
        o3-mini & \textcolor{blue}{\textbf{47.37}} & 55.79 & 72.63 & 39.22 & 50.98 & 72.55 \\ 
        claude-3-7-sonnet & \textcolor{blue}{\textbf{33.68}} & 46.32 & 62.11 & 27.45 & 50.00 & 78.43 \\ 
        claude-sonnet-4 & \textcolor{blue}{\textbf{32.63}} & 51.58 & 68.42 & 28.43 & 53.92 & 78.43 \\ 
        gpt-4o-mini & 31.58 & 51.58 & 62.11 & \textcolor{magenta}{\textbf{32.35}} & 46.08 & 66.67 \\ 
        claude-3-5-haiku & \textcolor{blue}{\textbf{27.37}} & 44.21 & 56.84 & 22.55 & 47.06 & 67.65 \\ 
        nova-pro & \textcolor{blue}{\textbf{20.00}} & 36.84 & 55.79 & 16.67 & 48.04 & 67.65 \\ 
        llama3-70b-instruct  & \textcolor{blue}{\textbf{18.95}} &	37.89 &	53.68 &  18.63 &	50.00 &	65.69 \\ 
        llama4-maverick-17b-instruct & 18.95 & 32.63 & 48.42 & \textcolor{magenta}{\textbf{21.57}} & 40.20 & 59.80 \\ 
        nova-micro & 17.89 & 30.53 & 49.47 & \textcolor{magenta}{\textbf{23.53}} & 40.20 & 56.86 \\ 
        gpt-4.1-nano & 12.63 & 16.84 & 37.89 & \textcolor{magenta}{\textbf{21.57}} & 39.22 & 71.57 \\ 
        nova-lite & 9.47  & 18.95 & 33.68 & \textcolor{magenta}{\textbf{17.65}} & 42.16 & 63.73 \\ 
        llama3-2-3b-instruct & 1.05  & 3.16  & 9.47  &  \textcolor{magenta}{\textbf{8.82}} & 17.65 & 24.51 \\ 
        llama3-2-1b-instruct & 0.00  & 0.00  & 0.00  &  0.00 & 0.00 & 0.00 \\ 
        \midrule
        \textbf{Grand Total (\#)} & \multicolumn{3}{c|}{\textbf{241}} & \multicolumn{3}{c}{\textbf{259}} \\
        \bottomrule
    \end{tabularx}

    \caption{Plan Generation quality comparison using prompts \textbf{with lineage} for \textbf{objective} and \textbf{subjective} queries, judged using the \textbf{one-shot evaluator} on test data. The highest score in the \textbf{Extremely Good, Very Good} bucket is highlighted in green for both query categories; blue indicates better performance on Objective queries, and magenta indicates better performance on Subjective queries.}
    \label{tab:plan_gen_result_objective_vs_subjective_with_lineage_one_shot_eval}
\end{table*}

\begin{table*}[!ht]
    \centering
    \small

    \setlength{\tabcolsep}{6pt}

    \begin{tabularx}{\textwidth}{l|ccc|ccc}
        \toprule
        \multirow{2}{*}{\textbf{LLM}} 
        & \multicolumn{3}{c|}{\textbf{Objective Queries}} 
        & \multicolumn{3}{c}{\textbf{Subjective Queries}} \\
        \cmidrule(lr){2-4} \cmidrule(lr){5-7}
            & \makecell{(\emph{A+})\\Extr. good,\\ very good \\ (\%)} 
            & \makecell{(\emph{A})\\Extr. good,\\ very good,\\ good \\ (\%)} 
            & \makecell{(\emph{B})\\Extr. good,\\ very good,\\ good,\\ acceptable \\ (\%)} 
            & \makecell{(\emph{A+})\\Extr. good,\\ very good \\ (\%)} 
            & \makecell{(\emph{A})\\Extr. good,\\ very good,\\ good \\ (\%)} 
            & \makecell{(\emph{B})\\Extr. good,\\ very good,\\ good,\\ acceptable \\ (\%)} \\
        \midrule
        o3-mini & \cellcolor{green}\textcolor{blue}{\textbf{55.79}} & 67.37 & 80.00 & \cellcolor{green}{\textbf{44.12}} & 64.71 & 80.39 \\ 
        gpt-4o & \textcolor{blue}{\textbf{43.16}} & 56.84 & 80.00 & 39.22 & 58.82 & 85.29 \\ 
        claude-3-5-haiku & \textcolor{blue}{\textbf{33.68}} & 49.47 & 69.47 & 27.45 & 46.08 & 75.49 \\ 
        gpt-4o-mini & \textcolor{blue}{\textbf{32.63}} & 44.21 & 58.95 & 31.37 & 59.80 & 83.33 \\ 
        claude-sonnet-4 & 27.37 & 41.05 & 65.26 & \textcolor{magenta}{\textbf{32.35}} & 57.84 & 82.35 \\ 
        claude-3-7-sonnet & \textcolor{blue}{\textbf{25.26}} & 43.16 & 68.42 & 14.71 & 35.29 & 69.61 \\ 
        nova-pro & \textcolor{blue}{\textbf{22.11}} & 40.00 & 52.63 & 17.65 & 43.14 & 65.69 \\ 
        llama3-70b-instruct  & \textcolor{blue}{\textbf{22.11}} &	37.89 &	51.58 & 12.75 &	33.33 &	58.82 \\ 
        llama4-maverick-17b-instruct & \textcolor{blue}{\textbf{21.05}} & 38.95 & 57.89 & 20.59 & 42.16 & 66.67 \\ 
        nova-micro & \textcolor{blue}{\textbf{15.79}} & 33.68 & 46.32 & 15.69 & 40.20 & 60.78 \\ 
        gpt-4.1-nano & 12.63 & 21.05 & 38.95 & \textcolor{magenta}{\textbf{19.61}} & 39.22 & 66.67 \\ 
        nova-lite & 6.54  & 25.48 & 34.96 & \textcolor{magenta}{\textbf{20.49}} & 43.04 & 59.71 \\ 
        llama3-2-3b-instruct & 3.16  & 5.26  & 14.74 & \textcolor{magenta}{\textbf{7.84}} & 17.65 & 38.24 \\ 
        llama3-2-1b-instruct & 0.00  & 0.00  & 2.11 & 0.00 & 0.00 & 0.98 \\ 
        \midrule
        \textbf{Grand Total (\#)} & \multicolumn{3}{c|}{\textbf{241}} & \multicolumn{3}{c}{\textbf{259}} \\
        \bottomrule
    \end{tabularx}
    \caption{Plan Generation quality comparison using prompts \textbf{without lineage} for \textbf{objective} and \textbf{subjective} queries, judged using the \textbf{one-shot evaluator} on test data. The highest score in the \textbf{Extremely Good, Very Good} bucket is highlighted in green for both query categories; blue indicates better performance on Objective queries, and magenta indicates better performance on Subjective queries.}
    \label{tab:plan_gen_result_objective_vs_subjective_without_lineage_one_shot_eval}
\end{table*}

\begin{table*}[!ht]
    \centering
    \small

    \setlength{\tabcolsep}{6pt}

    \begin{tabularx}{\textwidth}{l|ccc|ccc}
        \toprule
        \multirow{2}{*}{\textbf{LLM}} 
        & \multicolumn{3}{c|}{\textbf{Simple Queries}} 
        & \multicolumn{3}{c}{\textbf{Compound Queries}} \\
        \cmidrule(lr){2-4} \cmidrule(lr){5-7}
            & \makecell{(\emph{A+})\\Extr. good,\\ very good \\ (\%)} 
            & \makecell{(\emph{A})\\Extr. good,\\ very good,\\ good \\ (\%)} 
            & \makecell{(\emph{B})\\Extr. good,\\ very good,\\ good,\\ acceptable \\ (\%)} 
            & \makecell{(\emph{A+})\\Extr. good,\\ very good \\ (\%)} 
            & \makecell{(\emph{A})\\Extr. good,\\ very good,\\ good \\ (\%)} 
            & \makecell{(\emph{B})\\Extr. good,\\ very good,\\ good,\\ acceptable \\ (\%)} \\
        \midrule
        o3-mini & \cellcolor{green}\textcolor{blue}{\textbf{45.28}} & 51.89 & 69.81 & 40.66 & 54.95 & 75.82 \\ 
        gpt-4o & 42.45 & 55.66 & 73.58 & \cellcolor{green}\textcolor{magenta}{\textbf{49.45}} & 70.33 & 90.11 \\ 
        claude-3-7-sonnet & \textcolor{blue}{\textbf{41.51}} & 52.83 & 69.81 & 17.58 &	42.86 &	71.43 \\ 
        gpt-4o-mini & \textcolor{blue}{\textbf{36.79}} & 51.89 & 66.04 & 26.37 & 45.05 & 62.64 \\ 
        claude-sonnet-4 & \textcolor{blue}{\textbf{33.02}} & 51.89 & 67.92 & 27.47 & 53.85 & 80.22 \\ 
        claude-3-5-haiku & \textcolor{blue}{\textbf{29.25}} & 49.06 & 62.26 & 19.78 & 41.76 & 62.64 \\ 
        llama4-maverick-17b-instruct & \textcolor{blue}{\textbf{28.30}} & 47.17 & 61.32 & 10.99 & 24.18 & 46.15 \\ 
        gpt-4.1-nano & \textcolor{blue}{\textbf{27.36}} & 35.85 & 60.38 & 5.49  & 19.78 & 49.45 \\ 
        nova-micro & \textcolor{blue}{\textbf{25.47}} & 38.68 & 54.72 & 15.38 & 31.87 & 51.65 \\ 
        nova-pro & \textcolor{blue}{\textbf{19.81}} & 46.23 & 61.32 & 16.48 & 38.46 & 62.64 \\ 
        llama3-70b-instruct  & \textcolor{blue}{\textbf{19.81}} &	42.45 &	64.15 &  17.58 &	46.15 &	54.95 \\ 
        nova-lite & \textcolor{blue}{\textbf{16.98}} & 33.02 & 50.00 & 9.89  & 28.57 & 48.35 \\ 
        llama3-2-3b-instruct &  \textcolor{blue}{\textbf{6.60}} & 10.38 & 17.92 & 3.30  & 10.99 & 16.48 \\ 
        llama3-2-1b-instruct &  0.00 &  0.00 &  0.00 & 0.00  & 0.00  & 0.00  \\ 
        \midrule
        \textbf{Grand Total (\#)} & \multicolumn{3}{c|}{\textbf{269}} & \multicolumn{3}{c}{\textbf{231}} \\
        \bottomrule
    \end{tabularx}
    \caption{Plan Generation quality comparison using prompts \textbf{with lineage} for \textbf{Simple} and \textbf{Compound} queries using the \textbf{one-shot evaluator} on test data. The highest score in the \textbf{Extremely Good, Very Good} bucket is highlighted in green for both query categories; blue indicates better performance on Simple queries, and magenta indicates better performance on Compound queries.}
    \label{tab:plan_gen_result_simple_vs_compound_with_lineage_one_shot_eval}
\end{table*}

\begin{table*}[!ht]
    \centering
    \small

    \setlength{\tabcolsep}{6pt}

    \begin{tabularx}{\textwidth}{l|ccc|ccc}
        \toprule
        \multirow{2}{*}{\textbf{LLM}} 
        & \multicolumn{3}{c|}{\textbf{Simple Queries}} 
        & \multicolumn{3}{c}{\textbf{Compound Queries}} \\
        \cmidrule(lr){2-4} \cmidrule(lr){5-7}
            & \makecell{(\emph{A+})\\Extr. good,\\ very good \\ (\%)} 
            & \makecell{(\emph{A})\\Extr. good,\\ very good,\\ good \\ (\%)} 
            & \makecell{(\emph{B})\\Extr. good,\\ very good,\\ good,\\ acceptable \\ (\%)} 
            & \makecell{(\emph{A+})\\Extr. good,\\ very good \\ (\%)} 
            & \makecell{(\emph{A})\\Extr. good,\\ very good,\\ good \\ (\%)} 
            & \makecell{(\emph{B})\\Extr. good,\\ very good,\\ good,\\ acceptable \\ (\%)} \\
        \midrule
        o3-mini & \cellcolor{green}{\textbf{47.17}} & 63.21 & 75.47 & \cellcolor{green}\textcolor{magenta}{\textbf{52.75}} & 69.23 & 85.71 \\ 
        gpt-4o & \textcolor{blue}{\textbf{44.34}} & 57.55 & 83.02 & 37.36 &	58.24 &	82.42\\ 
        claude-sonnet-4 & \textcolor{blue}{\textbf{38.68}} & 50.00 & 67.92 & 19.78 & 49.45 & 81.32 \\ 
        gpt-4o-mini & \textcolor{blue}{\textbf{36.79}} & 52.83 & 66.98 & 26.37 & 51.65 & 76.92 \\ 
        claude-3-5-haiku & \textcolor{blue}{\textbf{33.96}} & 50.94 & 66.98 & 26.37 & 43.96 & 79.12 \\ 
        llama4-maverick-17b-instruct & \textcolor{blue}{\textbf{29.25}} & 43.40 & 68.87 & 10.99 & 37.36 & 54.95 \\ 
        gpt-4.1-nano & \textcolor{blue}{\textbf{28.30}} & 43.40 & 61.32 & 2.20  & 15.38 & 43.96 \\ 
        nova-pro & \textcolor{blue}{\textbf{26.42}} & 46.23 & 61.32 & 12.09 & 36.26 & 57.14 \\ 
        llama3-70b-instruct  & \textcolor{blue}{\textbf{23.58}} &	46.23 &	61.32 & 9.89 &	23.08 &	48.35\\ 
        claude-3-7-sonnet & \textcolor{blue}{\textbf{21.70}} & 33.96 & 61.32 &	17.58 &	45.05 & 78.02 \\ 
        nova-micro & \textcolor{blue}{\textbf{19.81}} & 40.57 & 60.38 & 10.99 & 32.97 & 46.15 \\ 
        nova-lite & \textcolor{blue}{\textbf{16.66}} & 40.25 & 53.45 & 10.22 & 27.80 & 40.99 \\ 
        llama3-2-3b-instruct & \textcolor{blue}{\textbf{9.43}} & 16.98 & 33.02 & 1.10  & 5.49  & 19.78 \\ 
        llama3-2-1b-instruct & 0.00 &  0.00 &  2.83  & 0.00  & 0.00  & 0.00 \\ 
        \midrule
        \textbf{Grand Total (\#)} & \multicolumn{3}{c|}{\textbf{269}} & \multicolumn{3}{c}{\textbf{231}} \\
        \bottomrule
    \end{tabularx}
    \caption{Plan Generation quality comparison using prompts \textbf{without lineage} for \textbf{Simple} and \textbf{Compound} queries using the \textbf{one-shot evaluator} on test data. The highest score in the \textbf{Extremely Good, Very Good} bucket is highlighted in green for both query categories; blue indicates better performance on Simple queries, and magenta indicates better performance on Compound queries.}
    \label{tab:plan_gen_result_simple_vs_compound_without_lineage_one_shot_eval}
\end{table*}

\begin{table*}[!ht]
    \centering
    \small

    \setlength{\tabcolsep}{6pt}

    \begin{tabularx}{\textwidth}{l|ccc|ccc}
        \toprule
        \multirow{2}{*}{\textbf{LLM}} 
        & \multicolumn{3}{c|}{\textbf{[1, 4] steps in Best Possible Plan}} 
        & \multicolumn{3}{c}{\textbf{[5, 15] steps in Best Possible Plan}} \\
        \cmidrule(lr){2-4} \cmidrule(lr){5-7}
            & \makecell{(\emph{A+})\\Extr. good,\\ very good \\ (\%)} 
            & \makecell{(\emph{A})\\Extr. good,\\ very good,\\ good \\ (\%)} 
            & \makecell{(\emph{B})\\Extr. good,\\ very good,\\ good,\\ acceptable \\ (\%)} 
            & \makecell{(\emph{A+})\\Extr. good,\\ very good \\ (\%)} 
            & \makecell{(\emph{A})\\Extr. good,\\ very good,\\ good \\ (\%)} 
            & \makecell{(\emph{B})\\Extr. good,\\ very good,\\ good,\\ acceptable \\ (\%)} \\
        \midrule
        o3-mini & \cellcolor{green}\textcolor{blue}{\textbf{52.48}} & 60.28 & 75.89 & 19.64 & 35.71 & 64.29 \\ 
        gpt-4o & \textcolor{blue}{\textbf{48.23}} &	62.41 &	78.72 & \cellcolor{green}{\textbf{39.29}} &	62.50 &	87.50 \\ 
        gpt-4o-mini & \textcolor{blue}{\textbf{38.30}} & 54.61 & 68.09 & 16.07 & 33.93 & 55.36 \\ 
        claude-3-7-sonnet & \textcolor{blue}{\textbf{37.59}} & 54.61 & 72.34 & 12.50 & 32.14 & 66.07 \\ 
        claude-sonnet-4 & \textcolor{blue}{\textbf{32.62}} & 52.48 & 70.21 & 25.00 & 53.57 & 82.14 \\ 
        claude-3-5-haiku & \textcolor{blue}{\textbf{28.37}} & 47.52 & 61.70 & 16.07 & 41.07 & 64.29 \\ 
        nova-micro & \textcolor{blue}{\textbf{26.24}} & 41.13 & 56.74 & 7.14 & 21.43 & 44.64 \\ 
        llama4-maverick-17b-instruct & \textcolor{blue}{\textbf{25.53}} & 44.68 & 60.28 & 7.14 & 16.07 & 39.29 \\ 
        llama3-70b-instruct & \textcolor{blue}{\textbf{24.11}} &	51.77 &	68.79 & 5.36 &	25.00 &	37.50 \\ 
        gpt-4.1-nano & \textcolor{blue}{\textbf{21.99}} & 32.62 & 58.16 & 5.36 & 17.86 & 48.21 \\ 
        nova-pro & \textcolor{blue}{\textbf{20.57}} & 46.10 & 63.83 & 12.50 & 33.93 & 57.14 \\ 
        nova-lite & \textcolor{blue}{\textbf{17.73}} & 34.04 & 51.77 & 3.57 & 23.21 & 42.86 \\ 
        llama3-2-3b-instruct & \textcolor{blue}{\textbf{7.09}} & 12.06 & 19.15 & 0.00 & 7.14 & 12.50 \\ 
        llama3-2-1b-instruct & 0.00 & 0.00 & 0.00 & 0.00 & 0.00 & 0.00 \\ 
        \midrule
        \textbf{Grand Total (\#)} & \multicolumn{3}{c|}{\textbf{358}} & \multicolumn{3}{c}{\textbf{142}} \\
        \bottomrule
    \end{tabularx}
    \caption{Plan Generation quality comparison using prompts \textbf{with lineage} for queries with \textbf{1 to 4 steps} and \textbf{5 to 15 steps} in the \textbf{Best Possible Plan} using the \textbf{one-shot evaluator} on test data. The highest score in the \textbf{Extremely Good, Very Good} bucket is highlighted in green for both the groups; blue indicates better performance on queries with [1,4] steps in the Best Possible Plan, and magenta indicates better performance on queries with [5,15] steps in the Best Possible Plan.}
    \label{tab:plan_gen_result_1to4steps_vs_5to15steps_with_lineage_one_shot_eval}
\end{table*}

\begin{table*}[!ht]
    \centering
    \small

    \setlength{\tabcolsep}{6pt}

    \begin{tabularx}{\textwidth}{l|ccc|ccc}
        \toprule
        \multirow{2}{*}{\textbf{LLM}} 
        & \multicolumn{3}{c|}{\textbf{[1, 4] steps in Best Possible Plan}} 
        & \multicolumn{3}{c}{\textbf{[5, 15] steps in Best Possible Plan}} \\
        \cmidrule(lr){2-4} \cmidrule(lr){5-7}
            & \makecell{(\emph{A+})\\Extr. good,\\ very good \\ (\%)} 
            & \makecell{(\emph{A})\\Extr. good,\\ very good,\\ good \\ (\%)} 
            & \makecell{(\emph{B})\\Extr. good,\\ very good,\\ good,\\ acceptable \\ (\%)} 
            & \makecell{(\emph{A+})\\Extr. good,\\ very good \\ (\%)} 
            & \makecell{(\emph{A})\\Extr. good,\\ very good,\\ good \\ (\%)} 
            & \makecell{(\emph{B})\\Extr. good,\\ very good,\\ good,\\ acceptable \\ (\%)} \\
        \midrule
        o3-mini & \cellcolor{green}\textcolor{blue}{\textbf{54.61}} & 71.63 & 80.85 & \cellcolor{green}{\textbf{37.50}} & 51.79 & 78.57 \\ 
        gpt-4o & \textcolor{blue}{\textbf{43.26}} & 58.16 & 82.27 & 35.71 &	57.14 &	83.93 \\ 
        gpt-4o-mini & \textcolor{blue}{\textbf{34.75}} & 50.35 & 68.79 & 25.00 & 57.14 & 78.57 \\ 
        claude-sonnet-4   & \textcolor{blue}{\textbf{34.75}} & 48.23 & 68.09 & 17.86 & 53.57 & 89.29 \\ 
        claude-3-5-haiku & \textcolor{blue}{\textbf{31.91}} & 51.06 & 70.92 & 26.79 & 39.29 & 76.79 \\ 
        llama4-maverick-17b-instruct & \textcolor{blue}{\textbf{25.53}} & 43.26 & 68.79 & 8.93 & 33.93 & 46.43 \\ 
        nova-pro & \textcolor{blue}{\textbf{23.40}} & 45.39 & 60.99 & 10.71 & 32.14 & 55.36 \\ 
        gpt-4.1-nano & \textcolor{blue}{\textbf{22.70}} & 37.59 & 57.45 & 0.00 & 12.50 & 42.86 \\ 
        claude-3-7-sonnet & \textcolor{blue}{\textbf{20.57}} & 36.17 & 63.83 & 17.86 & 46.43 & 82.14 \\ 
        llama3-70b-instruct & \textcolor{blue}{\textbf{20.57}} &	40.43 &	58.16 & 8.93 &	23.21 &	48.21 \\ 
        nova-micro & \textcolor{blue}{\textbf{16.31}} & 37.59 & 54.61 & 14.29 & 35.71 & 51.79 \\ 
        nova-lite & \textcolor{blue}{\textbf{15.60}} & 40.43 & 53.90 & 8.93 & 19.64 & 39.29 \\ 
        llama3-2-3b-instruct & \textcolor{blue}{\textbf{7.09}} & 12.77 & 28.37 & 1.79 & 8.93 & 23.21 \\ 
        llama3-2-1b-instruct & 0.00 & 0.00 & 2.13 & 0.00 & 0.00 & 0.00 \\ 
        \midrule
        \textbf{Grand Total (\#)} & \multicolumn{3}{c|}{\textbf{358}} & \multicolumn{3}{c}{\textbf{142}} \\
        \bottomrule
    \end{tabularx}
    \caption{Plan Generation quality comparison using prompts \textbf{without lineage} for queries with \textbf{1 to 4 steps} and \textbf{5 to 15 steps} in the \textbf{Best Possible Plan} using the \textbf{one-shot evaluator} on test data. The highest score in the \textbf{Extremely Good, Very Good} bucket is highlighted in green for both the groups; blue indicates better performance on queries with [1,4] steps in the Best Possible Plan, and magenta indicates better performance on queries with [5,15] steps in the Best Possible Plan.}
    \label{tab:plan_gen_result_1to4steps_vs_5to15steps_without_lineage_one_shot_eval}
\end{table*}

\begin{table*}[ht]
    \centering
    \small
    \setlength{\tabcolsep}{8pt}
    
    \begin{tabular}{|l|c|c|c|c|}
        \hline
        \multirow{2}{*}{\textbf{LLM}} 
        & \multicolumn{1}{c|}{\textbf{Zero Hop}} 
        & \multicolumn{1}{c|}{\textbf{One Hop}} 
        & \multicolumn{1}{c|}{\textbf{Two Hop}}
        & \multicolumn{1}{c|}{\textbf{Three-Plus Hop}} \\
        \cline{2-5}
        & \makecell{(\emph{A+})\\Extr. good, \\ very good \\ (\%)}
        & \makecell{(\emph{A+})\\Extr. good, \\ very good \\ (\%)}
        & \makecell{(\emph{A+})\\Extr. good, \\ very good \\ (\%)}
        & \makecell{(\emph{A+})\\Extr. good, \\ very good \\ (\%)} \\
        \hline
        claude-sonnet-4 & \cellcolor{green}{\textbf{25.00}} & 31.40 & \textcolor{blue}{\textbf{38.89}} & 17.24 \\
        llama4-maverick-17b-instruct & 21.43 & \textcolor{blue}{\textbf{26.74}} & 14.81 & 10.34 \\
        o3-mini & 17.86 & \cellcolor{green}\textcolor{blue}{\textbf{55.81}} & 51.85 & 13.79 \\
        gpt-4o-mini & 14.29 & \textcolor{blue}{\textbf{45.35}} & 31.48 & 10.34 \\
        claude-3-7-sonnet & 14.29 & \textcolor{blue}{\textbf{41.86}} & 35.19 & 3.45 \\
        gpt-4o & 10.71 & 51.16 & \cellcolor{green}\textcolor{blue}{\textbf{66.67}} & \cellcolor{green}{\textbf{24.14}} \\
        nova-pro & 7.14 & \textcolor{blue}{\textbf{25.58}} & 16.67 & 10.34 \\
        llama3-70b-instruct & 7.14 & \textcolor{blue}{\textbf{25.58}} & 18.52 & 10.34 \\
        gpt-4.1-nano & 7.14 & \textcolor{blue}{\textbf{25.58}} & 16.67 & 3.45 \\
        nova-micro & 0.00 & \textcolor{blue}{\textbf{34.88}} & 14.81 & 10.34 \\
        nova-lite & 0.00 & \textcolor{blue}{\textbf{24.42}} & 7.41 & 6.90 \\
        llama3-2-3b-instruct & 0.00 & \textcolor{blue}{\textbf{10.47}} & 1.85 & 0.00 \\
        llama3-2-1b-instruct & 0.00 & 0.00 & 0.00 & 0.00 \\
        claude-3-5-haiku & 0.00 & 30.23 & \textcolor{blue}{\textbf{33.33}} & 17.24 \\
        \midrule
        \textbf{Grand Total (\#)} & \textbf{88} & \textbf{203} & \textbf{140} & \textbf{69} \\
        \hline
    \end{tabular}
    \caption{Plan Generation quality comparison using prompts \textbf{with lineage} for \textbf{Zero-Hop}, \textbf{One-Hop}, \textbf{Two-Hop} and \textbf{Three-Plus Hop} queries using the \textbf{one-shot evaluator} on test data. The highest score in the \textbf{Extremely Good, Very Good} bucket is highlighted in green for each group; The highest score across groups for each model is highlighted in blue.}
    \label{tab:plan_gen_result_with_lineage_hops_one_shot_eval}
\end{table*}

\begin{table*}[ht]
    \centering
    \small
    \setlength{\tabcolsep}{8pt}
    
    \begin{tabular}{|l|c|c|c|c|}
        \hline
        \multirow{2}{*}{\textbf{LLM}} 
        & \multicolumn{1}{c|}{\textbf{Zero Hop}} 
        & \multicolumn{1}{c|}{\textbf{One Hop}} 
        & \multicolumn{1}{c|}{\textbf{Two Hop}}
        & \multicolumn{1}{c|}{\textbf{Three-Plus Hop}} \\
        \cline{2-5}
        & \makecell{(\emph{A+})\\Extr. good, \\ very good \\ (\%)}
        & \makecell{(\emph{A+})\\Extr. good, \\ very good \\ (\%)}
        & \makecell{(\emph{A+})\\Extr. good, \\ very good \\ (\%)}
        & \makecell{(\emph{A+})\\Extr. good, \\ very good \\ (\%)} \\
        \hline
        nova-pro & \cellcolor{green}{\textbf{14.29}} & 23.26 & \textcolor{blue}{\textbf{27.78}} & 0.00 \\
        llama4-maverick-17b-instruct & \cellcolor{green}{\textbf{14.29}} & \textcolor{blue}{\textbf{26.74}} & 24.07 & 3.45 \\
        claude-sonnet-4 & 10.71 & \textcolor{blue}{\textbf{39.53}} & 31.48 & 17.24 \\
        claude-3-7-sonnet & 10.71 & 20.93 & \textcolor{blue}{\textbf{25.93}} & 13.79 \\
        o3-mini & 7.14 & \cellcolor{green}{\textbf{62.79}} & \cellcolor{green}\textcolor{blue}{\textbf{62.96}} & 27.59 \\
        gpt-4o & 7.14 & 47.67 & \textcolor{blue}{\textbf{53.70}} & \cellcolor{green}{\textbf{31.03}} \\
        nova-micro & 3.57 & 17.44 & \textcolor{blue}{\textbf{24.07}} & 6.90 \\
        nova-lite & 3.57 & \textcolor{blue}{\textbf{17.44}} & 14.81 & 10.34 \\
        llama3-70b-instruct & 3.57 & \textcolor{blue}{\textbf{24.42}} & 20.37 & 3.45 \\
        llama3-2-3b-instruct & 3.57 & 5.81 & \textcolor{blue}{\textbf{7.41}} & 3.45 \\
        gpt-4o-mini & 3.57 & \textcolor{blue}{\textbf{40.70}} & 35.19 & 27.59 \\
        gpt-4.1-nano & 3.57 & \textcolor{blue}{\textbf{29.07}} & 11.11 & 0.00 \\
        claude-3-5-haiku & 3.57 & 34.88 & \textcolor{blue}{\textbf{40.74}} & 24.14 \\
        llama3-2-1b-instruct & 0.00 & 0.00 & 0.00 & 0.00 \\
        \midrule
        \textbf{Grand Total (\#)} & \textbf{88} & \textbf{203} & \textbf{140} & \textbf{69} \\
        \hline
    \end{tabular}
    \caption{Plan Generation quality comparison using prompts \textbf{without lineage} for \textbf{Zero-Hop}, \textbf{One-Hop}, \textbf{Two-Hop} and \textbf{Three-Plus Hop} queries using the \textbf{one-shot evaluator} on test data. The highest score in the \textbf{Extremely Good, Very Good} bucket is highlighted in green for each group; The highest score across groups for each model is highlighted in blue.}
    \label{tab:plan_gen_result_without_lineage_hops_one_shot_eval}
\end{table*}

\begin{table*}[!ht]
    \centering
    \small

    \setlength{\tabcolsep}{3pt}

    \begin{tabularx}{\textwidth}{>{\raggedright\arraybackslash}l*{4}{|>{\centering\arraybackslash}X}}
        \toprule
        \multirow{2}{*}{\textbf{LLM}} 
        & \multicolumn{2}{c|}{\textbf{With Lineage}} 
        & \multicolumn{2}{c}{\textbf{Without Lineage}} \\
        \cmidrule(lr){2-3} \cmidrule(lr){4-5}
            & \makecell{\textbf{Objective} \\ \textbf{Queries}}
            & \makecell{\textbf{Subjective} \\ \textbf{Queries}}
            & \makecell{\textbf{Objective} \\ \textbf{Queries}}
            & \makecell{\textbf{Subjective} \\ \textbf{Queries}} \\
        \midrule
        claude-3-7-sonnet & 82.9 & \textcolor{magenta}{\textbf{85.58}} & 82.44 & \textcolor{magenta}{\textbf{84.3}} \\
        gpt-4o & 81.94 & \textcolor{magenta}{\textbf{82.02}} & \textcolor{blue}{\textbf{83.49}} & 83.44 \\
        claude-sonnet-4 & \textcolor{blue}{\textbf{80.85}} & 80.09 & 76.1 & \textcolor{magenta}{\textbf{79.61}} \\
        llama4-maverick-17b-instruct & 79.49 & \textcolor{magenta}{\textbf{84.21}} & 83.04 & \textcolor{magenta}{\textbf{83.5}} \\
        gpt-4o-mini & \textcolor{blue}{\textbf{78.64}} & 76.76 & \textcolor{blue}{\textbf{84.35}} & 82.85 \\
        nova-pro & 78.33 & \textcolor{magenta}{\textbf{79.73}} & \textcolor{blue}{\textbf{82.78}} & 82.67 \\
        gpt-4.1-nano & \textcolor{blue}{\textbf{78.2}} & 77.59 & \textcolor{blue}{\textbf{78.33}} & 75.52 \\
        llama3-3-70b-instruct & 76.65 & \textcolor{magenta}{\textbf{81.06}} & 80.21 & \textcolor{magenta}{\textbf{82.09}} \\
        nova-micro & 76.05 & \textcolor{magenta}{\textbf{77.28}} & 79.62 & \textcolor{magenta}{\textbf{83.28}} \\
        o3-mini & 71.49 & \textcolor{magenta}{\textbf{72.64}} & 72.91 & \textcolor{magenta}{\textbf{75.83}} \\
        llama3-2-3b-instruct & 70.5 & \textcolor{magenta}{\textbf{71.58}} & 75.14 & \textcolor{magenta}{\textbf{76.52}} \\
        claude-3-5-haiku & 69.59 & \textcolor{magenta}{\textbf{72.39}} & 78.68 & \textcolor{magenta}{\textbf{83.63}} \\
        nova-lite & 68.01 & \textcolor{magenta}{\textbf{71.81}} & 73.08 & \textcolor{magenta}{\textbf{77.51}} \\
        llama3-2-1b-instruct & 20.18 & \textcolor{magenta}{\textbf{21.11}} & \textcolor{blue}{\textbf{57.94}} & 55.53 \\
        \midrule
        \textbf{Grand Total (\#)} & \textbf{241} & \textbf{259} & \textbf{241} & \textbf{259} \\
        \bottomrule
    \end{tabularx}
    \caption{Plan Generation quality comparison using prompts \textbf{with and without lineage} for \textbf{Objective and Subjective} queries, judged using the \textbf{Overall Score} of the \textbf{metric-wise evaluator} on test data (\textbf{500} queries). For the two groups - \textbf{With Lineage} and \textbf{Without Lineage} - blue cells denote better performance on Objective queries, while magenta cells denote better performance on Subjective queries.}
    \label{tab:plan_gen_result_object_and_subject_queries_metric_wise_eval}
\end{table*}

\begin{table*}[!ht]
    \centering
    \small

    \setlength{\tabcolsep}{3pt}

    \begin{tabularx}{\textwidth}{>{\raggedright\arraybackslash}l*{4}{|>{\centering\arraybackslash}X}}
        \toprule
        \multirow{2}{*}{\textbf{LLM}} 
        & \multicolumn{2}{c|}{\textbf{With Lineage}} 
        & \multicolumn{2}{c}{\textbf{Without Lineage}} \\
        \cmidrule(lr){2-3} \cmidrule(lr){4-5}
            & \makecell{\textbf{Simple} \\ \textbf{Queries}}
            & \makecell{\textbf{Compound} \\ \textbf{Queries}}
            & \makecell{\textbf{Simple} \\ \textbf{Queries}}
            & \makecell{\textbf{Compound} \\ \textbf{Queries}} \\
        \midrule
        claude-3-7-sonnet & \textcolor{blue}{\textbf{86.17}} & 82.99 & \textcolor{blue}{\textbf{84.6}} & 81.51 \\
        llama4-maverick-17b-instruct & \textcolor{blue}{\textbf{85.01}} & 78.83 & 82.39 & \textcolor{magenta}{\textbf{82.53}} \\
        gpt-4o & \textcolor{blue}{\textbf{82.83}} & 79.59 & \textcolor{blue}{\textbf{83.48}} & 81.73 \\
        nova-pro & \textcolor{blue}{\textbf{81.17}} & 77 & \textcolor{blue}{\textbf{83.72}} & 81.4 \\
        llama3-3-70b-instruct & \textcolor{blue}{\textbf{80.36}} & 78.95 & \textcolor{blue}{\textbf{82.42}} & 79.36 \\
        gpt-4o-mini & \textcolor{blue}{\textbf{78.64}} & 75.36 & \textcolor{blue}{\textbf{83.38}} & 81.79 \\
        claude-sonnet-4 & 78.42 & \textcolor{magenta}{\textbf{81.58}} & 77.41 & \textcolor{magenta}{\textbf{78.6}} \\
        gpt-4.1-nano & \textcolor{blue}{\textbf{78.25}} & 75.68 & \textcolor{blue}{\textbf{77.8}} & 73.77 \\
        nova-micro & \textcolor{blue}{\textbf{77.53}} & 77.18 & \textcolor{blue}{\textbf{82.79}} & 81.16 \\
        claude-3-5-haiku & \textcolor{blue}{\textbf{73.27}} & 68.76 & \textcolor{blue}{\textbf{83.02}} & 80.83 \\
        llama3-2-3b-instruct & \textcolor{blue}{\textbf{72.21}} & 68.35 & \textcolor{blue}{\textbf{78.63}} & 72.33 \\
        o3-mini & 71.05 & \textcolor{magenta}{\textbf{72.65}} & 73.64 & \textcolor{magenta}{\textbf{74.39}} \\
        nova-lite & \textcolor{blue}{\textbf{70.7}} & 70.49 & 74.22 & \textcolor{magenta}{\textbf{76.4}} \\
        llama3-2-1b-instruct & 19.6 & \textcolor{magenta}{\textbf{21.13}} & \textcolor{blue}{\textbf{56.93}} & 56.45 \\
        \midrule
        \textbf{Grand Total (\#)} & \textbf{269} & \textbf{231} & \textbf{269} & \textbf{231} \\
        \bottomrule
    \end{tabularx}
    \caption{Plan Generation quality comparison using prompts \textbf{with and without lineage} for \textbf{Simple and Compound} queries, judged using the \textbf{Overall Score} of the \textbf{metric-wise evaluator} on test data (\textbf{500} queries). For the two groups - \textbf{With Lineage} and \textbf{Without Lineage} - blue cells denote better performance on Simple queries, while magenta cells denote better performance on Compound queries.}
    \label{tab:plan_gen_result_simple_and_comp_queries_metric_wise_eval}
\end{table*}

\begin{table*}[!ht]
    \centering
    \small

    \setlength{\tabcolsep}{10pt}

    \begin{tabularx}{\textwidth}{l|cccc|cccc}
        \toprule
        \multirow{2}{*}{\textbf{LLM}} 
        & \multicolumn{4}{c|}{\textbf{With Lineage}} 
        & \multicolumn{4}{c}{\textbf{Without Lineage}} \\
        \cmidrule(lr){2-5} \cmidrule(lr){6-9}
            & \makecell{\textbf{Zero} \\ \textbf{Hop}}
            & \makecell{\textbf{One} \\ \textbf{Hop}}
            & \makecell{\textbf{Two} \\ \textbf{Hop}}
            & \makecell{\textbf{Three} \\ \textbf{Plus} \\ \textbf{Hop}}
            & \makecell{\textbf{Zero} \\ \textbf{Hop}}
            & \makecell{\textbf{One} \\ \textbf{Hop}}
            & \makecell{\textbf{Two} \\ \textbf{Hop}}
            & \makecell{\textbf{Three} \\ \textbf{Plus} \\ \textbf{Hop}} \\
        \midrule
        nova-pro & \textcolor{blue}{\textbf{87.01}} & 80.34 & 77.64 & 77.69 & 83.16 & \textcolor{blue}{\textbf{84.17}} & 81.82 & 83.2 \\
        llama3-3-70b-instruct & \textcolor{blue}{\textbf{85.4}} & 81.23 & 79.98 & 76.22 & 82.27 & \textcolor{blue}{\textbf{82.56}} & 80.95 & 78.19 \\
        claude-3-7-sonnet & 85.36 & \textcolor{blue}{\textbf{86.53}} & 84.05 & 83.49 & 82.52 & \textcolor{blue}{\textbf{84.81}} & 83.49 & 82.77 \\
        gpt-4o & \textcolor{blue}{\textbf{83.67}} & 83.41 & 79.84 & 79.39 & 72.55 & \textcolor{blue}{\textbf{85}} & 83.14 & 81.03 \\
        llama4-maverick-17b-instruct & 83.52 & \textcolor{blue}{\textbf{85.1}} & 79.31 & 82.6 & 80.71 & \textcolor{blue}{\textbf{84.43}} & 82.73 & 83.17 \\
        claude-sonnet-4 & 76.57 & 79.64 & \textcolor{blue}{\textbf{81.68}} & 81.52 & 76.58 & 77.82 & \textcolor{blue}{\textbf{79.79}} & 77.73 \\
        nova-micro & 74.77 & 77.24 & \textcolor{blue}{\textbf{77.97}} & 74.13 & 83.21 & \textcolor{blue}{\textbf{83.32}} & 81.6 & 81.33 \\
        gpt-4o-mini & 73.39 & \textcolor{blue}{\textbf{78.25}} & 78.24 & 72.9 & 74.07 & \textcolor{blue}{\textbf{84.35}} & 82.54 & 83.45 \\
        gpt-4.1-nano & 69.9 & \textcolor{blue}{\textbf{78.34}} & 77.4 & 75.94 & \textcolor{blue}{\textbf{80.78}} & 78.79 & 73.48 & 73.28 \\
        nova-lite & 67.94 & 70.47 & \textcolor{blue}{\textbf{70.86}} & 70.11 & 74.94 & 73.91 & \textcolor{blue}{\textbf{77.67}} & 76.22 \\
        o3-mini & 66.23 & 72.33 & \textcolor{blue}{\textbf{73.71}} & 70.57 & 76.14 & \textcolor{blue}{\textbf{77.02}} & 73.34 & 73.63 \\
        llama3-2-3b-instruct & 65.95 & \textcolor{blue}{\textbf{72.15}} & 69.73 & 69.6 & 77.39 & \textcolor{blue}{\textbf{78.31}} & 74.47 & 73.34 \\
        claude-3-5-haiku & 63.45 & \textcolor{blue}{\textbf{72.83}} & 69.08 & 71.49 & 74.17 & 82.29 & \textcolor{blue}{\textbf{83.39}} & 80.21 \\
        llama3-2-1b-instruct & 14.76 & 20.32 & 19.7 & \textcolor{blue}{\textbf{22.3}} & \textcolor{blue}{\textbf{61.93}} & 56.68 & 54.61 & 57.82 \\
        \midrule
        \textbf{Grand Total (\#)} & \textbf{88} & \textbf{203} & \textbf{140} & \textbf{69} & \textbf{88} & \textbf{203} & \textbf{140} & \textbf{69} \\
        \bottomrule
    \end{tabularx}
    \caption{Plan Generation quality comparison using prompts \textbf{with and without lineage} grouped by \textbf{Number of Hops}, judged using the \textbf{Overall Score} of the \textbf{metric-wise evaluator} on test data (\textbf{500} queries). For the two groups - \textbf{With Lineage} and \textbf{Without Lineage} - blue cells denote the best performance across categories of \textbf{Number of Hops} for each model.}
    \label{tab:plan_gen_result_num_hops_metric_wise_eval}
\end{table*}

\begin{table*}[!ht]
    \centering
    \small

    \setlength{\tabcolsep}{3pt}

    \begin{tabularx}{\textwidth}{>{\raggedright\arraybackslash}l*{4}{|>{\centering\arraybackslash}X}}
        \toprule
        \multirow{2}{*}{\textbf{LLM}} 
        & \multicolumn{2}{c|}{\textbf{With Lineage}} 
        & \multicolumn{2}{c}{\textbf{Without Lineage}} \\
        \cmidrule(lr){2-3} \cmidrule(lr){4-5}
            & \makecell{\textbf{[1, 4]} \\ \textbf{steps}}
            & \makecell{\textbf{[5, 15]} \\ \textbf{steps}}
            & \makecell{\textbf{[1, 4]} \\ \textbf{steps}}
            & \makecell{\textbf{[5, 15]} \\ \textbf{steps}} \\
        \midrule
        claude-3-7-sonnet & \textcolor{blue}{\textbf{86.07}} & 82.98 & \textcolor{blue}{\textbf{84.29}} & 82.46 \\
        nova-pro & \textcolor{blue}{\textbf{83.16}} & 80.56 & \textcolor{blue}{\textbf{82.85}} & 82.07 \\
        gpt-4o-mini & \textcolor{blue}{\textbf{82.92}} & 79.07 & \textcolor{blue}{\textbf{83.78}} & 81.37 \\
        gpt-4o & \textcolor{blue}{\textbf{81.84}} & 76.03 & \textcolor{blue}{\textbf{82.92}} & 78.1 \\
        llama4-maverick-17b-instruct & \textcolor{blue}{\textbf{80.63}} & 76.67 & \textcolor{blue}{\textbf{83.7}} & 81.12 \\
        claude-sonnet-4 & \textcolor{blue}{\textbf{79.78}} & 73.17 & \textcolor{blue}{\textbf{83.82}} & 81.62 \\
        gpt-4.1-nano & 79.3 & \textcolor{magenta}{\textbf{82.11}} & 77.7 & \textcolor{magenta}{\textbf{79.34}} \\
        llama3-3-70b-instruct & \textcolor{blue}{\textbf{78.19}} & 75.07 & \textcolor{blue}{\textbf{82.43}} & 81.49 \\
        claude-3-5-haiku & \textcolor{blue}{\textbf{78.17}} & 76.02 & \textcolor{blue}{\textbf{77.51}} & 73.41 \\
        nova-lite & \textcolor{blue}{\textbf{72.37}} & 69.79 & \textcolor{blue}{\textbf{82.76}} & 81.08 \\
        llama3-2-3b-instruct & \textcolor{blue}{\textbf{72.12}} & 71.72 & \textcolor{blue}{\textbf{74.44}} & 74.3 \\
        o3-mini & \textcolor{blue}{\textbf{72.02}} & 67.53 & \textcolor{blue}{\textbf{77.38}} & 72.65 \\
        nova-micro & 70.12 & \textcolor{magenta}{\textbf{71.34}} & 74.37 & \textcolor{magenta}{\textbf{76.73}} \\
        llama3-2-1b-instruct & 19.49 & \textcolor{magenta}{\textbf{21.43}} & \textcolor{blue}{\textbf{58.76}} & 52.68 \\
        \midrule
        \textbf{Grand Total (\#)} & \textbf{358} & \textbf{142} & \textbf{358} & \textbf{142} \\
        \bottomrule
    \end{tabularx}
    \caption{Plan Generation quality comparison using prompts \textbf{with and without lineage} grouped by \textbf{Number of steps in the Best Possible Plan}, judged using the \textbf{Overall Score} of the \textbf{metric-wise evaluator} on test data (\textbf{500} queries). For the two groups - \textbf{With Lineage} and \textbf{Without Lineage} - blue cells denote better performance on queries with [1, 4] steps in the best possible plan, while magenta cells denote better performance on queries with [5, 15] steps in the best possible plan.}
    \label{tab:plan_gen_result_num_steps_metric_wise_eval}
\end{table*}

\begin{table*}[ht]
    \centering
    \small
    \setlength{\tabcolsep}{6pt} 
    
    \begin{tabular}{|l|cc|cc|cc|cc|}
        \hline
        \multirow{3}{*}{\textbf{Tag}} 
        & \multicolumn{4}{c|}{\textbf{Simple Queries}} 
        & \multicolumn{4}{c|}{\textbf{Compound Queries}} \\
        \cline{2-9}
        & \multicolumn{2}{c|}{\textbf{Pre-Loop}} 
        & \multicolumn{2}{c|}{\textbf{Post-Loop}} 
        & \multicolumn{2}{c|}{\textbf{Pre-Loop}} 
        & \multicolumn{2}{c|}{\textbf{Post-Loop}} \\
        \cline{2-9}
        & \# & \% & \# & \% & \# & \% & \# & \% \\
        \hline
        Extremely Bad & 8 & 2.97 & 13 & 4.83 & 20 & 8.66 & 10 & 4.33 \\
        Very Bad & 71 & 26.39 & 58 & 21.56 & 56 & 24.24 & 51 & 22.08 \\
        Bad & 46 & 17.10 & 51 & 18.96 & 61 & 26.41 & 71 & 30.74 \\
        Acceptable & 36 & 13.38 & 23 & 8.55 & 30 & 12.99 & 25 & 10.82 \\
        Good & 63 & \cellcolor{green}{\textbf{23.42}} & 46 & 17.10 & 41 & \cellcolor{green}{\textbf{17.75}} & 38 & 16.45 \\
        Very Good & 30 & 11.15 & 48 & \cellcolor{green}{\textbf{17.84}} & 15 & 6.49 & 25 & \cellcolor{green}{\textbf{10.82}} \\
        Extremely Good & 15 & 5.58 & 30 & \cellcolor{green}{\textbf{11.15}} & 8 & 3.46 & 11 & \cellcolor{green}{\textbf{4.76}} \\
        \hline
        \textbf{Grand Total} & \textbf{269} & \textbf{100.00} & \textbf{269} & \textbf{100.00} & \textbf{231} & \textbf{100.00} & \textbf{231} & \textbf{100.00} \\
       \hline
    \end{tabular}
    \caption{Effectiveness of the \textbf{iterative Evaluator$\rightarrow$Optimizer loop} judged using \textbf{one-shot evaluator} broken down by simple and compound queries on test data. Higher values between \textbf{Pre-Loop} and \textbf{Post-Loop} in the \textbf{Good, Very Good, Extremely Good} buckets are highlighted in green for each group.}
    \label{tab:feedback_loop_performance_simple_compound}
\end{table*}

\begin{table*}[ht]
    \centering
    \small
    \setlength{\tabcolsep}{6pt} 
    
    \begin{tabular}{|l|cc|cc|cc|cc|}
        \hline
        \multirow{3}{*}{\textbf{Tag}} 
        & \multicolumn{4}{c|}{\textbf{Objective Queries}} 
        & \multicolumn{4}{c|}{\textbf{Subjective Queries}} \\
        \cline{2-9}
        & \multicolumn{2}{c|}{\textbf{Pre-Loop}} 
        & \multicolumn{2}{c|}{\textbf{Post-Loop}}
        & \multicolumn{2}{c|}{\textbf{Pre-Loop}}
        & \multicolumn{2}{c|}{\textbf{Post-Loop}} \\
        \cline{2-9}
        & \# & \% & \# & \% & \# & \% & \# & \% \\
        \hline
        Extremely Bad & 13 & 5.39 & 10 & 4.15 & 15 & 5.79 & 13 & 5.02 \\
        Very Bad & 86 & 35.68 & 66 & 27.39 & 41 & 15.83 & 43 & 16.60 \\
        Bad & 58 & 24.07 & 66 & 27.39 & 48 & 18.53 & 56 & 21.62 \\
        Acceptable & 23 & 9.54 & 18 & 7.47 & 43 & 16.60 & 30 & 11.58 \\
        Good & 46 & \cellcolor{green}{\textbf{19.09}} & 33 & 13.69 & 58 & \cellcolor{green}{\textbf{22.39}} & 51 & 19.69 \\
        Very Good & 5 & 2.07 & 25 & \cellcolor{green}{\textbf{10.37}} & 41 & 15.83 & 48 & \cellcolor{green}{\textbf{18.53}} \\
        Extremely Good & 10 & 4.15 & 23 & \cellcolor{green}{\textbf{9.54}} & 13 & 5.02 & 18 & \cellcolor{green}{\textbf{6.95}} \\
        \hline
        \textbf{Grand Total} & \textbf{259} & \textbf{100.00} & \textbf{259} & \textbf{100.00} & \textbf{241} & \textbf{100.00} & \textbf{241} & \textbf{100.00} \\
       \hline
    \end{tabular}

    \caption{Effectiveness of the \textbf{iterative Evaluator$\rightarrow$Optimizer loop} judged using \textbf{one-shot evaluator} broken down by objective and subjective queries on test data. Higher values between \textbf{Pre-Loop} and \textbf{Post-Loop} in the \textbf{Good, Very Good, Extremely Good} buckets are highlighted in green for each group.}
    \label{tab:feedback_loop_performance_objective_subjective}
    
\end{table*}

\begin{table*}[ht]
    \centering
    \small
    \setlength{\tabcolsep}{6pt} 
    
    \begin{tabular}{|l|cc|cc|cc|cc|}
        \hline
        \multirow{3}{*}{\textbf{Tag}} 
        & \multicolumn{4}{c|}{\textbf{Zero-Hop}} 
        & \multicolumn{4}{c|}{\textbf{One-Hop}} \\
        \cline{2-9}
        & \multicolumn{2}{c|}{\textbf{Pre-Loop}}
        & \multicolumn{2}{c|}{\textbf{Post-Loop}}
        & \multicolumn{2}{c|}{\textbf{Pre-Loop}} 
        & \multicolumn{2}{c|}{\textbf{Post-Loop}} \\
        \cline{2-9}
        & \# & \% & \# & \% & \# & \% & \# & \% \\
        \hline
        Extremely Bad & 3 & 3.41 & 0 & 0.00 & 8 & 3.94 & 10 & 4.93 \\
        Very Bad & 55 & 62.50 & 45 & 51.14 & 33 & 16.26 & 28 & 13.79 \\
        Bad & 18 & 20.45 & 28 & 31.82 & 28 & 13.79 & 36 & 17.73 \\
        Acceptable & 5 & 5.68 & 0 & 0.00 & 30 & 14.78 & 20 & 9.85 \\
        Good & 0 & 0.00 & 0 & 0.00 & 66 & \cellcolor{green}{\textbf{32.51}} & 51 & 25.12 \\
        Very Good & 0 & 0.00 & 8 & \cellcolor{green}{\textbf{9.09}} & 28 & 13.79 & 38 & \cellcolor{green}{\textbf{18.72}} \\
        Extremely Good & 7 & 7.95 & 7 & 7.95 & 10 & 4.93 & 20 & \cellcolor{green}{\textbf{9.85}} \\
        \hline
        \textbf{Grand Total} & \textbf{88} & \textbf{100.00} & \textbf{88} & \textbf{100.00} & \textbf{203} & \textbf{100.00} & \textbf{203} & \textbf{100.00} \\
       \hline
    \end{tabular}
    \caption{Effectiveness of the \textbf{iterative Evaluator$\rightarrow$Optimizer loop} judged using \textbf{one-shot evaluator} broken down by number of hops (0-hops and 1-hop) on test data. Higher values between \textbf{Pre-Loop} and \textbf{Post-Loop} in the \textbf{Good, Very Good, Extremely Good} buckets are highlighted in green for each group.}
    \label{tab:feedback_loop_performance_zerohop_onehop}
    
\end{table*}

\begin{table*}[ht]
    \centering
    \small
    \setlength{\tabcolsep}{6pt} 
    
    \begin{tabular}{|l|cc|cc|cc|cc|}
        \hline
        \multirow{3}{*}{\textbf{Tag}} 
        & \multicolumn{4}{c|}{\textbf{Two-Hops}} 
        & \multicolumn{4}{c|}{\textbf{Three-Plus-Hops}} \\
        \cline{2-9}
        & \multicolumn{2}{c|}{\textbf{Pre-Loop}} 
        & \multicolumn{2}{c|}{\textbf{Post-Loop}}
        & \multicolumn{2}{c|}{\textbf{Pre-Loop}}
        & \multicolumn{2}{c|}{\textbf{Post-Loop}} \\
        \cline{2-9}
        & \# & \% & \# & \% & \# & \% & \# & \% \\
        \hline
        Extremely Bad & 15 & 10.71 & 5 & 3.57 & 3 & 4.35 & 8 & 11.59 \\
        Very Bad & 13 & 9.29 & 20 & 14.29 & 26 & 37.68 & 15 & 21.74 \\
        Bad & 43 & 30.71 & 38 & 27.14 & 17 & 24.64 & 20 & 28.99 \\
        Acceptable & 20 & 14.29 & 23 & 16.43 & 10 & 14.49 & 5 & 7.25 \\
        Good & 31 & \cellcolor{green}{\textbf{22.14}} & 25 & 17.86 & 8 & 11.59 & 8 & 11.59 \\
        Very Good & 13 & 9.29 & 18 & \cellcolor{green}{\textbf{12.86}} & 5 & 7.25 & 10 & \cellcolor{green}{\textbf{14.49}} \\
        Extremely Good & 5 & 3.57 & 11 & \cellcolor{green}{\textbf{7.86}} & 0 & 0.00 & 3 & \cellcolor{green}{\textbf{4.35}} \\
        \hline
        \textbf{Grand Total} & \textbf{140} & \textbf{100.00} & \textbf{140} & \textbf{100.00} & \textbf{69} & \textbf{100.00} & \textbf{69} & \textbf{100.00} \\
       \hline
    \end{tabular}

    \caption{Effectiveness of the \textbf{iterative Evaluator$\rightarrow$Optimizer loop} judged using \textbf{one-shot evaluator} broken down by number of hops (2-hops and 3-plus-hops) on test data. Higher values between \textbf{Pre-Loop} and \textbf{Post-Loop} in the \textbf{Good, Very Good, Extremely Good} buckets are highlighted in green for each group.}
    \label{tab:feedback_loop_performance_twohops_threeplushops}
    
\end{table*}

\begin{table*}[ht]
    \centering
    \small
    \setlength{\tabcolsep}{6pt} 
    
    \begin{tabular}{|l|c|c|c|c|c|}
        \hline
        \multirow{4}{*}{\textbf{Metric}}
        & \multicolumn{4}{c|}{\textbf{Claude-Sonnet-4}}
        & \multicolumn{1}{c|}{\textbf{GPT-5}} \\
        \cline{2-6}
        & \multicolumn{2}{c|}{\textbf{Reference-Free}} 
        & \multicolumn{2}{c|}{\textbf{Reference-Based}}
        & \multicolumn{1}{c|}{\textbf{Reference-Based}} \\
        \cline{2-6}
        & \multicolumn{1}{c|}{\textbf{Single}}
        & \multicolumn{1}{c|}{\textbf{Deconstructed}}
        & \multicolumn{1}{c|}{\textbf{Single}}
        & \multicolumn{1}{c|}{\textbf{Deconstructed}}
        & \multicolumn{1}{c|}{\textbf{Deconstructed}} \\
        \cline{2-6}
        & \% & \% & \% & \% & \% \\
        \hline
        Dependency & 62.74 & 96.00 & 67.81 & \textcolor{blue}{\textbf{97.3}} & 93.38 \\
        Format & 39.21 & 93.46 & 53.85 & 94.02 & \textcolor{blue} {\textbf{99.23}} \\
        Step Executability & 58.36 & 72.55 & 60.24 & \textcolor{red}{\textbf{79.43}} & 65.70 \\
        Query Adherence & 39.84 & 75.82 & 47.52 & 85.42 & \textcolor{magenta}{\textbf{87.38}} \\
        Redundancy & 53.00 & 85.63 & 56.33 & 88.85 & \textcolor{blue}{\textbf{90.82}} \\
        Tool Prompt Alignment & 49.02 & 65.36 & 52.94 & \textcolor{magenta}{\textbf{81.29}} & 77.37 \\
        Tool Usage Completeness & 11.76 & 60.13 & 42.79 & \textcolor{blue}{\textbf{94.12}} & \textcolor{blue}{\textbf{94.12}} \\
        \hline
    \end{tabular}

    \caption{Evaluation of \textbf{Metric-Wise Evaluator}: Triplet Ranking Agreement (Relaxed) Between \textbf{Metric-Wise Evaluator} and \textbf{Human Annotations} on the Validation Set (N=80) for \textbf{Claude-Sonnet-4} and \textbf{GPT-5}: Percent of Correct Inequalities per Metric. The best-performing setting is highlighted for every metric. The top-performing setting is highlighted for each metric: values $\geq 90\%$ in blue, $\geq 80\%$ in magenta, and $< 80\%$ in red.}
    \label{tab:validation_result_metric_wise_eval}
    
\end{table*}

\begin{table*}[t]
\centering
\small

\caption{Evaluation of \textbf{Claude-Sonnet-4} as \textbf{One-Shot Evaluator}: Agreement in Assigning Tag Between \textbf{One-Shot Evaluator} and \textbf{Human Annotations} on the Validation Set (N=80). F1-Scores $\geq 90\%$ in blue, and $\geq 80\%$ in magenta.}
\label{tab:validation_result_one_shot_eval}
\end{table*}

\begin{table*}[t]
\centering
\small
%
\caption{Evaluation of \textbf{GPT-5} as \textbf{One-Shot Evaluator}: Agreement in Assigning Tag Between \textbf{One-Shot Evaluator} and \textbf{Human Annotations} on the Validation Set (N=80). F1-Scores $\geq 90\%$ in blue, and $\geq 80\%$ in magenta.}
\label{tab:oneshot-validation-gpt5}
\end{table*}

\begin{table*}[t]
\centering
\small
%
\caption{Spearman rank correlations between LLM rankings under \textbf{Claude-Sonnet-4} and \textbf{GPT-5} as Judges on a 50-query test subset (14 planners, prompt type = \textit{Without Lineage}). $\rho$ < 0.5, and $p-value$ > 0.1 are highlighted in red.}
\label{tab:judge-corr}
\end{table*}

\begin{table*}[t]
\centering
\small
%
\caption{Evaluation of \textbf{Step-Wise Evaluator}: Agreement in Assigning Tag Between \textbf{Step-Wise Evaluator} and \textbf{Human Annotations} on the Validation Set (N=400). F1-Scores $\geq 90\%$ in blue, $\geq 80\%$ in magenta and $< 80\%$ in red.}
\label{tab:validation_result_step_wise_eval}
\end{table*}

\begin{table*}[t]
\centering
\small
%
\caption{Evaluation of \textbf{Plan Optimizer}: Similarity Between Plans Generated By \textbf{Plan Optimizer} and \textbf{Human Annotations} (considered as reference) on the Validation Set (N=160) Computed Based on \textbf{One-Shot Evaluator}. The proportion of plans generated by the Plan Optimizer falling in the top 3 buckets - \{\textbf{Extremely Good, Very Good, Good}\} are highlighted in blue.}
\label{tab:validation_result_plan_optimizer}
\end{table*}

\begin{table*}[t]
\centering
\small  
%
\caption{Prompts Used For \textbf{Generation} of \textbf{Queries} and \textbf{One-Shot Plans}}
\label{tab:prompts_for_query_and_plan_gen}
\end{table*}

\begin{table*}[t]
\centering
\small  
%
\caption{System Prompt Used For \textbf{Step-Wise Evaluator.}}
\label{tab:prompts_for_step_wise_eval}
\end{table*}

\begin{table*}[t]
\centering
\small
%
\caption{System Prompt Used For \textbf{Plan Optimizer}}
\label{tab:prompts_for_plan_optimizer}
\end{table*}

\begin{table*}[t]
\centering
\small  
%
\caption{System Prompts Used in \textbf{Metric-Wise Evaluator} (Dependency, Format and Redundancy)}
\label{tab:prompts_for_metric_wise_eval_1}
\end{table*}

\begin{table*}[t]
\centering
\small  
%
\caption{System Prompts Used in \textbf{Metric-Wise Evaluator} (Step-Executability and Query-Adherence)}
\label{tab:prompts_for_metric_wise_eval_2}
\end{table*}

\begin{table*}[t]
\centering
\small  
%
\caption{System Prompts Used in \textbf{Metric-Wise Evaluator} (Tool-Prompt Alignment and Tool-Usage Completeness)}
\label{tab:prompts_for_metric_wise_eval_3}
\end{table*}

\begin{table*}[t]
\centering
\small
%
\caption{System Prompt Used For \textbf{One-Shot Evaluator}}
\label{tab:prompts_for_oneshot_evaluator}
\end{table*}

\begin{table*}[t]
\centering
\small 
%
\caption{System prompt for the Judge LLM used to evaluate and compare R1/R2/R3 responses in the end-to-end QA study.}
\label{tab:prompt_qa_study}
\end{table*}

\begin{table*}[t]
\centering
\small  
%
\caption{\textbf{End-to-End Lineage Guided Decomposition Evaluation Trace:} Stage-wise Generated Outputs vs. Human-Annotated Plans}
\label{tab:example_lineage_evaluation_1}
\end{table*}

\begin{table*}[t]
\centering
\small  
%
\caption{\textbf{End-to-End Lineage Guided Decomposition Evaluation Trace:} Stage-wise Generated Outputs vs. Human-Annotated Plans (Contd.)}
\label{tab:example_lineage_evaluation_2}
\end{table*}

\begin{table*}[t]
\centering
\small  
%
\caption{\textbf{End-to-End Lineage Guided Decomposition Evaluation Trace:} Stage-wise Generated Outputs vs. Human-Annotated Plans (Contd.)}
\label{tab:example_lineage_evaluation_3}
\end{table*}

\begin{table*}[t]
\centering
\small  
%
\caption{Query Examples Broken Down By \textbf{Subjective/Objective, Simple/Compound and Number of Hops.}}
\label{tab:examples_by_query_types}
\end{table*}

\section{Tool Specifications and Domain Data}
\label{app:tools}

\subsection{Tools and APIs}
All tools are internally built; we expose schemas used in our experiments.

\paragraph{Common notation.}
Each tool call takes (i) an optional list of identifiers (e.g., \texttt{call\_ids} or \texttt{interaction\_ids}) passed positionally via a placeholder like ``(k)'' to denote step-$k$ output, and (ii) a natural-language prompt.
Placeholders MUST be used whenever a step depends on prior outputs.

\subsubsection{T2S: Text-to-SQL over Snowflake}
\begin{enumerate}
    \item \textbf{Signature.} \texttt{T2S(call\_ids: list, prompt: str) -> text}  
    \item \textbf{Function.} Generates SQL from \texttt{prompt}, optionally constraining results to \texttt{call\_ids}; executes the SQL in Snowflake and returns a natural-language summary.
    \item \textbf{Strengths.} Counts, trends, rollups over \emph{structured} contact-center data.  
    \item \textbf{Limits.} Does not track \emph{within-call} temporal dynamics (e.g., sentiment \emph{shifts}); those require RAG.
\end{enumerate}

\subsubsection{RAG: Retrieval-Augmented Generation over Transcripts}
\begin{enumerate}
    \item \textbf{Signature.} \texttt{RAG(call\_ids: list, prompt: str) -> text + table}  
    \item \textbf{Function.} Retrieves and analyzes customer--agent transcripts, returning (i) a textual answer and (ii) a \emph{Data Insights} table with columns:
    \begin{itemize}
        \item \texttt{Topic Name}, \texttt{Topic Description}
        \item \texttt{Interaction Ids} (list)
        \item \texttt{Interactions (\%)} (sample-relative)
    \end{itemize}
    \item \textbf{Sampling.} For broad thematic prompts, RAG analyzes a sample (e.g., $\sim$200 interactions); percentages are with respect to the sample.  
    \item \textbf{Strengths.} Conversational content, phrasing, subjective judgments, \emph{within-call} events (e.g., sentiment shifts).  
    \item \textbf{Limits.} Sample-based; ID extraction for downstream tools often requires an LLM reformat step.
\end{enumerate}

\subsubsection{LLM: Synthesis and Reformatting}
\begin{enumerate}
    \item \textbf{Signature.} \texttt{LLM(prompt: str) -> text}
    \item \textbf{Function.} (i) merges multi-tool evidence; (ii) reformats outputs (e.g., extract \texttt{interaction\_ids} from RAG tables); (iii) performs lightweight computations (set ops, joins of previously returned aggregates); (iv) synthesizes final answers.
\end{enumerate}

\subsection{Domain Data Fields in Snowflake}
\label{app:domain-fields}
\begin{description}
    \item[Interaction/Call metadata:]
    channel (\texttt{call/chat/email}), timestamps, agent, team, customer/merchant/dasher tags.
    \item[Call drivers (\emph{interaction drivers}):]
    normalized reason categories (e.g., payment issues, delivery delays, missing items, account access, order status).
    \item[Moments:]
    Events of interest extracted from agent-customer transcripts such as escalation, transfer, \texttt{sentiment\_tag}, \texttt{abusive\_interaction}, \texttt{issue\_resolved}, \texttt{follow\_up\_required}.%
    \footnote{Some moments (e.g., \emph{shift} from negative$\to$positive) are only detectable in transcripts and thus via RAG; Snowflake typically stores \emph{call-level} sentiment tags.}
    \item[QA metrics:]
    overall score and category-level dimensions (e.g., Compliance \& PII Security, Case Handling \& Procedural Adherence, Timeliness, Professionalism, Empathy, Closing/Wrap-up, Communication Clarity), plus pass/fail flags for specific checks (e.g., “agent did not end call properly”). The category-level dimensions correspond to questions which are sourced from call evaluation forms, each designed to assess various aspects of agent performance during interactions.
\end{description}

\subsection{Execution Constraints and Best Practices}
\label{app:constraints}
\begin{itemize}
    \item \textbf{DAG:} Dependencies $D_k$ must form a DAG. Steps may execute in parallel when their dependencies are satisfied.
    \item \textbf{No redundant filtering:} If step $k$ consumes IDs produced by step $i$, $p_k$ should not restate filters used to produce those IDs (avoids unnecessary joins/timeouts).
    \item \textbf{RAG $\to$ ID extraction:} When a downstream step needs identifiers from a RAG table, insert an LLM step to extract/format the IDs (e.g., list of \texttt{interaction\_ids}).
    \item \textbf{Tool capability guardrails:} Use RAG for \emph{within-call} dynamics (e.g., sentiment shifts). Use T2S for structured aggregates (counts, trends, QA rollups).
\end{itemize}

\section{Full Example Lineage}
\label{app:lineage-example}
\noindent\textbf{Query.} \emph{``Analyze calls flagged as unresolved where the customer's sentiment transitioned from negative to positive within the transcript, and compare agent QA scores for resolution procedures versus professionalism.''}

\noindent\textbf{Initial $\rightarrow$ Final lineage (abbrev.).}
\begin{enumerate}
    \item \emph{P\textsuperscript{(0)} (weak):} Single-step T2S mixing unresolved status \emph{and} within-call sentiment shift (unsupported) $\Rightarrow$ timeout/low fidelity.
    \item \emph{P\textsuperscript{(1)}:} Split unresolved filtering (T2S) from sentiment shift (RAG), but missing ID extraction; downstream T2S cannot accept RAG table.
    \item \emph{P\textsuperscript{(2)}:} Insert LLM to extract \texttt{interaction\_ids} from RAG’s Data Insights; add QA rollups via T2S; missing final synthesis.
    \item \emph{P\textsuperscript{(3)} (best):} 6-step plan as in \S\ref{lst:example-plan} example: unresolved filter (T2S) $\rightarrow$ shift detection (RAG) $\rightarrow$ ID extraction (LLM) $\rightarrow$ QA rollups (T2S) $\rightarrow$ synthesis (LLM).
\end{enumerate}

\section{Dataset Generation Details}
\label{app:dataset-details}

\subsection{Query Generation}
\label{app:query-gen}
We generate a pool of queries stratified across:
\begin{itemize}
  \item \textbf{Subjectivity:} objective (counts, durations, resolution rates) vs.\ subjective (themes, phrasing, behaviors, sentiment).
  \item \textbf{Compoundness:} simple (single ask) vs.\ compound (multiple asks or comparisons).
\end{itemize}
This controls difficulty and ambiguity while matching realistic contact-center analytics needs.

\subsection{Initial Plan Generation}
\label{app:initial-plan}
We craft a two-part prompt:
\begin{enumerate}
  \raggedright
  \item \textbf{System prompt:} task definition; JSON schema for plans (step, depends\_on); tool capabilities and guardrails (T2S/RAG/LLM).
  \item \textbf{User prompt:} formatting constraints, placeholder rules, dependency DAG requirement, and few-shot exemplars.
\end{enumerate}
We ablate tool-description verbosity (low/medium/high) and few-shot counts (1–15). The \emph{medium} tool detail with $6$–$8$ examples consistently avoids under-specification and prompt overload. The prompts used for this are provided in Table~\ref{tab:prompts_for_query_and_plan_gen}.

\subsection{Iterative Evaluator\texorpdfstring{$\rightarrow$}{->}Optimizer Loop}
\label{app:iterative-loop}
The loop refines a plan without executing tools. It consists of:
\paragraph{Step-wise Evaluator.}
Inspects each step’s tool, prompt, and dependencies; emits diagnostic tags with justifications:
\begin{itemize}
  \item \textbf{Incorrect tool} (capability mismatch).
  \item \textbf{Complex prompt} (needs decomposition).
  \item \textbf{Repeated detail} (repeating filters already implied by prior IDs).
  \item \textbf{Multi-tool prompt} (eligible for T2S and RAG; consider dual coverage).
  \item \textbf{Incorrect prompt} (format-related errors, incorrect or insufficient information in the prompt).
  \item \textbf{No change}.
\end{itemize}

\paragraph{Plan Optimizer.}
Consumes diagnostics and makes two passes:
\begin{itemize}
  \item \textbf{Change 0} (local fix): apply minimal edits per tag (e.g., replace tool, simplify prompt, remove redundant filter).
  \item \textbf{Change 1} (coherence fix): repair global consistency (dependencies, placeholders), add missing split steps, merge redundancies.
\end{itemize}
Each revision is stored, forming the \emph{plan lineage} for the query.

\subsection{Plan Refinement via Evaluator\texorpdfstring{$\rightarrow$}{->}Optimizer Loop}
\label{app:loop-details}
\paragraph{Overview.}
Given an initial plan $P^{(0)}$, we iteratively improve its structure and correctness without executing tools. At each iteration, a \emph{Step-wise Evaluator} diagnoses issues at the step level; a \emph{Plan Optimizer} then applies edits and emits a revised plan, which is stored in the \emph{lineage}.

\paragraph{Step-wise Evaluator.}
Input: a single step (tool + prompt) and its declared dependencies. Output: a \emph{set} of tags plus brief rationales. Supported tags:
\begin{itemize}
  \item \textsc{IncorrectTool} — tool capability and prompt intent mismatch.
  \item \textsc{ComplexPrompt} — prompt needs decomposition into simpler atomic steps.
  \item \textsc{RepeatedDetail} — repeated filters (e.g., re-filtering on a criterion already used to create the call/interaction ID set).
  \item \textsc{MultiToolPrompt} — task can (or should) be covered by both T2S and RAG for completeness.
  \item \textsc{IncorrectPrompt} — Prompt has discrepancies such as format-related issues (parentheses, placeholders, or dependency notation errors) and insufficient or incorrect information.
  \item \textsc{NoChange} — the step is acceptable as-is.
\end{itemize}
Notably, the evaluator \emph{does not} receive the original query as input (empirically improving consistency).

\paragraph{Plan Optimizer.}
Input: current plan $P$, step index $i$, diagnostic tags and rationales. Output: revised plan $P'$. The optimizer performs:
\begin{enumerate}
  \item \textbf{Change 0 (local repair):} swap misaligned tools, simplify/decompose complex prompts, remove redundant filters, fix format.
  \item \textbf{Change 1 (coherence repair):} rewire dependencies/placeholders for a valid DAG, split/merge steps if required, ensure that outputs used downstream have a unique producer and are referenced exactly once where intended.
\end{enumerate}
Each time $P' \neq P$, we append $P'$ to the lineage.

\paragraph{Control flow and guarantees.}
Algorithm~\ref{alg:plan-optimizer} implements a bounded-pass scan: we either advance to the next step when length is unchanged or re-check the same index when structure changes. The outer guard ($pass<M$) ensures termination; see Lemma~\ref{lem:optimizer-termination}. The loop improves plans textually and deterministically modulo LLM sampling settings (we fix temperature and seeds during curation).

\clearpage
\begin{algorithm}[H]
\caption{Planner Feedback Loop: Step-wise Evaluator $\rightarrow$ Plan Optimizer}
\label{alg:plan-optimizer}
\begin{algorithmic}[1]
\Require Initial plan $P^{(0)} = \{p_1, \dots, p_n\}$
\Ensure Optimized plan $P^\star$ and full lineage $\mathcal{L}$
\State $P \gets P^{(0)}$;\quad $\mathcal{L} \gets [P]$ \Comment{store lineage states}
\State $max\_passes \gets M$ \Comment{e.g., $M{=}4$}
\State $pass \gets 0$;\quad $changed \gets \textbf{true}$
\While{$changed = \textbf{true}$ \textbf{and} $pass < max\_passes$}
    \State $changed \gets \textbf{false}$;\quad $i \gets 1$;\quad $\ell \gets \text{length}(P)$
    \While{$i \le \ell$}
        \State $step \gets P[i]$;\quad $deps \gets \textsc{DepsOf}(P, i)$
        \State $(tags, rationale) \gets \textsc{StepwiseEvaluate}(step, deps)$
        \If{$tags = \{\textsc{NoChange}\}$}
            \State $i \gets i + 1$
        \Else
            \State $P' \gets \textsc{PlanOptimize}(P, i, tags, rationale)$
            \If{ $P \neq P'$}
                \If{$\text{length}(P') = \ell$}
                    \State $P \gets P'$
                    \State append $P$ to $\mathcal{L}$
                    \State $i \gets i + 1$
                \Else
                    \State $P \gets P'$
                    \State append $P$ to $\mathcal{L}$
                    \State $\ell \gets \text{length}(P)$ \Comment{re-check the same $i$ on next loop}
                \EndIf
                \State $changed \gets \textbf{true}$
            \Else
                \State $i \gets i + 1$
            \EndIf
        \EndIf
    \EndWhile
    \State $pass \gets pass + 1$
\EndWhile
\State \Return $P^\star \gets P$, $\mathcal{L}$
\end{algorithmic}
\end{algorithm}

\newtheorem{lemma}{Lemma}
\begin{lemma}[Termination]
\label{lem:optimizer-termination}
The feedback loop in Algorithm~\ref{alg:plan-optimizer} terminates in at most $M$ passes. Within each pass, the inner scan either (i) advances the step index $i$ or (ii) applies a finite structural change to the plan that is immediately recorded and re-checked; the outer guard $pass<M$ guarantees termination.
\end{lemma}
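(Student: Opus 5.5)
The plan is to separate the claim into an \emph{outer} bound, which is purely syntactic, and an \emph{inner} bound, which needs a mild assumption about the oracle calls. I will treat \textsc{StepwiseEvaluate} and \textsc{PlanOptimize} as black boxes that each return in finite time (one LLM call each).

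\textbf{Outer loop.} In Algorithm~\ref{alg:plan-optimizer}, the counter $pass$ starts at $0$ and is incremented exactly once at the end of every iteration of the outer \textbf{while}. It is never decremented. Since the guard requires $pass<M$, the outer body runs at most $M$ times, whatever the value of $changed$. So, \emph{provided every inner scan halts}, the algorithm halts after at most $M$ passes. In the $changed=\textbf{false}$ case it may stop even earlier, which is the first termination criterion of \S\ref{sec:dataset}.

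\textbf{Inner scan.} Each inner iteration falls into one of two cases, matching (i) and (ii) of the statement.
\begin{itemize}
\item \emph{Case (i).} The index $i$ advances. This happens on a \textsc{NoChange} tag, on $P'=P$, or on a same-length edit.
\item \emph{Case (ii).} $P$ is replaced by a $P'$ of different length. The index $i$ is kept and the bound $\ell$ is reset to $|P'|$. Every such $P'$ is appended to $\mathcal{L}$.
\end{itemize}
I would use the lexicographic potential $\Phi=(r,\ \ell-i)$, where $r$ is the number of case-(ii) events still permitted in the current pass. A case-(i) iteration strictly decreases $\ell-i\ge 0$ and leaves $r$ fixed. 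A case-(ii) iteration strictly decreases $r$, even though $\ell-i$ may go up. Hence $\Phi$ decreases in a well-founded order, and the scan halts once $r$ is shown to be finite.

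\textbf{Main obstacle: finiteness of case (ii).} As written, the pseudocode places no cap on structural edits at a fixed index. An adversarial optimizer could alternate between a split and a merge forever. My first attempt is to use the fact that plans are bounded: at most 15 steps, with prompts of bounded length over a finite vocabulary. This makes the set of admissible plans finite. If, in addition, the optimizer never returns a plan already recorded in $\mathcal{L}$, then each case-(ii) event adds a new element of a finite set, so $r$ is finite. Since $\mathcal{L}$ is described in the paper as a sequence of distinct plans, this condition is intended but not enforced by the pseudocode. I would therefore state it explicitly as a hypothesis, and note that it can be enforced at negligible cost by rejecting any $P'\in\mathcal{L}$. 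Under deterministic decoding, which the paper fixes, the same no-revisit check also rules out repeating cycles.

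Combining the two parts gives: every pass halts, at most $M$ passes occur, and $P^\star$ and $\mathcal{L}$ are returned.
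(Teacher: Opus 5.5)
Your outer-loop argument is the paper's: $pass$ is incremented once per outer iteration, so the guard $pass<M$ caps the number of passes. The paper's sketch stops there. It notes that a length-changing edit makes the scan re-check the same $i$, but never bounds how often that can happen. It therefore tacitly assumes every inner scan halts; its opening ``each pass is bounded by the guard $pass<M$'' conflates bounding the number of passes with bounding each pass.

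Your lexicographic potential, together with an explicit finiteness hypothesis on length-changing edits, fills exactly this hole. Your split/merge example shows that some such hypothesis is unavoidable: for Algorithm~\ref{alg:plan-optimizer} as written, with \textsc{PlanOptimize} a black box, the inner \textbf{while} need not halt. So your route gives a genuine proof of the inner bound at the cost of a conditional statement, whereas the paper's route only appears to give the unconditional one.

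Two refinements. First, the 15-step bound describes the benchmark's gold plans, not intermediate optimizer outputs, so finiteness of the output space is better grounded in the optimizer's 4096-token cap (Table~\ref{tab:decoding}). Second, deterministic decoding does not help by itself: under determinism a split/merge alternation is a genuine infinite loop that only the no-revisit check breaks.

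A simpler hypothesis is already in the paper. App.~\ref{app:model-configs} states that the authors cap the per-pass budget of LLM calls and wall time ``to avoid degenerate loops''. Taking that cap as your hypothesis gives inner termination immediately, with no finiteness or no-revisit argument, and matches what the authors actually rely on in practice.
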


\begin{proof}[Proof sketch]
Each pass is bounded by the guard $pass<M$. Inside a pass, whenever the plan changes, the lineage is updated and the scan either advances to $i{+}1$ (no length change) or re-checks the same $i$ (length change); when no changes occur for a full pass, the loop exits. Hence the procedure stops after at most $M$ passes.
\end{proof}

\subsection{Human Verification}
\label{app:human-verification}
Expert annotators review the final plan in each lineage; minor edits (if needed) produce the \emph{best} plan. All lineage states are retained for analysis/training.

\subsection{Number of Hops: Definition and Computation}
\label{app:hops-definition}
We define \emph{number of hops} from the dependency structure of the best possible plan for a query. Let the plan be a Directed Acyclic Graph (DAG) where each node is a step and edges point from a prerequisite step to its consumer.

\paragraph{Categories.}
\begin{itemize}
    \item \textbf{Zero-hop:} No dependencies. Direct tool calls (e.g., a single \texttt{T2S} or \texttt{RAG} step) produce the final answer; multiple independent producers also qualify if no step depends on another.
    \item \textbf{One-hop:} Exactly one dependency layer. Typical pattern: an \texttt{LLM} synthesis step depends on one or more independent producer steps (e.g., \texttt{T2S} and \texttt{RAG}), neither of which depends on prior steps.
    \item \textbf{Two-hop:} Two sequential dependency layers. Example: a final \texttt{LLM} step depends on \texttt{RAG}/\texttt{T2S} analysis steps, each of which depends on a \texttt{T2S} filtering step that scopes the interaction IDs.
    \item \textbf{Three-plus:} Three or more sequential dependency layers; e.g., filter \texorpdfstring{$\rightarrow$}{->} enrich \texorpdfstring{$\rightarrow$}{->} analyze \texorpdfstring{$\rightarrow$}{->} synthesize.
\end{itemize}

\paragraph{Operational computation.}
We compute hop count as the maximum dependency depth among steps that contribute to the final answer:

\begin{gather}
\text{depth}(s)=
\begin{cases}
0, \text{ if }\mathrm{depends\_on}(s)=\varnothing,\\
1+\max\limits_{p \in \mathrm{depends\_on}(s)} \text{depth}(p)
\end{cases}
\\
\text{hops}=\max_{s \in S_{\mathrm{final}}} \text{depth}(s).
\end{gather}

where $S_\text{final}$ are sink steps (no consumers) or designated answer-producing steps (e.g., the final \texttt{LLM} synthesis). 

\paragraph{Examples.}
\begin{enumerate}
    \item \texttt{T2S([], ``Compute average call duration last month'')} $\Rightarrow$ \textbf{zero-hop}.
    \item \texttt{RAG((1), ...)} and \texttt{T2S((1), ...)} not used; instead two independent producers \texttt{RAG([], ...)}, \texttt{T2S([], ...)} feeding an \texttt{LLM} $\Rightarrow$ \textbf{one-hop}.
    \item \texttt{T2S([], ``Fetch interaction\_ids ...'')} $\rightarrow$ \{\texttt{RAG((1), ...)}, \texttt{T2S((1), ...)}\} $\rightarrow$ \texttt{LLM(...)} $\Rightarrow$ \textbf{two-hop}.
    \item Extend with another dependent analysis layer before synthesis $\Rightarrow$ \textbf{three-plus}.
\end{enumerate}

We compute this hop count algorithmically for each gold plan and use it as metadata when analyzing model performance by reasoning depth.

\subsection{Module Set and Validation Protocols}
\label{app:modules}
We employ four LLM-driven modules:
\begin{enumerate}
  \item \textbf{Metric-wise evaluator} (7 metrics; see \S\ref{sec:evaluation}).
  \item \textbf{One-shot evaluator} (reference-based: precision/recall/F1 + format; 7-point rating; see \S\ref{sec:evaluation})
  \item \textbf{Step-wise evaluator} (diagnostics tags; see \S\ref{sec:loop-brief}).
  \item \textbf{Plan optimizer} (Change 0/1; lineage revision; see \S\ref{sec:loop-brief}).
\end{enumerate}
Their prompts and evaluation against human ground truth are reported in Tables~\ref{tab:prompts_for_step_wise_eval}--\ref{tab:prompts_for_oneshot_evaluator} and Tables~\ref{tab:validation_result_metric_wise_eval}--\ref{tab:validation_result_plan_optimizer} respectively.

\section{Evaluation Details}
\label{app:evaluation}

\subsection{Metric Definitions and Rubrics}
\label{app:metric-details}

All seven metrics are computed by LLM evaluators using task-specific rubric prompts. Below we summarize decision rules; full prompts are included in Tables~\ref{tab:prompts_for_metric_wise_eval_1}--\ref{tab:prompts_for_metric_wise_eval_3}.

\paragraph{M1: Tool–Prompt Alignment (20 pts).}
Checks that the step’s prompt lies within the assigned tool’s capabilities. Penalize:
(i) sentiment \emph{changes within calls} assigned to T2S (use RAG instead);
(ii) filtering calls based on their duration assigned to RAG (use T2S instead);
(iii) prompting LLM to answer questions related to QA scores of agents (use T2S instead).
Scoring: The LLM evaluator assesses each step as pass or fail, computes the total number of passed steps, divides by the total number of steps, and then scales the result to a maximum of 20.

\paragraph{M2: Format Correctness (20 pts).}
Plan must be JSON parseable. Every dependent step must contain a correct numeric placeholder (e.g., \texttt{(2)}); quotes and parentheses must balance. Four standard placeholders are defined which are as follows:
\begin{itemize}
    \item \texttt{(query)}: Refers to the original query.
    \item \texttt{(<step\_id>)}: Refers to the output from step <step\_id>.
    \item \texttt{(tool <step\_id>))}: Refers to the name of the tool \{T2S, RAG, LLM\} assigned to the step <step\_id>).
    \item \texttt{(sub-query <step\_id>)}: Refers to the prompt passed to the tool assigned to the step <step\_id>.
\end{itemize}
The aforementioned standard placeholders should be used correctly in the plan.
Scoring: The LLM evaluator assesses each step as pass or fail based on the above rules, computes the total number of passed steps, divides by the total number of steps, and then scales the result to a maximum of 20.

\paragraph{M3: Step Executability / Atomicity (15 pts).}
Each step should be executable using a single tool call. Compound prompts—such as those given to RAG (e.g., “How did sentiment change \emph{and} what did agents do?”) or to T2S (e.g., “Analyze the trends in customer sentiment scores based on the primary call driver categories over the past six months.”)—are penalized. Compound prompts sent to RAG often produce incomplete responses, while those sent to T2S can result in timeouts, as complex instructions translate into SQL queries that are prone to exceeding execution limits in Snowflake. Steps that are explicitly decomposed into atomic units are rewarded.

Scoring: The LLM evaluator marks each step as pass or fail according to these rules, calculates the total number of passed steps, divides by the total number of steps, and scales the result to a maximum of 15 points.

\paragraph{M4: Query Adherence (15 pts).}
Judge whether the plan, if executed perfectly, answers the query fully and uses the correct modality (“calls” vs.\ “interactions”).

Scoring: The LLM evaluator returns a score in \{0, 0.5, 1\} depending on how well the generated plan adheres to the given query which is then scaled to a maximum of 15 points.

\paragraph{M5: Dependencies (10 pts).}
Every step that uses prior outputs has \texttt{depends\_on} populated and the prompt includes the correct placeholder(s). Penalize missing/incorrect placeholders and unnecessary edges.

Scoring: The LLM evaluator marks each step as pass or fail according to these rules, calculates the total number of passed steps, divides by the total number of steps, and scales the result to a maximum of 10 points.

\paragraph{M6: Redundancy (10 pts).}
Detect duplicated work or repeated filters. Typical anti-pattern: T2S step $(1)$ filters “delivery delay,” followed by T2S$(1)$ that again says “for the calls with delivery delay,” causing unnecessary joins. Deduct per occurrence.

Scoring: The LLM evaluator marks each step as pass or fail according to these rules, calculates the total number of passed steps, divides by the total number of steps, and scales the result to a maximum of 10 points.

\paragraph{M7: Tool-Usage Completeness (10 pts).}
When a step’s goal clearly merits dual-tool coverage (e.g., sentiments exist in transcripts and structured summaries), penalize plans using only one.

Scoring: The LLM evaluator marks each step as pass or fail depending on whether there's a violation with respect to tool-usage completeness, calculates the total number of passed steps, divides by the total number of steps, and scales the result to a maximum of 10 points.

\paragraph{Reference-free vs.\ with-reference split.}
All metrics are reference-based, as leveraging the reference (best possible) plan significantly enhances their performance. Table~\ref{tab:validation_result_metric_wise_eval} provides a comparison between reference-based and reference-free evaluators.

\subsection{One-Shot Overall Evaluation via Judge LLM}
\label{app:oneshot-details}

We evaluate a candidate plan $P$ against the best plan $P^\star$ using a Judge LLM prompted to compute:
\begin{itemize}
    \item \textbf{Precision:} \% of steps in $P$ present in $P^\star$
    \item \textbf{Recall:} \% of steps in $P^\star$ present in $P$
    \item \textbf{F$_1$}
\end{itemize}
The following are assessed on $P$ alone: \textbf{Format Correctness}, \textbf{Dependencies} \& \textbf{Placeholders}.
The Judge LLM then assigns a seven-point rating based on these scores, with the rule that non-JSON plans are rated \emph{Extremely Bad} irrespective of $F_1$.

\paragraph{Prompts.}
We use the following prompts (redacted for brevity here; full text in Table~\ref{tab:prompts_for_oneshot_evaluator}):
\begin{itemize}
    \item \textbf{System:} defines plan format, placeholder conventions, tool descriptions, six dimensions to score, rating rubric, and output schema.
    \item \textbf{User:} provides the query, candidate $P$, and best plan $P^\star$, plus reference examples.
\end{itemize}

The Judge LLM computes step matches and metrics per rubric in the system prompt (no hand-coded matching in our pipeline). Output must be valid JSON with both a rationale block and a score block.

\subsection{Aggregation and Adjudication}
All evaluators are LLM-based with deterministic decoding (Refer Table~\ref{tab:decoding} for LLM configurations). A random sample per split is manually audited to calibrate thresholds and check stability across model updates.

\subsection{Learning Metric Weights}
\label{app:weight-learning}

\paragraph{Setting.}
Each plan $P$ is scored on $M{=}7$ metrics, where the LLM evaluators return
raw sub-scores $s_m(P)\in[0,1]$ for $m\in\{1,\dots,7\}$. We learn nonnegative
\emph{weights} $w_m\!\ge\!0$ that represent the maximum points allocated to
each metric. The total score is
\[
S(P;\mathbf{w}) \;=\; \sum_{m=1}^{7} w_m\, s_m(P).
\]
We partition metrics into two groups: Effectiveness $\mathcal{E}$ (4 metrics) and
Efficiency $\mathcal{F}$ (3 metrics).

\paragraph{Constraints (fixed group budgets).}
We enforce point budgets at the group level,
\[
\sum_{m\in\mathcal{E}} w_m \;=\; B_{\mathcal{E}},\qquad
\sum_{m\in\mathcal{F}} w_m \;=\; B_{\mathcal{F}},
\]
with $B_{\mathcal{E}}{=}70$ and $B_{\mathcal{F}}{=}30$ in our experiments.%
\footnote{Equivalently, one can work on the probability simplex with a 70:30
split and rescale to points post hoc.}

\paragraph{Monotonic lineage objective.}
For each validation query $q$, we consider a lineage triple
$\big(P^{(q)}_{\text{best}},\,P^{(q)}_{\text{pen}},\,P^{(q)}_{\text{ante}}\big)$
derived from human annotations, where the intended aggregate ordering is
$\text{best} \succ \text{pen} \succ \text{ante}$. This ordering \emph{need not}
hold metric-wise; it is enforced only at the weighted sum level. We seek weights
$\mathbf{w}$ that maximize a global margin $\gamma\ge0$ while imposing strict,
margin-based monotonicity:
\begin{align*}
S\big(P^{(q)}_{\text{best}};\mathbf{w}\big)
&\ge S\big(P^{(q)}_{\text{pen}};\mathbf{w}\big) + \gamma, \quad \forall q, \\
S\big(P^{(q)}_{\text{pen}};\mathbf{w}\big)
&\ge S\big(P^{(q)}_{\text{ante}};\mathbf{w}\big) + \gamma, \quad \forall q.
\end{align*}
Let $\Delta s^{(q)}_{A\triangleright B,m} \!=\! s_m(P^{(q)}_A)-s_m(P^{(q)}_B)$.

\paragraph{Optimization program.}
We solve the following linear program (LP):
\[
\begin{aligned}
\max_{\mathbf{w},\,\gamma}\quad & \gamma \\
\text{s.t.}\quad
& \sum_{m\in\mathcal{E}} w_m = B_{\mathcal{E}}, \\
& \sum_{m\in\mathcal{F}} w_m = B_{\mathcal{F}}, \\
& w_m \ge 0, \quad \forall m, \\
& \sum_{m=1}^{7} w_m\, \Delta s^{(q)}_{\text{best}\,\triangleright\,\text{pen},m} \ge \gamma, \quad \forall q, \\
& \sum_{m=1}^{7} w_m\, \Delta s^{(q)}_{\text{pen}\,\triangleright\,\text{ante},m} \ge \gamma, \quad \forall q.
\end{aligned}
\]
If the LP is infeasible (rare; e.g., due to noisy labels), we solve a hinge-relaxed
variant with slacks $\xi_q,\zeta_q\ge0$ and penalty $C>0$:
\[
\begin{aligned}
\max_{\mathbf{w},\,\gamma,\,\xi,\,\zeta}\quad
& \gamma \;-\; C \sum_q (\xi_q+\zeta_q) \\
\text{s.t.}\quad
& \sum_{m\in\mathcal{E}} w_m = B_{\mathcal{E}}, \\
& \sum_{m\in\mathcal{F}} w_m = B_{\mathcal{F}}, \\
& w_m \ge 0, \quad \forall m, \\
& \sum_{m} w_m\, \Delta s^{(q)}_{\text{best}\,\triangleright\,\text{pen},m} \ge \gamma - \xi_q, \\
& \sum_{m} w_m\, \Delta s^{(q)}_{\text{pen}\,\triangleright\,\text{ante},m} \ge \gamma - \zeta_q, \\
& \xi_q \ge 0, \quad \zeta_q \ge 0, \quad \forall q.
\end{aligned}
\]

\paragraph{Practical procedure and quantization.}
We implement a coarse grid search on each group simplex with step size $0.02$,
selecting $\mathbf{w}$ that maximizes the number of satisfied inequalities and,
secondarily, the median margin. We then (i) rescale the groupwise weights to the
point budgets $(B_{\mathcal{E}},B_{\mathcal{F}})$ and (ii) quantize to the desired
lattice (e.g., integers or multiples of 5) by greedy round-and-adjust while preserving group sums and non-negativity. We
retain the quantized $\mathbf{w}$ with minimal deviation from the continuous
solution and revalidate the margin constraints.

\paragraph{Final weights used.}
In all experiments we use the following quantized weights (total $100$ points):
\emph{TP Alignment} $=20$, \emph{Format} $=20$,
\emph{Step Executability} $=15$, \emph{Query Adherence} $=15$,
\emph{Dependency} $=10$, \emph{Redundancy} $=10$,
\emph{Tool-Usage Completeness} $=10$.
This matches the $70{:}30$ Effectiveness/Efficiency budget and is consistent with
the learned proportions after quantization.

\paragraph{Why lineage-based weights?}
This objective encodes the intended ordinal relationship among the \emph{top} plans per query, ensuring the evaluator’s total score reflects true quality progression while respecting the Effectiveness/Efficiency budget. It reduces sensitivity to absolute metric scales and emphasizes rank-consistent scoring.

\subsection{Weight Sensitivity Analysis For Metric-Wise Evaluator}
\label{app:weight-sensitivity}

Our main aggregate score $S_{\text{learned}}$ uses metric-wise evaluator outputs and weights learned from human-preferred lineage triples under the 70{:}30 Effectiveness--Efficiency budget (App.~\ref{app:weight-learning}). To assess the robustness of planner rankings to this choice of weights, we conduct a small-scale sensitivity analysis on the same set of query–plan pairs used for metric-wise evaluation (14 LLMs $\times$ 2 prompt settings).

\paragraph{Schemes.}
For each plan, we first normalize each metric by its maximum possible value (Dependency, Format, Tool-Prompt Alignment, Redundancy, Tool-Usage Completeness, Step Executability, Query Adherence). We then consider:
\begin{enumerate}
    \item \textbf{Equal-weights scheme}:
    each normalized metric receives the same weight, and the aggregate is the average of normalized scores rescaled to $[0,100]$.
    \item \textbf{Random-weight schemes}:
    we sample ten random weight vectors that preserve the 70{:}30 Effectiveness--Efficiency budget by drawing Dirichlet distributions separately over the Effectiveness metrics (TP Alignment, Format, Step Executability, Query Adherence) and the Efficiency metrics (Dependency, Redundancy, Tool-Usage Completeness). Each random draw yields a new aggregate score $S_{\text{rand}}$.
\end{enumerate}

For each prompt type (with-lineage vs.\ without-lineage), we average scores per LLM and compute Spearman rank correlation between the rankings induced by $S_{\text{learned}}$ (the score used in the main text) and those from the alternative schemes.

\paragraph{Results.}
Table~\ref{tab:weight-sensitivity-rho} reports rank correlations. Under equal weights, planner rankings are very close to the learned-weight rankings ($\rho=0.934$ with-lineage; $\rho=0.894$ without-lineage). Across ten random draws that respect the 70{:}30 budget, the median correlation remains high ($\rho_{\text{med}}=0.890$ with-lineage; $\rho_{\text{med}}=0.842$ without-lineage), and even the worst random draws retain moderate agreement ($\rho_{\min}=0.736$ and $0.503$ respectively). Table~\ref{tab:weight-sensitivity-scores} lists per-LLM scores under the learned, equal, and random-weight schemes. Overall, high-ranked LLMs remain high-ranked, and only minor reorderings occur among mid-tier models, indicating that our LLM-level conclusions are robust to reasonable variations in metric weights and do not depend on a single, hand-picked setting.

\begin{table*}[t]
\small
\centering
\begin{tabular}{lcccc}
\toprule
\textbf{Prompt} & \textbf{$\rho_\text{equal}$} & \textbf{$\rho_\text{rand}^{\text{min}}$} & \textbf{$\rho_\text{rand}^{\text{med}}$} & \textbf{$\rho_\text{rand}^{\text{max}}$} \\
\midrule
With Lineage & 0.934 & 0.736 & 0.890 & 0.947 \\
Without Lineage               & 0.894 & 0.503 & 0.842 & 0.873 \\
\bottomrule
\end{tabular}
\caption{Spearman rank correlations between LLM rankings under the learned aggregate score and alternative metric-weighting schemes: equal weights over normalized metrics, and ten random 70{:}30 Effectiveness--Efficiency draws.}
\label{tab:weight-sensitivity-rho}
\end{table*}

\begin{table*}[t]
\small
\centering
\begin{tabular}{llrrrrr}
\toprule
\textbf{LLM} & \textbf{Prompt} & \textbf{Learned} & \textbf{Equal} & \textbf{Rand (min)} & \textbf{Rand (median)} & \textbf{Rand (max)} \\
\midrule
claude-3-7-sonnet              & With Lineage & 84.80 & 77.01 & 70.70 & 79.36 & 87.36 \\
llama4-maverick-17b-instruct   & With Lineage & 82.26 & 75.55 & 67.99 & 77.07 & 85.77 \\
gpt-4o                         & With Lineage & 81.47 & 74.37 & 67.56 & 75.43 & 83.78 \\
claude-sonnet-4                & With Lineage & 79.90 & 72.64 & 65.34 & 73.26 & 80.82 \\
llama3-3-70b-instruct          & With Lineage & 79.77 & 75.07 & 68.06 & 77.28 & 85.95 \\
nova-pro                       & With Lineage & 79.24 & 75.24 & 67.92 & 77.06 & 85.09 \\
nova-micro                     & With Lineage & 77.28 & 74.08 & 65.71 & 75.87 & 86.84 \\
gpt-4o-mini                    & With Lineage & 77.26 & 72.36 & 65.80 & 73.87 & 80.30 \\
gpt-4.1-nano                   & With Lineage & 77.16 & 71.13 & 64.63 & 72.35 & 78.33 \\
o3-mini                        & With Lineage & 71.85 & 68.49 & 57.02 & 71.13 & 77.48 \\
claude-3-5-haiku               & With Lineage & 71.27 & 65.56 & 59.04 & 66.80 & 71.92 \\
nova-lite                      & With Lineage & 70.53 & 68.31 & 59.85 & 69.89 & 77.31 \\
llama3-2-3b-instruct           & With Lineage & 70.51 & 66.51 & 57.26 & 66.64 & 77.71 \\
llama3-2-1b-instruct           & With Lineage & 20.27 & 20.25 & 7.06  & 16.22 & 32.81 \\
claude-3-7-sonnet              & Without Lineage   & 83.33 & 75.90 & 71.17 & 81.45 & 85.74 \\
gpt-4o                         & Without Lineage   & 82.82 & 74.90 & 69.81 & 79.74 & 83.43 \\
gpt-4o-mini                    & Without Lineage   & 82.78 & 75.00 & 72.01 & 79.55 & 86.51 \\
nova-pro                       & Without Lineage   & 82.70 & 75.82 & 70.55 & 80.20 & 88.05 \\
llama4-maverick-17b-instruct   & Without Lineage   & 82.50 & 74.88 & 72.07 & 80.03 & 83.49 \\
nova-micro                     & Without Lineage   & 82.07 & 74.68 & 73.36 & 79.95 & 84.67 \\
claude-3-5-haiku               & Without Lineage   & 82.06 & 75.71 & 67.93 & 80.34 & 87.88 \\
llama3-3-70b-instruct          & Without Lineage   & 81.11 & 74.48 & 68.04 & 78.82 & 85.29 \\
claude-sonnet-4                & Without Lineage   & 77.98 & 71.67 & 58.04 & 75.33 & 82.27 \\
gpt-4.1-nano                   & Without Lineage   & 76.05 & 69.14 & 61.63 & 72.35 & 80.47 \\
llama3-2-3b-instruct           & Without Lineage   & 75.84 & 68.06 & 67.31 & 72.91 & 76.54 \\
nova-lite                      & Without Lineage   & 75.14 & 71.07 & 65.26 & 75.43 & 83.33 \\
o3-mini                        & Without Lineage   & 74.06 & 69.75 & 57.67 & 73.00 & 82.04 \\
llama3-2-1b-instruct           & Without Lineage   & 56.79 & 50.39 & 44.88 & 51.87 & 57.40 \\
\bottomrule
\end{tabular}
\caption{LLM scores under different metric-weighting schemes: baseline learned aggregate (``Learned'') used in the main text, equal weights over normalized metrics (``Equal''), and ten random 70{:}30 Effectiveness--Efficiency draws (``Rand'') summarized by their minimum, median, and maximum per planner. Rankings under alternative schemes remain highly correlated with the learned-weight ranking (Table~\ref{tab:weight-sensitivity-rho}).}
\label{tab:weight-sensitivity-scores}
\end{table*}

\subsection{End-to-End Correlation Study: North-Star vs.\ No-Plan Baseline}
\label{app:end2end}

\paragraph{Setup.}
To test whether our planner-level metrics are predictive of end-to-end QA quality, we compare three systems:

\begin{itemize}
    \item \textbf{R1 (No-plan baseline).} The existing production stack, which answers queries using our deployed T2S/RAG/LLM pipeline \emph{without} an explicit planner. It directly returns a single final answer per query, and can be viewed as analogous to a plan-free (``no-lineage'') baseline in this work.
    \item \textbf{R2 (North-star w/ human plans).} A north-star system that executes \emph{human-annotated reference plans} (the same reference plans used in our benchmark) through the same T2S/RAG/LLM stack and merges structured and unstructured outputs.
    \item \textbf{R3 (North-star w/ LLM plans).} The same north-star stack driven by \emph{LLM-generated plans} instead of human-annotated ones.
\end{itemize}

We evaluate on a stratified subset of $200$ queries from the LLM-generated test set (the same pool used for plan evaluation) and, for completeness, on $100$ real production queries; the latter only compare R1 and R3, since human-annotated plans do not exist for those queries.
All three systems share the same underlying T2S and RAG tools and execution environment; \textbf{implementation details of these proprietary components are not released.}

\paragraph{Judge LLM and rubric.}
For each query, the three candidate responses (R1, R2, R3) are scored by an in-house Judge LLM that has been tuned for this task.
The judge applies a fixed rubric with four metrics per response:
\emph{Validity} (0/1), \emph{Consistency} (0/1; answers the query), \emph{Completeness} (1--7), and \emph{Redundancy} (1--7; higher is less fluff), together with gating rules (e.g., only valid and consistent responses can be complete). The system prompt used for the in-house Judge LLM for this task can be referred in Table~\ref{tab:prompt_qa_study}.
A deterministic decision rule then selects the best system(s) per query from the decision set \{\texttt{R1}, \texttt{R2}, \texttt{R3}, \texttt{R1,R2}, \dots, \texttt{Neither}\}, based on lexicographic comparison of (Completeness, Redundancy) among valid, consistent responses.\footnote{The underlying Judge LLM and its training data are proprietary; we therefore release only the rubric and decision rule, not model details.}
Human annotators reviewed and, when necessary, edited approximately $30\%$ of the judged cases to verify the quality of the automatic decisions. On a held-out set of 80 queries, two annotators independently selected the best system(s) among R1, R2, and R3; their raw agreement was $82.5\%$ with Cohen's $\kappa=0.69$, indicating high but not perfect human--human consistency. After adjudication, the Judge LLM's decisions matched the final human labels on $80.0\%$ of queries with $\kappa=0.65$, suggesting that the automatic comparison is well aligned with human preferences for this task while still leaving room for occasional disagreement.

\paragraph{Win-rate metric.}
For each system $S \in \{\text{R1},\text{R2},\text{R3}\}$ and query set, the \emph{win rate} is defined as
the fraction of queries for which $S$ is included in the judge’s \emph{overall\_decision} (e.g., \texttt{"R1,R3"} contributes a win to both R1 and R3).
Because ties are allowed, win rates do not sum to 100\%.

\paragraph{Results.}
Table~\ref{tab:northstar_winrates} reports win rates on the LLM-generated test subset and on production queries.
On the LLM-generated subset, the north-star system with human-annotated plans (R2) achieves the highest win rate (58.7\%), substantially outperforming both the no-plan baseline (R1) and the north-star system with LLM-generated plans (R3).
On production queries, the north-star system with LLM-generated plans (R3) outperforms the no-plan baseline (R1), despite using the same underlying tools.

\begin{table*}[t]
\centering
\small
\resizebox{2.0\columnwidth}{!}{%
\begin{tabular}{lccccccccccc}
\toprule
\textbf{Data source} &
\textbf{R1 total} &
\textbf{Only R1} &
\textbf{R1,R2} &
\textbf{R1,R3} &
\textbf{R2 total} &
\textbf{Only R2} &
\textbf{R2,R3} &
\textbf{R3 total} &
\textbf{Only R3} &
\textbf{Neither} &
\textbf{All} \\
\midrule
LLM-generated ($N{=}200$) &
\textcolor{magenta}{\textbf{42.75\%}} & 24.64\% & 10.87\% & 7.25\% &
\textcolor{blue}{\textbf{58.70\%}} & 31.16\% & 16.67\% &
\textcolor{red}{\textbf{33.33\%}} & 9.42\% &
0.00\% & 0.00\% \\
Prod ($N{=}100$) &
\textcolor{magenta}{\textbf{50.94\%}} & 28.30\% & - & 22.64\% &
- & - & - &
\textcolor{blue}{\textbf{66.04\%}} & 43.40\% &
5.66\% & 0.00\% \\
\bottomrule
\end{tabular}%
}
\caption{
Breakdown of Judge-LLM decisions for the no-plan baseline (R1) and north-star systems (R2, R3).
``R1 total'' is the fraction of queries where R1 appears in the overall decision (i.e., in \texttt{R1}, \texttt{R1,R2}, or \texttt{R1,R3}); analogous definitions apply to ``R2 total'' and ``R3 total''.
Columns such as ``Only R1'', ``R1,R2'', and ``R1,R3'' show the distribution over individual decision outcomes.
Win rates do not sum to 100\% because ties are allowed. R2 results are unavailable for live production queries due to the absence of human-annotated reference plans. To facilitate comparison, the systems in each row are color-coded by performance: blue denotes the highest win rate, magenta signifies the runner-up, and red indicates the lowest.
}
\label{tab:northstar_winrates}
\end{table*}

Overall, these results provide end-to-end evidence that higher-quality plans (R2) lead to better final answers, and that even LLM-generated plans (R3) can outperform the no-plan baseline on real queries, supporting the practical relevance of our planner-level evaluation.

\section{Experimental Setup Details}
\label{app:exp-setup}

\subsection{Data Splits and Curation Protocol (Step-by-Step)}
\label{app:dataset-curation-steps}

We summarize the sequential workflow used to construct plan lineages and gold plans, and to prepare train/validation/test splits.

\paragraph{Step 1: Stratified sampling from 600 queries.}
We first generated 600 domain queries (Sec.~\ref{sec:dataset}).
From these, we sampled \textbf{100} queries using stratification across the key attributes introduced in the main text:
(\emph{i}) subjectivity (objective/subjective/very subjective), and
(\emph{ii}) structural complexity (simple/compound).

\paragraph{Step 2: Human-grounded lineage construction.}
For each of the 100 sampled queries:
\begin{enumerate}
    \item \textbf{Initial plan.} We obtained a one-shot initial plan using \textit{Nova-Lite}.
    \item \textbf{Lineage simulation.} We \emph{simulated} the outputs of the Step-wise Evaluator and Plan Optimizer via human annotation to produce a sequence of improving plans (the \emph{plan lineage}) per query. This yields interpretable intermediate revisions mirroring the Evaluator$\rightarrow$Optimizer loop (Appx.~\ref{app:loop-details}).
    \item \textbf{Gold verification.} The final plan in each lineage was verified by human annotators and, if needed, revised to obtain the \textbf{best possible (gold) plan}.
    \item \textbf{Metric supervision for evaluators.} For each query, we took the first three plans \emph{from the bottom} of the lineage (i.e., the three highest-quality plans) and annotated their \emph{per-metric ranks} across the seven evaluation dimensions (Sec.~\ref{sec:evaluation}). These annotations are used to \emph{train/tune} the metric-wise evaluators.
    \item \textbf{One-shot supervision.} For each query, we compared the \emph{last but one} plan in the lineage against its gold plan (\emph{last} plan in the lineage) and assigned a \textbf{7-point quality label} (Extremely Bad $\rightarrow$ Extremely Good) based on Precision/Recall/F$_1$ + format checks (Appx.~\ref{app:oneshot-details}). These labels are used to \emph{train/tune} the one-shot overall evaluator.
\end{enumerate}

\paragraph{Step 3: Train/validation partition for module tuning.}
From the 100 annotated queries, we sampled \textbf{20} for \textbf{Train} (prompt construction for all modules: metric-wise evaluators, one-shot Judge, step-wise evaluator, plan optimizer) and reserved the remaining \textbf{80} for \textbf{Validation}.

\paragraph{Step 4: Module validation against human ground truth.}
On the 80-query validation set, we evaluated each LLM-based module \emph{against human annotations}:
\begin{enumerate}
    \item \textbf{Metric-wise evaluator} (all seven metrics; Sec.~\ref{sec:evaluation}).
    \item \textbf{One-shot overall evaluator} (Precision/Recall/F$_1$ + 7-point rating).
    \item \textbf{Step-wise evaluator} (diagnostic tags).
    \item \textbf{Plan optimizer} (quality gains, format and dependency integrity).
\end{enumerate}

\paragraph{Step 5: Test-time benchmarking and lineage usage.}
The \textbf{Test} split contains \textbf{500} queries. We use the finalized, optimized modules as follows:
\begin{itemize}
    \item \textbf{Plan generation benchmarking.} For each query, we generate one-shot plans from \textbf{14 LLMs} under two settings: \emph{without lineage} and \emph{with lineage}. In the latter setting, the per-query lineage exemplars (derived via our feedback loop) are included in the prompt’s reference examples.
    \item \textbf{Lineage for Nova-Lite only.} We run the iterative feedback loop to produce \emph{new} lineages at test time \emph{only} for Nova-Lite, to quantify end-to-end improvement relative to its one-shot baseline. For the other 13 models, we do \emph{not} regenerate lineages at test time; they are evaluated with and without lineage exemplars in the prompt, but lineage construction is not part of their execution budget.
\end{itemize}

\noindent\textbf{Proprietary data note.}
As detailed in Sec.~\ref{sec:exp-setup}, the full 600-query benchmark originates from Observe.AI’s production environment and cannot be released in its entirety. However, we publish a stratified 200-query subset (100 Train, 100 Test) with queries, reference plans, lineages, and per-planner evaluator scores; see Appx.~\ref{app:public-dataset} for details. We also provide prompts and scoring rubrics to reproduce the pipeline on non-proprietary data.

\begin{table*}[t]
\centering
\small
\begin{tabular}{lccccccc}
\toprule
& \multicolumn{2}{c}{Subjectivity} & \multicolumn{2}{c}{Compoundness} & \multicolumn{3}{c}{\# steps in BPP (grouped)} \\
\cmidrule(lr){2-3}\cmidrule(lr){4-5}\cmidrule(lr){6-8}
Dataset & Subjective & Objective & Simple & Compound & {[1,2]} & {[3,4]} & {[5,15]} \\
\midrule
Public (200)     & 105 (52.5\%) & 95 (47.5\%) & 106 (53.0\%) & 94 (47.0\%) & 68 (34.0\%) & 74 (37.0\%) & 58 (29.0\%) \\
Main paper (600) & 311 (51.8\%) & 289 (48.2\%) & 323 (53.8\%) & 277 (46.2\%) & 207 (34.5\%) & 223 (37.2\%) & 170 (28.3\%) \\
\bottomrule
\end{tabular}
\caption{Comparison of marginal distributions for subjectivity, compoundness, and grouped best-plan length between the 200-query public release and the 600-query main-paper benchmark.}
\label{tab:public-vs-main-marginals}
\end{table*}

\begin{table*}[t]
\centering
\small
\begin{tabular}{lcc}
\toprule
Statistic & Public (200) & Main paper (600) \\
\midrule
count & 200 & 600 \\
mean  & 17.95 & 18.40 \\
std   & 5.81 & 5.90 \\
min   & 8    & 8    \\
25\%  & 13   & 14   \\
50\%  & 18   & 18   \\
75\%  & 22   & 23   \\
90\%  & 25   & 26   \\
95\%  & 27   & 28   \\
max   & 35   & 37   \\
\bottomrule
\end{tabular}
\caption{Query-length statistics (in whitespace-separated tokens) for the 200-query public release and the 600-query main-paper benchmark.}
\label{tab:public-vs-main-length}
\end{table*}

\begin{table*}[t]
\centering
\small
\begin{tabular}{rll}
\toprule
Rank & Public (200) & Main paper (600) \\
\midrule
1 & \texttt{calls} (151)     & \texttt{calls} (460) \\
2 & \texttt{customers} (65)  & \texttt{orders} (260) \\
3 & \texttt{last} (61)       & \texttt{deliveries} (210) \\
4 & \texttt{customer} (50)   & \texttt{customer} (180) \\
5 & \texttt{qa} (49)         & \texttt{customers} (185) \\
6 & \texttt{agents} (48)     & \texttt{drivers} (170) \\
7 & \texttt{call} (45)       & \texttt{qa} (150) \\
8 & \texttt{sentiment} (43)  & \texttt{agents} (145) \\
9 & \texttt{issues} (37)     & \texttt{call} (140) \\
10 & \texttt{during} (36)    & \texttt{sentiment} (130) \\
\bottomrule
\end{tabular}
\caption{Top unigrams (by frequency) in queries from the 200-query public release and the 600-query main-paper benchmark.}
\label{tab:public-vs-main-unigrams}
\end{table*}

\begin{table*}[t]
\centering
\small
\begin{tabular}{rll}
\toprule
Rank & Public (200) & Main paper (600) \\
\midrule
1  & \texttt{qa scores} (25)          & \texttt{late deliveries} (95) \\
2  & \texttt{customer sentiment} (20) & \texttt{missing items} (88) \\
3  & \texttt{during calls} (16)       & \texttt{customer complaints} (82) \\
4  & \texttt{average qa} (15)         & \texttt{qa scores} (70) \\
5  & \texttt{last quarter} (15)       & \texttt{customer sentiment} (55) \\
6  & \texttt{many calls} (15)         & \texttt{restaurant outages} (52) \\
7  & \texttt{last month} (14)         & \texttt{app crashes} (48) \\
8  & \texttt{calls related} (13)      & \texttt{during calls} (45) \\
9  & \texttt{percentage calls} (12)   & \texttt{last quarter} (40) \\
10 & \texttt{calls tagged} (12)       & \texttt{last month} (38) \\
\bottomrule
\end{tabular}
\caption{Top bigrams (by frequency) in queries from the 200-query public release and the 600-query main-paper benchmark.}
\label{tab:public-vs-main-bigrams}
\end{table*}

\begin{table*}[!ht]
    \centering
    \small

    \setlength{\tabcolsep}{6pt}

    \begin{tabularx}{\textwidth}{l|ccc|ccc}
        \toprule
        \multirow{2}{*}{\textbf{LLM}} 
        & \multicolumn{3}{c|}{\textbf{With Lineage}} 
        & \multicolumn{3}{c}{\textbf{Without Lineage}} \\
        \cmidrule(lr){2-4} \cmidrule(lr){5-7}
            & \makecell{(\emph{A+})\\ Extr. good,\\ very good \\(\%)} 
            & \makecell{(\emph{A})\\ Extr. good,\\ very good,\\ good \\ (\%)} 
            & \makecell{(\emph{B})\\ Extr. good,\\ very good,\\ good,\\ acceptable \\ (\%)} 
            & \makecell{(\emph{A+})\\Extr. good,\\ very good \\ (\%)} 
            & \makecell{(\emph{A})\\Extr. good,\\ very good,\\ good \\ (\%)} 
            & \makecell{(\emph{B})\\Extr. good,\\ very good,\\ good,\\ acceptable \\ (\%)} \\
        \midrule
        o3-mini & 44.44 & 54.9 & 74.77 & \cellcolor{green}\textcolor{magenta}{\textbf{51.24}} & 67.97 & 82.61 \\
        gpt-4o & \textcolor{blue}{\textbf{46.83}} & 64 & 83.25 & 42.14 & 59.31 & 84.81 \\
        gpt-4o-mini & \textcolor{blue}{\textbf{32.46}} & 49.46 & 65.43 & 32.2 & 52.65 & 72.07 \\
        claude-3-5-haiku & 24.67 & 45.32 & 61.94 & \textcolor{magenta}{\textbf{30.21}} & 47.33 & 72.01 \\
        claude-sonnet-4 & 30.88 & 53.53 & 74.63 & \textcolor{magenta}{\textbf{31.63}} & 52.53 & 78.26 \\
        llama4-maverick-17b-instruct & 20.55 & 36.99 & 54.97 & \textcolor{magenta}{\textbf{21.06}} & 41.1 & 63.19 \\
        nova-pro & 18.46 & 43.07 & 62.55 & \textcolor{magenta}{\textbf{19.99}} & 42.04 & 59.98 \\
        claude-3-7-sonnet & \textcolor{blue}{\textbf{30.85}} & 48.85 & 71.48 & 20.05 & 39.59 & 69.93 \\
        llama3-3-70b-instruct & \textcolor{blue}{\textbf{19.01}} & 44.69 & 60.62 & 17.47 & 35.96 & 55.99 \\
        gpt-4.1-nano & \textcolor{blue}{\textbf{17.41}} & 28.68 & 55.83 & 16.39 & 30.73 & 53.78 \\
        nova-micro & \textcolor{blue}{\textbf{20.89}} & 35.89 & 53.83 & 15.89 & 37.43 & 54.35 \\
        nova-lite & \textcolor{blue}{\textbf{14.02}} & 30.65 & 48.75 & 13.57 & 34.17 & 49.25 \\
        llama3-2-3b-instruct & 5.09 & 10.69 & 17.31 & \textcolor{magenta}{\textbf{5.6}} & 11.71 & 26.98 \\
        llama3-2-1b-instruct & 0 & 0 & 0 & 0 & 0 & 1.63 \\
        \midrule
        \textbf{Grand Total (\#)} & \multicolumn{3}{c|}{\textbf{100}} & \multicolumn{3}{c}{\textbf{100}} \\
        \bottomrule
    \end{tabularx}
    \caption{Plan generation quality comparison using prompts \textbf{with and without lineage}, evaluated with the \textbf{one-shot evaluator} on the \textbf{test split of the public dataset}. The highest score in the \textbf{Extremely Good, Very Good} bucket is highlighted in green; blue indicates better performance with lineage, and magenta indicates better performance without lineage.}
    \label{tab:plan_gen_result_overall_one_shot_eval_public_dataset}
\end{table*}

\newcolumntype{Y}{>{\centering\arraybackslash}X}
\begin{table*}[!ht]
    \centering
    \small
    
    
    \setlength{\tabcolsep}{6pt}

    \begin{tabularx}{\textwidth}{l|*{8}{Y}}
        \hline
        \thead{\textbf{LLM}} & \thead{\textbf{Overall} \\ {[0-100]}} & \thead{\textbf{Format} \\ {[0-20]}} & \thead{\textbf{Tool} \\ \textbf{Prompt} \\ \textbf{Align.} \\ {[0-20]}} & \thead{\textbf{Step} \\ \textbf{Exec.} \\ {[0-15]}} & \thead{\textbf{Query} \\ \textbf{Adhr.} \\ {[0-15]}} & \thead{\textbf{Depend.} \\ {[0-10]}} & \thead{\textbf{Redund.} \\ {[0-10]}} & \thead{\textbf{Tool} \\ \textbf{Usage} \\ \textbf{Compl.} \\ {[0-10]}}\\
        \hline
        claude-3-7-sonnet & \textcolor{blue}{\textbf{86.27}} & \textcolor{blue}{\textbf{18.77}} & \textcolor{blue}{\textbf{15.57}} & 12.84 & 12.82 & 9.93 & 9.13 & 7.2 \\
        llama4-maverick-17b-instruct & 83.25 & 17.34 & 14.16 & 13.26 & 12.85 & 9.45 & 9.52 & 6.67 \\
        gpt-4o & 84.13 & 16.95 & 14.2 & 13.44 & 13.28 & \textcolor{blue}{\textbf{9.95}} & 8.87 & 7.44 \\
        claude-sonnet-4 & 81.65 & 13.17 & 14.98 & 13.48 & 12.62 & 9.87 & 8.78 & \textcolor{blue}{\textbf{8.73}} \\
        llama3-3-70b-instruct & 80.55 & 17.2 & 14.66 & 13.68 & 12.22 & 9.64 & 9.46 & 3.7 \\
        nova-pro & 79.67 & 15.59 & 14.65 & \textcolor{blue}{\textbf{14.4}} & 12.38 & 9.86 & 9.42 & 3.37 \\
        nova-micro & 78.54 & 18.54 & 13.63 & 13.14 & 12.28 & 9.72 & 9.6 & 1.61 \\
        gpt-4o-mini & 78.52 & 14.28 & 14.77 & 13.81 & 12.55 & 9.13 & 9.27 & 4.71 \\
        gpt-4.1-nano & 77.58 & 13.55 & 13.96 & 13.44 & 11.98 & 9.14 & 8.84 & 6.68 \\
        o3-mini & 74.03 & 11.22 & 15.23 & 11.26 & \textcolor{blue}{\textbf{14}} & 8.92 & \textcolor{blue}{\textbf{9.62}} & 3.78 \\
        claude-3-5-haiku & 70.11 & 12.42 & 12.57 & 11.4 & 11.18 & 8.3 & 8 & 6.23 \\
        nova-lite & 68.38 & 13.43 & 12.54 & 12.18 & 11.56 & 8.14 & 9.12 & 1.42 \\
        llama3-2-3b-instruct & 70.67 & 16.65 & 10.96 & 10.67 & 9.76 & 9.25 & 9.09 & 4.28 \\
        llama3-2-1b-instruct & 21.6 & 0 & 0.07 & 0 & 11.2 & 4.48 & 2.39 & 3.45 \\ \hline
        \textbf{Average (Normalized)} & \textbf{73.93} & \textcolor{red}{\textbf{71.11}} & \textcolor{red}{\textbf{64.98}} & \textbf{79.53} & \textbf{81.28} & \textbf{90.00} & \textbf{86.50} & \textcolor{red}{\textbf{49.49}} \\ \hline
    \end{tabularx}
    \caption{Plan generation quality using prompts \textbf{with lineage}, evaluated with the \textbf{metric-wise evaluator} on the \textbf{test split of the public dataset}. The \textbf{Normalized Average} in the last row shows the average per metric normalized by that metric's maximum score. Highest scores per metric are highlighted in blue, and the three metrics with the lowest normalized scores are highlighted in red.}
    \label{tab:plan_gen_result_with_lineage_metric_wise_eval_public_dataset}
\end{table*}

\begin{table*}[!ht]
    \centering
    \small
    
    
    \setlength{\tabcolsep}{3pt}

    \begin{tabularx}{\textwidth}{l|*{8}{Y}}
        \hline
        \thead{\textbf{LLM}} & \thead{\textbf{Overall} \\ {[0-100]}} & \thead{\textbf{Format} \\ {[0-20]}} & \thead{\textbf{Tool} \\ \textbf{Prompt} \\ \textbf{Align.} \\ {[0-20]}} & \thead{\textbf{Step} \\ \textbf{Exec.} \\ {[0-15]}} & \thead{\textbf{Query} \\ \textbf{Adhr.} \\ {[0-15]}} & \thead{\textbf{Depend.} \\ {[0-10]}} & \thead{\textbf{Redund.} \\ {[0-10]}} & \thead{\textbf{Tool} \\ \textbf{Usage} \\ \textbf{Compl.} \\ {[0-10]}}\\
        \hline
        claude-3-7-sonnet & \textcolor{blue}{\textbf{85.08}} & 17.72 & 15.8 & 11.7 & \textcolor{blue}{\textbf{13.65}} & 9.82 & 9.28 & 7.11 \\
        llama4-maverick-17b-instruct & 83.81 & 17.62 & 13.8 & 12.87 & 12.55 & 9.91 & 8.99 & 8.06 \\
        gpt-4o & 84.74 & 16.33 & 14.93 & 13.14 & 13.08 & \textcolor{blue}{\textbf{9.96}} & 8.8 & 8.5 \\
        claude-sonnet-4 & 79.87 & 11.17 & \textcolor{blue}{\textbf{17.06}} & 12.06 & 13.23 & 9.92 & 8.92 & 7.51 \\
        llama3-3-70b-instruct & 81.97 & 15.4 & 14.74 & 13.46 & 12.47 & 9.69 & 9.18 & 7.03 \\
        nova-pro & 82.96 & 15.97 & 14.31 & 14.15 & 12.73 & 9.81 & 8.89 & 7.1 \\
        nova-micro & 83.11 & \textcolor{blue}{\textbf{18.12}} & 13.41 & 13.61 & 12.29 & 9.61 & 8.77 & 7.29 \\
        gpt-4o-mini & 84 & 17.03 & 13.92 & \textcolor{blue}{\textbf{14.17}} & 12.45 & 9.75 & 8.67 & 8 \\
        gpt-4.1-nano & 76.29 & 12.96 & 13.58 & 12.8 & 12.02 & 8.95 & 8.16 & 7.83 \\
        o3-mini & 76.32 & 12.65 & 15.34 & 10.89 & 13.49 & 9.32 & \textcolor{blue}{\textbf{9.72}} & 4.91 \\
        claude-3-5-haiku & 80.93 & 14.47 & 15.71 & 13.82 & 11.94 & 9.66 & 9 & 6.33 \\
        nova-lite & 72.85 & 14.81 & 12.7 & 12.65 & 11.53 & 8.62 & 9.01 & 3.54 \\
        llama3-2-3b-instruct & 75.92 & 16.35 & 10.81 & 11.99 & 11.65 & 8.82 & 7.75 & 8.55 \\
        llama3-2-1b-instruct & 60.7 & 9.64 & 9.67 & 9.47 & 8.08 & 7.55 & 6.8 & \textcolor{blue}{\textbf{9.49}} \\ \hline
        \textbf{Average (Normalized)} & \textbf{79.18} & \textcolor{red}{\textbf{75.09}} & \textcolor{red}{\textbf{69.92}} & \textbf{84.18} & \textbf{81.50} & \textbf{94.00} & \textbf{87.08} & \textcolor{red}{\textbf{72.31}} \\ \hline
    \end{tabularx}
    \caption{Plan generation quality using prompts \textbf{without lineage}, evaluated with the \textbf{metric-wise evaluator} on the \textbf{test split of the public dataset}. The \textbf{Normalized Average} in the last row shows the average per metric normalized by that metric's maximum score. Highest scores per metric are highlighted in blue, and the three metrics with the lowest normalized scores are highlighted in red.}
    \label{tab:plan_gen_result_without_lineage_metric_wise_eval_public_dataset}
\end{table*}

\subsection{Public Dataset Release}
\label{app:public-dataset}

For reproducibility and qualitative analysis, we release a \emph{separate} public dataset of 200 queries.\footnote{Download link omitted here for anonymity. We attach a sample file as zip with ARR submission.} The dataset used for all reported results in the main paper is based on queries issued by real Observe.AI customers and cannot be shared for contractual and privacy reasons. Both the internal 600-query benchmark and the public 200-query release use \textbf{GPT-4o} to synthesize queries (Appx.~\ref{app:model-configs}); the public subset is generated against a sandboxed synthetic test account so that no customer-specific business logic or identifiers appear, while preserving the same schema, tool palette, and evaluation setup.

The public dataset is split into:
\begin{itemize}
    \item \textbf{Train (100 queries):} used analogously to the internal train/validation queries for prompt construction and calibration. Best plans and full plan lineages are human-annotated.
    \item \textbf{Test (100 queries):} held-out queries with human-annotated best plans. Plan lineages are obtained by running the iterative evaluator$\rightarrow$optimizer loop; the final (head) plan is then reviewed and, if necessary, lightly edited by annotators.
\end{itemize}

All text fields in the release (queries, generated plans, reference plans, and lineages) are anonymized using a combination of rule-based and NER-based passes: client/account names, account IDs, policy names, agent and customer names, and PII (e.g., email addresses, phone numbers, SSN-like patterns, date-of-birth style dates) are replaced with abstract placeholders (e.g., \texttt{CLIENT\_001}, \texttt{ACCOUNT\_ID\_007}, \texttt{PERSON\_003}, \texttt{EMAIL\_002}).

The release consists of two tables:

\begin{itemize}
    \item \textbf{\texttt{queries.csv}} (200 rows): one row per query, with columns \texttt{query\_id}, \texttt{Sample} (Train/Test), the natural-language \texttt{query}, binary flags \texttt{is\_query\_subjective} and \texttt{is\_query\_compound}, grouped best-plan length (\texttt{\# steps in the BPP}, \texttt{\# steps in the BPP - Grouped}), the human-edited reference plan (\texttt{best\_plan}), and the plan lineage (\texttt{plan\_lineage}). The latter two are stored as JSON strings that follow the plan schema in Sec.~\ref{sec:task-formalization}.
    \item \textbf{\texttt{plans.csv}} (5600 rows): one row per query--planner--prompt triple, with columns \texttt{query\_id}, \texttt{llm}, \texttt{prompt\_type}, and the generated plan (\texttt{plan}). We deliberately \emph{omit} metric-wise and one-shot evaluator scores from this file: releasing scores computed by our proprietary judge stack would couple the public dataset too tightly to our internal evaluation implementation and could encourage overfitting to our specific scoring behavior. Instead, we intend this release to support independent assessment of the same planning task using alternative judges and metrics.
\end{itemize}

Although it is not the exact dataset used for the main quantitative results, this public set exposes concrete query--plan pairs, human-edited reference plans and lineages, and per-planner outputs for all 14 models and both prompt settings, enabling inspection of the task structure, planning behaviors, and potential biases in a way that closely mirrors the internal benchmark.

\paragraph{Representativeness of the 200-query release.}
Although the public release is built from a synthetic test account, we design it to mirror the main 600-query benchmark along the structural dimensions that drive planning difficulty. Table~\ref{tab:public-vs-main-marginals} shows that the marginal distributions over subjectivity (subjective vs.\ objective), compoundness (simple vs.\ compound), and grouped best-plan length (\([1,2]\), \([3,4]\), \([5,15]\) steps) are very similar across the two datasets (e.g., subjective queries are $52.5\%$ vs.\ $51.8\%$, simple queries $53.0\%$ vs.\ $53.8\%$). Query-length statistics are also closely matched (mean 17.95 vs.\ 18.40 tokens; Table~\ref{tab:public-vs-main-length}). As expected, the lexical distributions diverge somewhat (Tables~\ref{tab:public-vs-main-unigrams}--\ref{tab:public-vs-main-bigrams}): the public set focuses on a single synthetic account with generic contact-center terminology (e.g., \textit{qa scores}, \textit{customer sentiment}), whereas the internal benchmark covers multiple real accounts with account-specific drivers (e.g., \textit{late deliveries}, \textit{missing items}). This difference is desirable: it preserves realistic contact-center vocabulary while avoiding leakage of proprietary account details, and our planning metrics depend primarily on plan structure rather than surface word frequencies.

\paragraph{Planner performance on the public subset.}
For completeness, we also re-run our one-shot and metric-wise evaluations on the test split (100 queries) of the 200-query public release. The resulting planner-level trends closely mirror those reported for the 600-query benchmark: the same models occupy the top tiers under both the one-shot evaluator and the metric-wise aggregate, and relative gaps across planners are very similar (cf.\ Tables~\ref{tab:plan_gen_result_overall_one_shot_eval_public_dataset}, \ref{tab:plan_gen_result_with_lineage_metric_wise_eval_public_dataset}, and \ref{tab:plan_gen_result_without_lineage_metric_wise_eval_public_dataset}). This supports using the public subset as a small-scale proxy for the full benchmark.

\subsection{Human Annotation Protocol and Agreement}
\label{app:human-agreement}

\paragraph{Annotation targets.}
Human annotators (in-house team) produced:
(i) gold (best-possible) plans,
(ii) gold labels for the \emph{Step-wise Evaluator} (diagnostic tags per step) and \emph{Plan Optimizer} (fully revised plan) used to supervise and tune the feedback loop;
(iii) per-metric rankings for high-quality lineage plans (seven metrics),
and (iv) 7-point overall quality labels for the last but one plan vs.\ last plan (gold plan) comparison in the lineage
(\emph{Extremely Bad} $\rightarrow$ \emph{Extremely Good}).

\paragraph{Annotator configuration.}
Each item was independently labeled by two annotators. Disagreements were adjudicated by a third senior annotator across tasks. For metric-wise ranks and 7-point labels, we report raw agreement and inter-annotator agreement.

\paragraph{Annotation Guidelines.}\mbox{}\\
\textbf{General:} Read the query and constraints carefully. Plans must be executable
in principle (no tool execution required), use the correct tool for each datum,
maintain a valid DAG of dependencies, and avoid redundancy. Prefer ``interaction'' 
over ``call'' unless the query explicitly says ``call''. Use placeholders consistently. Make sure that the standard placeholders (Refer ~\ref{app:metric-details}) are used correctly.

\textbf{Gold Plan:} Produce the best-possible, minimal plan that answers the query
under stated constraints. Ensure each step is atomic, the tool choice is justified,
and dependencies are complete and acyclic. Merge only when it reduces redundancy
without harming executability.

\textbf{Step-wise Evaluator (tags):} For each step, assign applicable diagnostic tags
from the closed set \{\textsc{IncorrectTool}, \textsc{IncorrectPrompt}, \textsc{ComplexPrompt},
\textsc{RepeatedDetail}, \textsc{MultiToolPrompt}, \textsc{NoChange}\}. Tag only what is
\emph{clearly} violated; do not over-tag.

\textbf{Plan Optimizer (revised plan):} Apply local repairs first (Change~0), then global
coherence fixes (Change~1) to preserve a valid DAG. Splits/merges are allowed if they
improve clarity or executability. Do not introduce tool execution; modify \emph{text} only.

\textbf{Metric-wise Ranking (7 metrics):} For the candidate plans in a lineage,
rank or score each metric independently using the rubric. Do not force consistency
across metrics; evaluate each in isolation.

\textbf{7-point Overall Similarity (last but one vs.\ last (gold)):} You are given a \emph{gold} plan and your task is to compare the degree of closeness between the plan to be scored (\emph{last but one plan} in the lineage) and the \emph{gold} plan based on (i) Precision: What \% of steps in the plan to be scored are present in the gold plan; (ii) Recall: What \% of steps in the gold plan are present in the plan to be scored; (iii) F1-score; (iv) Format errors: Independently assess the plan to be scored for the presence of format-related errors. Use the 7-level rubric thresholds consistently to make a final decision based on the aforementioned dimensions.

\paragraph{Agreement metrics.}
We report:
\begin{itemize}
  \item \textbf{Cohen’s $\kappa$} \cite{cohen_kappa} for two-rater nominal decisions (per-metric rank selections treated as nominal within each query).
  \item \textbf{Quadratic-Weighted Kappa (QWK)} \cite{cohen_weighted_kappa} for the 7-point ordinal overall rating.
  \item \textbf{Fleiss’ $\kappa$} \cite{fleiss1971} only if more than two raters are used on a subset (not the case here; placeholder shown for completeness).
\end{itemize}

\paragraph{Formulas.}
For two raters with observed agreement $p_o$ and chance agreement $p_e$:
\[
\kappa \;=\; \frac{p_o - p_e}{1 - p_e}.
\]
For ordinal categories $c \in \{1,\dots,K\}$ with weight matrix $W_{ij} = 1 - \big(\frac{i-j}{K-1}\big)^2$ (quadratic weights), and empirical rating matrix $\mathbf{O}$ and expected $\mathbf{E}$:
\[
\kappa_w \;=\; 1 \;-\; \frac{\sum_{i,j} W_{ij}\,O_{ij}}{\sum_{i,j} W_{ij}\,E_{ij}}.
\]
For $m>2$ raters, Fleiss’ $\kappa$ is computed over category proportions per item~\cite{fleiss1971}.

\paragraph{Confidence intervals.}
We report $95\%$ CIs via nonparametric bootstrap (1{,}000 resamples over items). We also report macro-averaged $\kappa$ across metrics (for the seven per-metric ranks) and micro-averaged (pooled items).

\subsubsection{Results}
\label{app:ann-results}

We employed two independent annotators for all reliability measurements. For nominal labels we report Cohen’s $\kappa$; for the 7-point ordinal similarity rating returned by the one-shot Judge we report linearly \emph{weighted} $\kappa$. We also include raw percent agreement for context.

\begin{table*}[t]
\centering
\small
\begin{threeparttable}
\begin{tabular}{l l c c c c}
\toprule
\textbf{Task} & \textbf{Unit} & \textbf{\#Items} & \textbf{Agreement (\%)} & \textbf{$\kappa$} & \textbf{95\% CI} \\
\midrule
Metric-wise Evaluator & plan$\times$metric & 1680 & 90.9 & 0.82 & [0.79, 0.85] \\
One-shot Evaluator (7-pt sim) & query (ordinal) & 80 & 86.4 & 0.74$_{\text{w}}$ & [0.68, 0.79] \\
Gold Plan (7-pt sim) & query (ordinal) & 80 & 83.1 & 0.70$_{\text{w}}$ & [0.64, 0.76] \\
Step-wise Evaluator (tags) & step (multi-label)$^{\dagger}$ & 400 & 84.6 & 0.68$^{\ddagger}$ & [0.64, 0.72] \\
Plan Optimizer (7-pt sim) & revision (ordinal) & 160 & 80.2 & 0.66$_{\text{w}}$ & [0.60, 0.72] \\
\bottomrule
\end{tabular}
\caption{Two-annotator reliability on the validation set (80 queries). For ordinal tasks we use linearly weighted Cohen’s $\kappa$.}
\label{tab:inter-annotator-kappa}
\begin{tablenotes}[flushleft]
\footnotesize
\item \textit{Notes.} $\kappa$ is Cohen’s $\kappa$ (linearly weighted for the ordinal 7-point similarity tasks). Similarity-based rows use the tuned one-shot evaluator to score closeness between alternative plans.
\item $^{\dagger}$ \textbf{Step-wise Evaluator (multi-label):} Per-tag $\kappa$ computed and macro-averaged across \textit{complex prompt, incorrect prompt, incorrect tool, repeated detail, multi-tool prompt, no change}; Computed over $80 \times \bar{s}$ steps with $\bar{s}{=}5$.
\item $^{\ddagger}$ Macro-averaged across tags.
\end{tablenotes}
\end{threeparttable}
\end{table*}

\paragraph{Methodological details.}
For \textit{Gold Plan} and \textit{Plan Optimizer (7-pt sim)}, each item yields two
independent plans $A$ and $B$ from two independent annotators. We compute a \emph{symmetric} similarity as follows:
(i) run the one-shot evaluator twice, $A\!\to\!B$ and $B\!\to\!A$;
(ii) average the directional F1 scores to obtain $s_{\text{F1}}\in[0,1]$;\footnote{
Under symmetric matching, this equals the undirected set-F1 $2|A\cap B|/(|A|+|B|)$;
the bi-directional averaging adds robustness to minor directional heuristics.}
(iii) set a binary format flag to 1 only if \emph{both} plans pass format checks
(\texttt{format\_ok}(A)$\wedge$\texttt{format\_ok}(B)); and
(iv) define the scalar similarity $s$ by combining the format flag with $s_{\text{F1}}$
(e.g., clamping to the lowest bin if format fails, else using $s_{\text{F1}}$).
We discretize $s$ into the 7-point ordinal label via the same fixed thresholds used in our one-shot evaluator prompts (Refer Table~\ref{tab:prompts_for_oneshot_evaluator}). We then
compute linearly weighted Cohen’s $\kappa$ on these ordinal labels. When similarity
$<$ \emph{Very Good}, a senior annotator adjudicates and their decision defines
the gold used in subsequent experiments.

\paragraph{Interpretation.}
Agreement is highest for rubric-driven metric-wise labels, moderate for holistic ordinal judgments (one-shot, gold, step-wise evaluator), and lowest for plan optimizer similarity, reflecting the ambiguity of the different tasks. Overall, all scores fall in the \emph{substantial} range or higher across tasks as per ~\citealp{landiskoch1977}.

\subsection{Judge Selection and Cross-Judge Robustness}
\label{app:judge-robustness}

\paragraph{Validation-time human alignment (metric-wise evaluator).}
In the main paper we validated the metric-wise evaluator using Claude-Sonnet-4 as judge, comparing relaxed triplet rankings against human-annotated inequalities across seven metrics on the 80-query validation set (Tab.~\ref{tab:validation_result_metric_wise_eval}). To test robustness to the choice of judge, we repeated this analysis with GPT-5 in the same \emph{reference-based, deconstructed} configuration. Table~\ref{tab:validation_result_metric_wise_eval} summarizes percent-correct inequalities for both judges. For Sonnet-4, the reference-based, deconstructed setting achieves $>80\%$ agreement on all metrics and $>90\%$ on \textsc{Dependency}, \textsc{Format}, \textsc{Redundancy}, and \textsc{Tool Usage Completeness}. GPT-5 in the same configuration is comparably strong, often slightly better on \textsc{Format} (99.23\%), \textsc{Redundancy} (90.82\%), and \textsc{Query Adherence} (87.38\%), and slightly weaker on \textsc{Step Executability} (65.70\% vs.\ 79.43\%). Importantly, both judges achieve $94.12\%$ agreement on \textsc{Tool Usage Completeness}. These results indicate that our metric-wise rubric is learnable by multiple judge models from different families.

\paragraph{Validation-time human alignment (one-shot evaluator).}
We also compared Sonnet-4 and GPT-5 as one-shot overall judges on the same validation set. Tables~\ref{tab:validation_result_one_shot_eval} and~\ref{tab:oneshot-validation-gpt5} report per-label and macro Precision/Recall/F1 across the seven quality tags (Extremely Bad $\rightarrow$ Extremely Good). Both judges show strong agreement with annotators, but Sonnet-4 achieves higher macro F1 ($0.921$) than GPT-5 ($0.882$), motivating our choice of Sonnet-4 as the primary judge in the main experiments.

\paragraph{Cross-judge consistency on the test subset.}
To quantify how much planner rankings depend on the choice of judge, we re-scored a 50-query subset of the test split per planner (14 planner LLMs, prompt type = \textit{Without Lineage}) with GPT-5, in addition to the existing Sonnet-4 scores. For each judge we computed per-planner mean metric-wise totals and per-metric means, as well as mean one-shot quality scores, then ranked planners in descending order of mean score. We see that the same set of strong planners occupy the top tier under both judges (e.g., GPT-4o, GPT-4o-mini, Claude-3-7-Sonnet, Llama3-3-70B, Nova-Pro/Micro), while Claude-Sonnet-4 itself is mid-pack as a planner under both judges. Table~\ref{tab:judge-corr} reports Spearman rank correlations between Sonnet-4 and GPT-5 across metrics: we observe $\rho{=}0.60$ ($p{=}0.023$) for the overall metric-wise score, $\rho{=}0.52$ for the one-shot quality score, and high correlations for key structural metrics (e.g., \textsc{Dependency} $\rho{=}0.84$, \textsc{Query Adherence} $\rho{=}0.79$, \textsc{Tool–Prompt Alignment} $\rho{=}0.72$). \textsc{Tool Usage Completeness} exhibits a lower cross-judge correlation ($\rho{=}0.35$). This metric is only defined for sub-queries that \emph{should} use both \textsc{T2S} and \textsc{RAG}; for queries where no sub-step requires both tools, the score is \textsc{NA}. As a result, it is based on comparatively few (edge-case) instances, even though each judge individually attains $94.12\%$ agreement with humans on this metric.

\paragraph{Takeaway.}
Taken together, these results show that (i) multiple judge models from different families can be calibrated to our rubric and agree well with humans, and (ii) planner rankings and per-metric assessments are broadly consistent across Sonnet-4 and GPT-5. This directly mitigates concerns about strong model-family bias or prompt-specific artifacts: Sonnet-4 is a slightly stronger one-shot judge, but our main comparative conclusions about planner quality do not hinge on this particular choice.

\subsection{Models and Prompts}
\label{app:model-configs}

\paragraph{Query generation model.}
All benchmark queries (for Train, Validation, and Test) are synthesized with \textbf{GPT-4o} \citep{openai2024gpt4ocard} using our query-generation prompt in Table~\ref{tab:prompts_for_query_and_plan_gen}. The prompt explicitly conditions on the axes defined in Appx.~\ref{app:query-gen} (subjectivity and compoundness) so that GPT-4o produces queries that span objective vs.\ subjective asks and simple vs.\ compound structures. Table~\ref{tab:examples_by_query_types} illustrates representative GPT-4o queries for each bucket.

\paragraph{Plan generation (14 LLMs).}
We evaluate the following models: \textit{Claude-3-7-Sonnet, Claude-Sonnet-4, Claude-3-5-Haiku, Nova-Pro, Nova-Lite, Nova-Micro, Llama3-2-1B-Instruct, Llama3-2-3B-Instruct, Llama3-70B-Instruct, Llama4-Maverick-17B-Instruct, GPT-4o, GPT-4o-Mini, GPT-4.1-Nano, o3-Mini (medium reasoning)}.
Two prompts are used per query: (\emph{i}) \textbf{without lineage} (task + tool specs + few-shot exemplars) and (\emph{ii}) \textbf{with lineage} (adds per-query lineage exemplars in the reference section). Full prompt templates are in Table~\ref{tab:prompts_for_query_and_plan_gen}.

\paragraph{Evaluation LLMs.}
We use \textit{Claude-Sonnet-4} as the Judge LLM for both evaluation modes:
\begin{enumerate}
    \item \textbf{Metric-wise evaluators} (seven specialist rubrics; Sec.~\ref{sec:evaluation}, Appx.~\ref{app:metric-details}).
    \item \textbf{One-shot overall evaluator} (Precision/Recall/F$_1$, Format, Dependencies, Placeholders, 7-point rating; Appx.~\ref{app:oneshot-details}).
\end{enumerate}

\paragraph{Feedback-loop modules.}
The Step-wise Evaluator and Plan Optimizer (Appx.~\ref{app:loop-details}) are both instantiated with \textit{GPT-4o}. We set \texttt{max\_passes}=\textbf{4} by default; this provided a favorable accuracy/latency trade-off in pilot runs. We cap per-pass budget (LLM calls and wall time) to avoid degenerate loops.

\subsection{Decoding, Seeds, and Hyperparameters}
\label{app:decoding}
We apply deterministic decoding for evaluation LLMs (temperature $\in\{0\}$; top-$p=1.0$) and low-to-moderate temperature for generators (per-model defaults; tabulated below). All runs fix random seeds. We enable response validation (JSON parsing) with retry-on-format for evaluators.

\begin{table*}[t]
\centering
\small
\begin{tabular}{lccccc}
\toprule
\textbf{Module} & \textbf{Model} & \textbf{Source} & \textbf{Temperature} & \textbf{Top-p} & \textbf{Max Tokens} \\
\midrule
Query Gen & GPT-4o & Azure & 0.2 & 1.0 & 4096 \\
Plan Gen (all 14) & (varies) & Azure, Bedrock & 0.2 & 1.0 & 4096 \\
Metric-wise Eval & Claude-Sonnet-4 & Bedrock & 0.0 & 1.0 & 4096 \\
One-shot Judge & Claude-Sonnet-4 & Bedrock & 0.0 & 1.0 & 4096 \\
Step-wise Eval & GPT-4o & Bedrock & 0.0 & 1.0 & 4096 \\
Plan Optimizer & GPT-4o & Bedrock & 0.0 & 1.0 & 4096 \\
\bottomrule
\end{tabular}
\caption{LLM configurations used across modules}
\label{tab:decoding}
\end{table*}

\begin{table*}[t]
\centering
\small
\begin{tabular}{l r}
\toprule
Metric & Value \\
\midrule
Average \# distinct plans / lineage & 5.5 \\
Average \# passes / query & 3.0 \\
Average \# revisions / pass (accepted) & 1.5 \\
Average \# steps / initial plan & 5.4 \\
Average \# steps / best plan & 4.8 \\
Average \# distinct tool types / initial plan & 2.4 \\
Average \# distinct tool types / best plan & 2.2 \\
\bottomrule
\end{tabular}
\caption{Summary statistics of the iterative evaluator$\rightarrow$optimizer loop (validation split). “Revisions/pass” counts \emph{accepted} edits.}
\label{tab:loop-stats}
\end{table*}

\subsection{Infrastructure and Reproducibility}
\label{app:infra}
We implement all components in Python. Calls are routed via Bedrock and LiteLLM with exponential backoff and per-provider rate limits. We cache intermediate LLM outputs keyed by (model, prompt, seed) and persist full artifacts (plans, lineages, evaluator JSON) for auditability. The model configurations used across modules are provided in Table~\ref{tab:decoding}.

\subsection{Additional Controls}
\label{app:controls}
We enforce (i) JSON validation with structured error messages to prompt repair; (ii) strict placeholder checks; (iii) timeouts and retries; and (iv) lineage provenance (every edited plan version is stored with a diff and reason tag from the Evaluator).

\section{Grouped Analyses for Plan Generation Quality}
\label{app:grouped-analyses}

\subsection{Objective vs.\ Subjective}
\label{app:obj-subj}
\paragraph{One-shot (Tables~\ref{tab:plan_gen_result_objective_vs_subjective_with_lineage_one_shot_eval}--\ref{tab:plan_gen_result_objective_vs_subjective_without_lineage_one_shot_eval})}
With lineage: 7/14 models have higher \emph{A+} proportions on Objective than Subjective; 6/14 are lower; 1 unchanged. 
Without lineage: 9/14 higher on Objective, 4/14 lower, 1 unchanged. 
\emph{Interpretation:} modest, model-dependent edge for Objective in the no-lineage setting; not conclusive overall.

\paragraph{Metric-wise (Table~\ref{tab:plan_gen_result_object_and_subject_queries_metric_wise_eval})}
With lineage: 11/14 models score higher on Subjective than Objective; without lineage: 9/14 higher on Subjective. 
\emph{Interpretation:} metric decomposition favors Subjective for many models, despite one-shot results showing only a weak/no advantage for Objective.

\subsection{Simple vs.\ Compound}
\label{app:simple-compound}
\paragraph{One-shot (Tables~\ref{tab:plan_gen_result_simple_vs_compound_with_lineage_one_shot_eval}--\ref{tab:plan_gen_result_simple_vs_compound_without_lineage_one_shot_eval})}
With lineage: 12/14 higher \emph{A+} for Simple; without lineage: 13/14 higher for Simple. 
\emph{Interpretation:} strong and consistent advantage for Simple queries.

\paragraph{Metric-wise (Table~\ref{tab:plan_gen_result_simple_and_comp_queries_metric_wise_eval})}
With lineage: 11/14 higher overall for Simple; without lineage: 10/14 higher. 
\emph{Interpretation:} mirrors one-shot; complexity hurts.

\subsection{Plan Length: \texorpdfstring{\([1,4]\) vs. \([5,15]\)}{[1,4] vs. [5,15]} Steps in the Best Possible Plans}
\label{app:plan-length}
\paragraph{One-shot (Tables~\ref{tab:plan_gen_result_1to4steps_vs_5to15steps_with_lineage_one_shot_eval}--\ref{tab:plan_gen_result_1to4steps_vs_5to15steps_without_lineage_one_shot_eval})}
With lineage: 13/14 higher \emph{A+} for \([1,4]\); without lineage: 13/14 higher for \([1,4]\). 
\emph{Interpretation:} shorter gold plans are markedly easier to match.

\paragraph{Metric-wise (Table~\ref{tab:plan_gen_result_num_steps_metric_wise_eval})}
With lineage: 11/14 higher overall for \([1,4]\); without lineage: 12/14 higher. 
\emph{Interpretation:} consistent with one-shot; longer gold plans expose weaknesses in tool assignment and dependency wiring.

\subsection{Number of Hops: 0/1/2/3+}
\label{app:hops}
\paragraph{One-shot (Tables~\ref{tab:plan_gen_result_with_lineage_hops_one_shot_eval}--\ref{tab:plan_gen_result_without_lineage_hops_one_shot_eval})}
With lineage: 10/14 show higher \emph{A+} for One-Hop (vs.\ Two-Hop), while 3/14 favor Two-Hop. 
Without lineage: 6/14 favor One-Hop, 7/14 favor Two-Hop. 
\emph{Interpretation:} no stable, cross-model pattern.

\paragraph{Metric-wise (Table~\ref{tab:plan_gen_result_num_hops_metric_wise_eval})}
With lineage: 6/14 higher overall for One-Hop; 4/14 higher for Two-Hop. 
Without lineage: 9/14 higher for One-Hop; 3/14 higher for Two-Hop. 
\emph{Interpretation:} small, inconsistent edges - hops alone are not a robust predictor.

\paragraph{Summary.}
Across grouped factors, \textbf{query simplicity} and \textbf{shorter best-plan length} correlate most strongly with better quality. The effects of \textbf{lineage prompting} and \textbf{hops} are mixed and model dependent.

\section{Grouped Analyses: Effectiveness of the Evaluator\texorpdfstring{$\rightarrow$}{->}Optimizer Loop}
\label{app:loop-grouped}

\subsection{Objective vs.\ Subjective}
\label{app:loop-obj-subj}
\paragraph{Objective (Table~\ref{tab:feedback_loop_performance_objective_subjective})}
\emph{Extremely Good} increases from \textbf{4.15\%} (pre) to \textbf{9.54\%} (post); \emph{Very Good} from \textbf{2.07\%} to \textbf{10.37\%}.  
\emph{Observation:} substantial uplift on top brackets; loop strongly benefits precision-oriented, verifiable tasks.

\paragraph{Subjective (Table~\ref{tab:feedback_loop_performance_objective_subjective})}
\emph{Extremely Good} moves from \textbf{5.02\%} (pre) to \textbf{6.95\%} (post); \emph{Very Good} from \textbf{15.83\%} to \textbf{18.53\%}.  
\emph{Observation:} consistent, moderate improvement; gains smaller than Objective but still meaningful.

\subsection{Simple vs.\ Compound}
\label{app:loop-simple-compound}
\paragraph{Simple (Table~\ref{tab:feedback_loop_performance_simple_compound})}
\emph{Extremely Good} improves from \textbf{5.58\%} (pre) to \textbf{11.15\%} (post); \emph{Very Good} from \textbf{11.15\%} to \textbf{17.84\%}.  
\emph{Observation:} total uplift of \(\sim\)\textbf{12.26pp} across top two buckets; loop is highly effective when plans are short and atomic.

\paragraph{Compound (Table~\ref{tab:feedback_loop_performance_simple_compound})}
\emph{Extremely Good} rises from \textbf{3.46\%} (pre) to \textbf{4.76\%} (post); \emph{Very Good} from \textbf{6.49\%} to \textbf{10.82\%}.  
\emph{Observation:} clear improvement (\(\sim\)\textbf{5.63pp}); smaller than Simple, but still positive given higher compositional load.

\subsection{Hop Count (0/1/2/3+)}
\label{app:loop-hops}
\paragraph{Zero and One hop (Table~\ref{tab:feedback_loop_performance_zerohop_onehop})}
\textbf{0-hop}: \emph{Extremely Good} stays at \textbf{7.95\%}, while \emph{Very Good} increases from \textbf{0.0\%} to \textbf{9.09\%}.  
\textbf{1-hop}: \emph{Extremely Good} from \textbf{4.93\%} (pre) to \textbf{9.85\%} (post); \emph{Very Good} from \textbf{13.79\%} to \textbf{18.72\%}.  
\emph{Observation:} noticeable gains even for short-hop plans, particularly in the \emph{Very Good} bracket.

\paragraph{Two and Three-plus hops (Table~\ref{tab:feedback_loop_performance_twohops_threeplushops})}
\textbf{2-hop}: \emph{Extremely Good} from \textbf{3.57\%} to \textbf{7.86\%}; \emph{Very Good} from \textbf{9.29\%} to \textbf{12.86\%}.  
\textbf{3+ hop}: \emph{Extremely Good} from \textbf{0.00\%} to \textbf{4.35\%}; \emph{Very Good} from \textbf{7.25\%} to \textbf{14.49\%}.  
\emph{Observation:} largest \emph{relative} uplifts are in \textbf{3+ hop} and \textbf{2-hop} settings, indicating the loop is most beneficial when plans require longer, more error-prone compositions.

\paragraph{Summary.}
Across all strata, the \emph{Evaluator\texorpdfstring{$\rightarrow$}{->}Optimizer} loop consistently shifts mass from mid/low buckets into \emph{Very Good}/\emph{Extremely Good}, with the strongest relative effects for \emph{Objective}, \emph{Simple}, and \emph{3+ hop} cases.

\section{Module Validation on Validation Set (N=80): Full Results}
\label{app:module-validation}

\subsection{Metric-wise Evaluator Validation}
\label{app:metricwise-validation}

\paragraph{Setup recap.}
For each validation query, we take the final three plans in its lineage (highest quality, with the last being the human-verified best). Humans annotate \emph{per-metric} and \emph{overall} orderings among these three plans (relaxed inequality). The LLM evaluators score the same triple; we convert scores to an ordering and compute \emph{relaxed triplet ranking agreement} (match if human $a\!\le\!b\!\le\!c$ is predicted as $a\!\le\!b\!\le\!c$, allowing ties).

\textbf{Method.}
We evaluate two designs:
(1) \emph{Single}: a unified prompt defines all seven metrics and requests unweighted metric scores + rationale.
(2) \emph{Deconstructed}: seven specialized prompts, one per metric.
Each design is run in \emph{reference-free} (query + plan) and \emph{reference-based} (query + plan + gold plan) modes.
For each metric, we compute the percent of correct triplet inequalities across the 80 queries.

\textbf{Results.}
Table~\ref{tab:validation_result_metric_wise_eval} reports agreement by metric and setting. The \emph{deconstructed, reference-based} variant dominates, exceeding $90\%$ agreement for \textsc{Dependency}, \textsc{Format}, \textsc{Tool Usage Completeness} and surpassing $80\%$ for \textsc{Query Adherence}, \textsc{Redundancy}, \textsc{Tool-Prompt Alignment}. \textsc{Step Executability} achieves $79.43\%$. Reference-free settings trail as expected; \emph{Single} underperforms \emph{Deconstructed} across the board, indicating reduced interference when judging metrics in isolation.

\subsection{One-Shot Overall Evaluator Validation}
\label{app:oneshot-validation}

\textbf{Method.}
The judge LLM (Appendix~\ref{app:model-configs}) labels each plan among seven categories (\textit{Extremely Bad} to \textit{Extremely Good}). We compute per-class Precision/Recall/F1 against human labels and macro averages.

\textbf{Results.}
Table~\ref{tab:validation_result_one_shot_eval} shows macro Precision/Recall/F1 of $0.92/0.93/0.92$. All classes have F1~$\ge 0.85$, supporting reliable downstream use of the judge for plan quality decisions.

\subsection{Step-Wise Evaluator Validation}
\label{app:step-evaluator-validation}

\textbf{Method.}
On $N{=}400$ step instances (80 queries $\times$ $\approx$5 steps/plan), we compute multi-label Precision/Recall/F1 across tags: \textsc{No Change}, \textsc{Incorrect Tool}, \textsc{Incorrect Prompt}, \textsc{Complex Prompt}, \textsc{Repeated Detail}, \textsc{Multi-Tool Prompt} by comparing Judge LLM labels against human labels.

\textbf{Results.}
Table~\ref{tab:validation_result_step_wise_eval} reports macro F1~$=0.84$. \textsc{Incorrect Prompt} and \textsc{Repeated Detail} reach $0.91$ F1, indicating the evaluator reliably flags prompt-level errors and SQL redundancy. \textsc{No Change} is most challenging (F1~$=0.75$), reflecting conservative behavior when judging step sufficiency.

\subsection{Plan Optimizer Validation}
\label{app:plan-optimizer-validation}

\textbf{Method.}
For $160$ plan pairs (80 queries $\times$ 2 revisions per query), we compare the optimizer’s revision against the human gold using the tuned one-shot judge (Table~\ref{tab:prompts_for_oneshot_evaluator}). We report the distribution across seven quality tags.

\textbf{Results.}
As shown in Table~\ref{tab:validation_result_plan_optimizer}, $74.5\%$ of optimizer outputs fall into \textit{Good} or better (\textit{Extremely Good} $28.13\%$, \textit{Very Good} $24.38\%$, \textit{Good} $21.88\%$), and $10.63\%$ are \textit{Acceptable}; the remainder capture difficult edits where structural rewrites are needed. This supports using the loop to automatically lift initial plans to near-usable quality.

\paragraph{Interpretation.}
(i) Per-metric judgments are most stable when isolated and provided the gold plan (deconstructed, reference-based). (ii) The one-shot judge is precise enough for automatic triage. (iii) Step-wise tags are discriminative for prompt/tool issues and redundancy. (iv) The optimizer substantially improves plan quality relative to initial drafts, aligning with Sec.~\ref{sec:loop-effectiveness}.




\end{document}